\theoremstyle{plain}
\newtheorem{theorem}{Theorem}[section]
\newtheorem{lemma}[theorem]{Lemma}
\newtheorem{example}[theorem]{Example}
\newtheorem{corollary}[theorem]{Corollary}
\newtheorem{assumption}[theorem]{Assumption}
\theoremstyle{remark}
\newcommand{\la}{\langle}
\newcommand{\ra}{\rangle}
\newcommand{\mc}{\mathcal}
\newcommand{\mb}{\mathbf}
\newcommand{\norm}[1]{\left\lVert#1\right\rVert}
\DeclareMathOperator*{\argmin}{arg\,min}
\DeclareMathOperator*{\arginf}{arg\,inf}
\newcommand{\ie}{\textit{i}.\textit{e}., }
\newcites{AP}{Appendix References}
\definecolor{mycolor}{rgb}{1, 1, 1}
\definecolor{mygreen}{RGB}{2,123,2}
\newmdenv[innerlinewidth=0.4pt, roundcorner=5pt,linecolor=mycolor,innerleftmargin=0.5pt,
innerrightmargin=3pt,innertopmargin=0.2pt,innerbottommargin=0.2pt, middlelinewidth = 1pt]{mybox}
\tikzstyle{textbox} = [draw=black,  thick,
\tikzstyle{title} =[fill=mycolor, text=black, draw=black, rectangle, rounded corners]
\definecolor{citationcolor}{RGB}{80, 90, 180}
\title{Stochastic Optimal Control for Diffusion Bridges in Function Spaces}
\author{
    Byoungwoo~Park$^{1}$,
    Jungwon~Choi$^{1}$,
    Sungbin~Lim$^{2, 3}$\thanks{Equal advising.},
    Juho~Lee$^{1}$\footnotemark[1] \\
    KAIST$^1$, Korea University$^2$, LG AI Research$^3$ \\
    \texttt{\{bw.park,\,jungwon.choi,\,juholee\}@kaist.ac.kr}, \\ \texttt{sungbin@korea.ac.kr}
    }
\begin{document}

\maketitle

\begin{abstract}

Recent advancements in diffusion models and diffusion bridges primarily focus on finite-dimensional spaces, yet many real-world problems necessitate operations in infinite-dimensional function spaces for more natural and interpretable formulations. 
In this paper, we present a theory of stochastic optimal control (SOC) tailored to infinite-dimensional spaces, aiming to extend diffusion-based algorithms to function spaces. 
Specifically, we demonstrate how Doob’s $h$-transform, the fundamental tool for constructing diffusion bridges, can be derived from the SOC perspective and expanded to infinite dimensions. 
This expansion presents a challenge, as infinite-dimensional spaces typically lack closed-form densities. 
Leveraging our theory, we establish that solving the optimal control problem with a specific objective function choice is equivalent to learning diffusion-based generative models. 
We propose two applications: 1) learning bridges between two infinite-dimensional distributions and 2) generative models for sampling from an infinite-dimensional distribution. 
Our approach proves effective for diverse problems involving continuous function space representations, such as resolution-free images, time-series data, and probability density functions. Code is available at
\url{https://github.com/bw-park/DBFS}.

\end{abstract}

\section{Introduction}
Stochastic Optimal Control (SOC) is designed to steer a noisy system toward a desired state by minimizing a specific cost function. This methodology finds extensive applications across various fields in science and engineering, including rate event simulation~\citep{hartmann2019variational, holdijk2022path}, stochastic filtering and data assimilation~\citep{506400, van2007stochastic, reich2019data}, non-convex optimization~\citep{chaudhari2018deep}, modeling population dynamics~\citep{carmona2018probabilistic, liu2022deep}. SOC is also related to diffusion-based sampling methods that are predominant in machine learning literature. Specifically, if we choose the terminal cost of a control problem as the log density ratio between a target distribution and a simple prior distribution, solving the optimal control reduces to learning diffusion-based generative models~\citep{richter2021solving, zhang2022path, vargas2023bayesian} built upon the Schrödinger bridge problem~\citep{Pra1991ASC, chen2020stochastic}. 

While SOC associated diffusion-based generative models have been well-established for finite-dimensional spaces~\citep{berner2022optimal,chen2022likelihood,chen2024generative, liu2024generalized}, their theoretical foundations and practical algorithms for infinite-dimensional spaces remain underexplored. 
There is a growing interest in developing generative models in function spaces. 
Examples include learning neural operators for partial differential equations (PDEs)~\citep{li2020fourier, li2022transformer, kovachki2023neural}, interpreting images as discretized functions through implicit neural representations (INRs)~\citep{sitzmann2020implicit, dupont2021coin}, and operating in function spaces for Bayesian neural networks (BNNs)~\citep{sun2019functional, wang2019function}. Models that operate in function spaces are more parameter-efficient as they avoid resolution-specific parameterizations. 
In response to this demand, several extensions of diffusion-based generative models for infinite-dimensional function spaces have been proposed ~\citep{pidstrigach2023infinite,lim2023score,hagemann2023multilevel,lim2023scorebased, franzese2024continuous, baldassari2024conditional}. 
However, SOC theory for building diffusion bridges in function spaces is still demanding in the community of generative modeling. 

To address these challenges, this work introduces an extension of SOC for diffusion bridges in infinite-dimensional Hilbert spaces, particularly focusing on its applications in sampling problems. 
Specifically, we demonstrate the idea of Doob's $h$-transform~\citep{rogers2000diffusions, chetrite2015nonequilibrium} can be derived from SOC theory and extend its formulation into Hilbert spaces. 
Due to the absence of a density with respect to the Lebesgue measure in infinite-dimensional spaces, building a diffusion bridge between function spaces is a nontrivial task separated from the finite-dimensional cases~\citep{peluchetti2023diffusion, shi2024diffusion}.
To this end, we propose a Radon-Nikodym derivative relative to a specified Gaussian reference measure in Hilbert space. 
Leveraging the infinite-dimensional Doob's $h$-transform and SOC, we then formulate diffusion bridge-based sampling algorithms in function spaces. While the infinite-dimensional Doob's $h$-transform has already been derived in \citep{fuhrman2003class,goldys2008ornstein,baker2024conditioning}, our main goal is not merely to derive it. Instead, we aim to generalize various finite-dimensional sampling problems~\citep{peluchetti2023diffusion, shi2024diffusion, zhang2022path, vargas2023bayesian} into the infinite-dimensional space by exploiting the theory of infinite-dimensional SOC.

To demonstrate the applicability of our theory, we present learning algorithms for two representative problems. 
First, we introduce an infinite-dimensional bridge-matching algorithm as an extension of previous methods~\citep{peluchetti2023diffusion, shi2024diffusion} into Hilbert spaces, which learns a generative model to bridge two distributions defined in function spaces. 
As an example, we show that our framework can learn smooth transitions between two image distributions in a resolution-free manner. 
Second, we propose a simulation-based Bayesian inference algorithm~\citep{zhang2022path, vargas2023bayesian} that operates in function space. Instead of directly approximating the target Bayesian posterior, our algorithm learns a stochastic transition from the prior to the posterior within the function space. We demonstrate the utility of this approach by inferring Bayesian posteriors of stochastic processes, such as Gaussian processes.
We summarize our contributions as follows:
\begin{itemize}[leftmargin=20pt]
    \item Based on the SOC theory, we derive the Doob's $h$-transform in Hilbert spaces. We propose a $h$ function as a Randon-Nikodym derivative with respect to a Gaussian measure in infinite-dimensional space.
    \item Based on the infinite-dimensional extension of the Doob's $h$-transform, we present the diffusion bridge and simulation-based Bayesian inference algorithm in function spaces.
    \item We demonstrate our method for various real-world problems involving function spaces, including resolution-free image translation and posterior sampling for stochastic processes.
\end{itemize}

\paragraph{Notation.}
Consider a real and separable Hilbert space $\mc{H}$ with the norm and inner product denoted by $\norm{\cdot}_{\mc{H}}$ and $\la \cdot, \cdot \ra_{\mc{H}}$.
Throughout the paper, we consider a path measure denoted by $\mathbb{P}^{(\cdot)}$ on the space of all continuous mappings $\Omega=C([0, T], \mc{H})$. The stochastic processes associated with this path measure $\mathbb{P}^{(\cdot)}$ are denoted by $\mb{X}^{(\cdot)}$, and their time-marginal distribution at time $t \in [0, T]$ as push-forward measure $\mu^{(\cdot)}_t := (\mb{X}^{(\cdot)}_t)_{\#}\mathbb{P}^{(\cdot)}$. 
Furthermore, for a function $\mc{V} : [0, T] \times \mc{H} \to \mathbb{R}$, we define $D_{\mb{x}}\mc{V}, D_{\mb{xx}}\mc{V}$ as the first and second order Fréchet derivatives with respect to the variable  $\mb{x} \in \mc{H}$, respectively, and $\partial_t \mc{V}$ as the derivative with respect to the time variable $t \in [0, T]$.

\section{A Foundation on Stochastic Optimal Control for Diffusion Bridges}
In this section, we first present a brief introduction to the theory of stochastic optimal control (SOC) in infinite dimensions and the Verification Theorem (Lemma \ref{lemma:verification}), which is the key to understanding the theoretical connection between SOC and the diffusion bridges. 
Then we propose Doob's $h$-transform in Hilbert spaces based on the SOC theory (Theorems \ref{Theorem:Hopf-Cole} and \ref{Theorem:infinite dimensional density}).

\subsection{Preliminaries}

\paragraph{Gaussian Measure and Cameron-Martin Space.} 
Let $(\Omega, \mc{F}, \mathbb{Q})$ and $(\mc{H}, \mc{B}(\mc{H}))$ be two measurable spaces and consider $\mc{H}$-valued random variable $\mb{X}:\Omega \to \mc{H}$ such that the push-forward measure $\mu := \mb{X}_{\#} \mathbb{Q}$ induced by $\mb{X}$ is Gaussian, i.e., a real-valued random variable $\la u, \mb{X} \ra_{\mc{H}}$ follows Gaussian distribution on $\mathbb{R}$ for any $u \in \mc{H}$. 
Then, there exist a unique mean $m_{\mb{X}} \in \mc{H}$ given by $\la m_{\mb{X}}, u\ra_{\mc{H}} = \mb{E}_{\mu}\left[\la \mb{X}, u \ra_{\mc{H}}\right]$ and a nonnegative, symmetric, and self-adjoint covariance operator $Q : \mc{H} \to \mc{H}$ defined by $\la u, Qv \ra_{\mc{H}} = \mathbb{E}_{\mu}\left[ \la \mb{X} - m_{\mb{X}}, u\ra_{\mc{H}}, \la \mb{X} - m_{\mb{X}}, v\ra_{\mc{H}} \right]$ for any $u, v \in \mc{H}$. 
We write $\mu = \mc{N}(m_{\mb{X}}, Q)$ and say $\mb{X}$ is centered if $m_{\mb{X}} = 0$.
For a covariance operator $Q$, 
we assume $\mc{H}$-valued centered $\mb{X}$ follows the distribution $\mc{N}(0, Q)$ supported on $\mc{H}$ so that $Q$ is guaranteed to be compact. 
Hence, there exists an eigen-system $\{(\lambda^{(k)}, \phi^{(k)}) \in \mathbb{R} \times \mc{H} : k \in \mathbb{N}\}$ such that $Q(\phi^{(k)}) = \lambda^{(k)} \phi^{(k)}$ holds and $\text{Tr}(Q) = \sum_{k=1}^{\infty} \lambda^{(k)} < \infty$. 
We define the \textit{Cameron-Martin space} by $\mc{H}_0 := Q^{1/2}(\mc{H})$, $\mc{H}_0 \subseteq \mc{H}$ is a separable Hilbert space endowed with the inner product $\la u, v \ra_{\mc{H}_0} := \la Q^{-1/2}u, Q^{-1/2}v\ra_{\mc{H}}$.

\paragraph{Stochastic Differential Equations in Hilbert Spaces.}
The standard $\mathbb{R}^d$-valued Wiener process has independent increments $w_{t + \Delta_t} - w_t \sim \mc{N}(0, \Delta_t\mb{I}_d)$. 
In the case of infinite-dimensional Hilbert space $\mc{H}_0$, however, such identity covariance may not be a trace class. 
We consider a larger space $\mc{H}_0 \subset \mc{H}_1$ such that $\mc{H}_0$ is embedded into $\mc{H}_1$ with a Hilbert-Schmidt embedding $J : \mc{H}_0 \to \mc{H}_1$.
Let $Q := J J^{*}$ and we define $\mc{H}_1$-valued $Q$-Wiener process~\citep[Proposition~4.7]{da2014stochastic} as $\mb{W}^Q_t = \sum_{k=1}^{\infty}Q^{1/2} \phi^{(k)} w^{(k)}_t$. 
We focus on the Cameron-Martin space $\mc{H}_0$ where the Wiener process has increments $\mc{N}(0, \Delta_t\mb{I}_{\mc{H}})$, and a larger space $\mc{H}_1 = \mc{H}$, where the Wiener process has increments $\mc{N}(0, \Delta_tQ)$. 
Then we define a path measure $\mathbb{P}$ and associated stochastic differential equation (SDE) in $\mc{H}$ as follows
\begin{equation}\label{eq:uncontrolled SDE}
    d\mb{X}_t = \mc{A} \mb{X}_t + \sigma d\mb{W}^Q_t, \quad \mb{X}_0 \in \mc{H}, \quad t \in [0, T],
\end{equation}
where $\mc{A} : \mc{H} \to \mc{H}$ is a linear operator, a constant $\sigma >0$ and a $Q$-Wiener process $\mb{W}^{Q}$ defined on a probability space $(\Omega, \mc{F}, (\mc{F}_t)_{t \geq 0}, \mathbb{P})$. 
For a more comprehensive understanding, see~\citep{gawarecki2010stochastic, da2014stochastic}.

\subsection{Stochastic Optimal Control in Hilbert Spaces}
From the \textit{uncontrolled} SDE introduced in equation $\eqref{eq:uncontrolled SDE}$, various sampling problems in $\mathbb{R}^d$ including density sampling~\citep{zhang2022path, berner2022optimal, vargas2023bayesian} and generative modeling~\citep{chen2022likelihood, chen2024generative} can be solved by adjusting the SDE with proper drift (control) function $\alpha \in \mathbb{R}^d$. 
Motivated by these approaches, introducing stochastic optimal control (SOC) to solve real-world sampling problems, we aim to introduce SOC to the infinite-dimensional Hilbert space $\mc{H}$.
We consider that a controlled path measures $\mathbb{P}^{\alpha}$ is induced by following infinite-dimensional SDE defined as follows:
\begin{equation}\label{eq:controlled SDE}
    d\mb{X}^{\alpha}_t = \left[ \mc{A}\mb{X}^{\alpha}_t  + \sigma Q^{1/2}\alpha_t \right] dt + \sigma d\mb{\tilde{W}}^Q_t, \quad \mb{X}^{\alpha}_0 = \mb{x}_0, \quad t \in [0, T]
\end{equation}
where $\alpha_{(\cdot)} : [0, T] \times \mc{H} \to \mc{H}$  is infinite-dimensional Markov control (see Section~\ref{subsection:Markov Control} for more details). 
We refer to the SDE in equation~\eqref{eq:controlled SDE} as \textit{controlled} SDE. 
The controlled SDE can be exploited to the various problems. 
In general, it can be done by finding the optimal control function that minimizes the objective functional with suitably chosen cost functionals depending on the problem:
\begin{equation}\label{eq:objective functional}
\mc{J}(t, \mb{x}_{t}, \alpha) = 
\mathbb{E}_{\mathbb{P}^{\alpha}}\left[\int_t^T \left[R(\alpha_s)\right] ds +  G(\mb{X}_T^{\alpha}) \big| \mb{X}^{\alpha}_t = \mb{x}_t  \right],
\end{equation}
where $R : \mc{H} \to \mathbb{R}$ are running cost and $G: \mc{H} \to \mathbb{R}$ is terminal cost. 
The measurable function $\mc{J}(t, \mb{x}, \alpha)$, representing the total cost incurred by the control $\alpha$ over the interval $[t, T]$, given that the control strategy from the interval $[0, t]$ has resulted in $\mb{X}^{\alpha}_t = \mb{x}$. The objective is to minimize the objective functional in \eqref{eq:objective functional} over all admissible control policies $\alpha \in \mc{U}$, where $\mc{U}$ is the Hilbert space of all square-integrable $\mc{H}$-valued processes adapted to $\mb{W}^Q$ defined on $[0, T]$. 
Then we define the \textit{value function} $\mc{V}(t, \mb{x}) = \inf_{\alpha \in \mc{U}} \mc{J}(t, \mb{x}, \alpha)$, the optimal costs conditioned on $(t, \mb{x}) \in [0, T] \times \mc{H}$. 
By using the dynamic programming~\citep{fabbri2017stochastic}, we can solve the Hamilton-Jacobi-Bellman (HJB) equation,
\begin{equation}\label{eq:PDE HJB}
    \partial_t \mc{V}_t + \mc{L} \mc{V}_t + \inf_{\alpha \in \mc{U}} \left[\la \alpha, \sigma Q^{1/2} D_{\mb{x}} \mc{V}_t \ra + R \right] = 0, \quad \mc{V}(T, \mb{x}) = G(\mb{x}),
\end{equation}
where $\mc{L} \mc{V}_t := \la \mb{X}^{\alpha}_t, \mc{A} D_{\mb{x}} \mc{V}_t \ra_{\mc{H}} 
+ \frac{1}{2}\text{Tr}\left[\sigma^2 Q D_{\mb{xx}}\mc{V}_t \right]$ .
We demonstrate that with specific choices of cost functionals $R$ and $G$ in~(\ref{eq:objective functional}), the optimal control $\alpha^{\star}$ of the minimization problem aligns with the proper drift function for sampling problems we will discuss later. To do so, we start with how the HJB equation can characterize the optimal controls.
\begin{lemma}[Verification Theorem]\label{lemma:verification} Let $\mc{V}$ be a solution of HJB equation~\eqref{eq:PDE HJB} with $R(\alpha) := \frac{1}{2}\norm{\alpha}^2_{\mc{H}}$ satisfying the assumptions in~\ref{assumptions value function}. Then, we have $\mc{V}(t, \mb{x}) \leq \mathcal{J}(t, x, \alpha)$ for every $\alpha \in \mathcal{U}$ and $(t, \mb{x}) \in [0, T] \times \mc{H}$. Let $(\alpha^{*}, \mb{X}^{\alpha^{*}})$ be an admissible pair such that
\begin{equation}
\alpha_s^{*} = \arginf_{\alpha \in \mc{U}} \left[\la   \alpha_s, \sigma Q^{1/2} D_{\mb{x}} \mc{V}_t \ra + \frac{1}{2}\norm{\alpha_s}_{\mc{H}}^2 \right] = -\sigma Q^{1/2}D_{\mb{x}}\mc{V}(s, \mb{X}^{\alpha^{*}}_s)
\end{equation}
for almost every $s \in [t,  T]$ and $\mathbb{P}$-almost surely. Then $(\alpha^{*}, \mb{X}^{\alpha^{*}})$ satisfying $\mc{V}(t, \mb{x}) = \mathcal{J}(t, \mb{x}, \alpha^{*})$.
\end{lemma}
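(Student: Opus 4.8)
The plan is to run the classical verification argument, transported to the Hilbert-space setting, with the Itô formula for $Q$-Wiener-driven processes as the engine. Fix an arbitrary admissible control $\alpha \in \mc{U}$ and let $\mb{X}^{\alpha}$ solve the controlled SDE \eqref{eq:controlled SDE} with $\mb{X}^{\alpha}_t = \mb{x}$. First I would apply the infinite-dimensional Itô formula to $s \mapsto \mc{V}(s, \mb{X}^{\alpha}_s)$ on $[t,T]$, which under the regularity hypotheses of \ref{assumptions value function} gives
\[
d\mc{V}(s,\mb{X}^{\alpha}_s) = \left[\partial_s\mc{V} + \mc{L}\mc{V} + \la \sigma Q^{1/2}\alpha_s, D_{\mb{x}}\mc{V}\ra\right]ds + \la \sigma D_{\mb{x}}\mc{V}, d\mb{\tilde{W}}^Q_s\ra,
\]
where $\mc{L}$ is the operator appearing in \eqref{eq:PDE HJB} (so the trace term $\tfrac12\mathrm{Tr}[\sigma^2 Q D_{\mb{xx}}\mc{V}]$ is already inside $\mc{L}\mc{V}$). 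Integrating over $[t,T]$, taking the conditional expectation under $\mathbb{P}^{\alpha}$, using that the stochastic integral is a true martingale (hence mean zero) and the terminal condition $\mc{V}(T,\cdot)=G$, I obtain the representation
\[
\mc{V}(t,\mb{x}) = \mathbb{E}_{\mathbb{P}^{\alpha}}\!\left[\,G(\mb{X}^{\alpha}_T) - \int_t^T\!\left(\partial_s\mc{V} + \mc{L}\mc{V} + \la \sigma Q^{1/2}\alpha_s, D_{\mb{x}}\mc{V}\ra\right)ds \,\Big|\, \mb{X}^{\alpha}_t=\mb{x}\,\right].
\]

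Next I would use the HJB equation to control the integrand. Since $\mc{V}$ solves \eqref{eq:PDE HJB}, $\partial_s\mc{V}+\mc{L}\mc{V} = -\inf_{\beta\in\mc{U}}[\la \beta,\sigma Q^{1/2}D_{\mb{x}}\mc{V}\ra + \tfrac12\norm{\beta}_{\mc{H}}^2]$. Evaluating the infimum at the particular point $\beta=\alpha_s$ and using self-adjointness of $Q^{1/2}$ to write $\la \alpha_s,\sigma Q^{1/2}D_{\mb{x}}\mc{V}\ra = \la \sigma Q^{1/2}\alpha_s, D_{\mb{x}}\mc{V}\ra$, the two inner-product terms cancel and I get the pointwise lower bound $\partial_s\mc{V}+\mc{L}\mc{V}+\la \sigma Q^{1/2}\alpha_s, D_{\mb{x}}\mc{V}\ra \ge -\tfrac12\norm{\alpha_s}_{\mc{H}}^2 = -R(\alpha_s)$. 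Substituting into the representation and recalling the definition \eqref{eq:objective functional} of $\mc{J}$ yields $\mc{V}(t,\mb{x}) \le \mathbb{E}_{\mathbb{P}^{\alpha}}[\int_t^T R(\alpha_s)\,ds + G(\mb{X}^{\alpha}_T)\mid \mb{X}^{\alpha}_t=\mb{x}] = \mc{J}(t,\mb{x},\alpha)$, which is the first claim for every admissible $\alpha$.

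For the optimal pair, I would note that $\beta \mapsto \la \sigma Q^{1/2}\beta, D_{\mb{x}}\mc{V}\ra + \tfrac12\norm{\beta}_{\mc{H}}^2$ is strictly convex and coercive, so its unique minimizer is characterized by the first-order Fréchet condition $\sigma Q^{1/2}D_{\mb{x}}\mc{V} + \beta = 0$, i.e. $\alpha^{*}_s = -\sigma Q^{1/2}D_{\mb{x}}\mc{V}(s,\mb{X}^{\alpha^{*}}_s)$, exactly the stated feedback. For this choice the infimum in the HJB equation is attained, so every inequality in the previous paragraph becomes an equality along the optimal trajectory ($\partial_s\mc{V}+\mc{L}\mc{V}+\la \sigma Q^{1/2}\alpha^{*}_s,D_{\mb{x}}\mc{V}\ra = -R(\alpha^{*}_s)$), and the representation collapses to $\mc{V}(t,\mb{x}) = \mc{J}(t,\mb{x},\alpha^{*})$.

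The main obstacle is analytic rather than algebraic. It lies in justifying the Hilbert-space Itô formula and, most importantly, upgrading the stochastic-integral term from a local martingale to a true (mean-zero) martingale; this is precisely what the hypotheses in \ref{assumptions value function} must supply. Concretely, one needs $\mc{V}(t,\cdot)$ smooth enough for $D_{\mb{x}}\mc{V}$ and $D_{\mb{xx}}\mc{V}$ to exist and for the trace term to be finite (leveraging $\mathrm{Tr}(Q)<\infty$), together with a square-integrability/growth control $\mathbb{E}\int_t^T\norm{\sigma Q^{1/2}D_{\mb{x}}\mc{V}}_{\mc{H}}^2\,ds<\infty$ along the trajectory so the Itô isometry applies. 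A secondary point is well-posedness of the closed-loop SDE \eqref{eq:controlled SDE} under the feedback $\alpha^{*}$, i.e. existence of the admissible pair $(\alpha^{*},\mb{X}^{\alpha^{*}})$, which I would take directly from the admissibility assumption. Once these regularity and integrability facts are in place, the algebraic skeleton above closes the argument.
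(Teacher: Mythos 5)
Your proposal is correct and follows essentially the same route as the paper's proof: apply the infinite-dimensional It\^o formula to $\mc{V}$ along the controlled trajectory, kill the stochastic integral by taking expectations, use the terminal condition $\mc{V}(T,\cdot)=G$, and then exploit the infimum structure of the Hamiltonian (the paper writes this via the function $F(\mb{x})=\inf_{\alpha}[\la \mb{x},\alpha\ra+\tfrac12\norm{\alpha}^2]=-\tfrac12\norm{\mb{x}}^2$, attained at $\alpha=-\mb{x}$) to obtain $\mc{V}\le\mc{J}$ for every admissible $\alpha$, with equality precisely for the feedback control $\alpha^{*}=-\sigma Q^{1/2}D_{\mb{x}}\mc{V}$. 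The only cosmetic difference is bookkeeping — the paper adds $\mathbb{E}\int_t^T\tfrac12\norm{\alpha_s}^2\,ds$ to both sides of the It\^o identity and tracks the nonnegative gap $[\la \sigma Q^{1/2}D_{\mb{x}}\mc{V},\alpha_s\ra+\tfrac12\norm{\alpha_s}^2]-F(\sigma Q^{1/2}D_{\mb{x}}\mc{V})$, whereas you substitute the HJB equation into the integrand to get a pointwise lower bound — and your closing discussion of the martingale/integrability issues is exactly what Assumption~\ref{assumptions value function} is there to guarantee.
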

Lemma~\ref{lemma:verification} demonstrate that with specific choices of running costs, the solution to the HJB equation in~\eqref{eq:PDE HJB} is the optimal cost of the minimization problem in~\eqref{eq:objective functional} with the closed-form optimal control $\alpha_t^{*} = -\sigma Q^{1/2}D \mc{V}_t$. 
In the subsequent subsection, we reveal the connection between the optimal controlled process, characterized by the controlled SDE with optimal control $\alpha^{\star}$, and the conditioned SDE in $\mc{H}$. 
Additionally, we show various problems depending on the choice of the terminal cost $G$.

\subsection{Doob's  \texorpdfstring{$h$}{h}-transform for Diffusion Bridges in Hilbert Spaces}

The \textit{conditioned} SDE is a stochastic process that is guaranteed to satisfy a set constraint defined over the interval $[0, T]$. 
For instance, a \textit{Diffusion Bridge} is a stochastic process that satisfies both terminal constraints $\mathbf{X}_0 = \mathbf{x}_0$, $\mathbf{X}_T = \mathbf{x}_T$ for any $\mathbf{x}_0, \mathbf{x}_T \in \mathcal{H}$. We will show that with a proper choice of the terminal cost $G$, the conditioned SDE is a special case of controlled SDE. 
For this, let us define the function $h : [0, T] \times \mc{H} \to \mathbb{R}$:
\begin{equation}\label{eq:h function}
    h(t, \mb{x}) = \int_{\mc{H}} \tilde{G}(\mb{z}) \mc{N}_{e^{(T-t)\mc{A}}\mb{x}, Q_{T-t}}(d\mb{z}) = \mathbb{E}_{\mathbb{P}}\left[ \tilde{G}(\mb{X}_T)|\mb{X}_t=\mb{x}\right],
\end{equation}
where $\mc{N}_{e^{t\mc{A}}\mb{x}, Q_{t}}$ is a Gaussian measure with mean function $e^{t\mc{A}}\mb{x}$ and covariance operator $Q_{t} = \int_0^{t} e^{(t-s)\mc{A}} Q e^{(t-s)\mc{A}}ds$ and $\tilde{G} := e^{-G}$ for the function $G$ in~\eqref{eq:objective functional}. 
The function $h$ evaluates the future states $\mb{X}_T$ using $\tilde{G}$ which propagated by~\eqref{eq:uncontrolled SDE} for a given initial $\mb{x}$ at time $t \in [0, T]$. 
It can be shown that the function $h$ satisfies the Kolmogorov-backward equation~\citep{da2014stochastic} with terminal condition,
\begin{equation}\label{eq:PDE KBE}
    \partial_t h_t + \mc{L}h_t=0, \quad h(T, \mb{x}) = \tilde{G}(\mb{x}).
\end{equation}
Now, we employ the Hopf-Cole transformation~\citep{fleming2006controlled} to establish an inherent connection between two classes of PDEs,  the linear PDE in~\eqref{eq:PDE KBE} and the HJB equation in~\eqref{eq:PDE HJB}, which provide us a key insight to deriving the Doob's $h$-transform in function spaces utilizing the SOC theory.
\begin{theorem}[Hopf-Cole Transform]\label{Theorem:Hopf-Cole}  Let $\mc{V}_t = -\log h_t$. Then $\mc{V}_t$ satisfies the HJB equation:
\begin{equation}\label{eq:PDE HJB optimal}
    \partial_t \mc{V}_t + \mc{L} \mc{V}_t -\frac{1}{2}\norm{\sigma Q^{1/2}D_{\mb{x}} \mc{V}_t }^2_{\mc{H}} = 0, \quad \mc{V}(T, \mb{x}) = G(\mb{x}).
\end{equation}
\end{theorem}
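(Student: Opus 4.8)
The plan is to apply the Hopf-Cole (logarithmic) change of variables directly to the Kolmogorov backward equation \eqref{eq:PDE KBE} that $h$ is known to satisfy, and verify that the resulting PDE for $\mc{V}_t = -\log h_t$ is exactly the HJB equation \eqref{eq:PDE HJB optimal}. This is a computational verification rather than a deep existence argument: since we already know $h$ solves $\partial_t h_t + \mc{L}h_t = 0$ with $h(T,\mb{x}) = \tilde{G}(\mb{x})$, I only need to translate each term of that equation through the substitution $h_t = e^{-\mc{V}_t}$.

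First I would compute the derivatives of $h_t = e^{-\mc{V}_t}$ in terms of $\mc{V}_t$. For the time derivative, $\partial_t h_t = -e^{-\mc{V}_t}\partial_t \mc{V}_t$. For the Fréchet derivatives, the first-order gives $D_{\mb{x}} h_t = -e^{-\mc{V}_t} D_{\mb{x}}\mc{V}_t$, and applying the chain rule once more yields $D_{\mb{xx}} h_t = e^{-\mc{V}_t}\left( D_{\mb{x}}\mc{V}_t \otimes D_{\mb{x}}\mc{V}_t - D_{\mb{xx}}\mc{V}_t \right)$, where the outer-product term arises from differentiating the exponential factor. Next I would substitute these into the operator $\mc{L} h_t = \la \mb{X}^{\alpha}_t, \mc{A} D_{\mb{x}} h_t \ra_{\mc{H}} + \tfrac{1}{2}\mathrm{Tr}\left[\sigma^2 Q D_{\mb{xx}} h_t\right]$. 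The first-order (transport) term contributes $-e^{-\mc{V}_t}\la \mb{X}^{\alpha}_t, \mc{A} D_{\mb{x}}\mc{V}_t\ra_{\mc{H}}$, reproducing the transport part of $\mc{L}\mc{V}_t$. The trace term produces $\tfrac{1}{2}e^{-\mc{V}_t}\mathrm{Tr}\left[\sigma^2 Q (D_{\mb{x}}\mc{V}_t \otimes D_{\mb{x}}\mc{V}_t)\right] - \tfrac{1}{2}e^{-\mc{V}_t}\mathrm{Tr}\left[\sigma^2 Q D_{\mb{xx}}\mc{V}_t\right]$.

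The key identity I would then invoke is that the trace of the rank-one operator $\sigma^2 Q (D_{\mb{x}}\mc{V}_t \otimes D_{\mb{x}}\mc{V}_t)$ equals the Cameron-Martin-type squared norm $\norm{\sigma Q^{1/2} D_{\mb{x}}\mc{V}_t}^2_{\mc{H}}$. Concretely, $\mathrm{Tr}\left[\sigma^2 Q\, (u \otimes u)\right] = \sigma^2 \la u, Q u\ra_{\mc{H}} = \norm{\sigma Q^{1/2}u}^2_{\mc{H}}$ for $u = D_{\mb{x}}\mc{V}_t$, using that $Q$ is symmetric and self-adjoint. Assembling everything, $\partial_t h_t + \mc{L}h_t = 0$ becomes, after dividing by the strictly positive factor $e^{-\mc{V}_t}$ and multiplying by $-1$,
\begin{equation}
\partial_t \mc{V}_t + \la \mb{X}^{\alpha}_t, \mc{A} D_{\mb{x}}\mc{V}_t\ra_{\mc{H}} + \tfrac{1}{2}\mathrm{Tr}\left[\sigma^2 Q D_{\mb{xx}}\mc{V}_t\right] - \tfrac{1}{2}\norm{\sigma Q^{1/2}D_{\mb{x}}\mc{V}_t}^2_{\mc{H}} = 0,
\end{equation}
which is precisely \eqref{eq:PDE HJB optimal} once the first two middle terms are recognized as $\mc{L}\mc{V}_t$. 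The terminal condition transfers directly: $h(T,\mb{x}) = \tilde{G}(\mb{x}) = e^{-G(\mb{x})}$ gives $\mc{V}(T,\mb{x}) = -\log h(T,\mb{x}) = G(\mb{x})$.

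**The main obstacle** I anticipate is justifying the chain-rule computation of the second Fréchet derivative $D_{\mb{xx}}(e^{-\mc{V}_t})$ rigorously in the infinite-dimensional setting, together with the interchange that lets me write the trace of the composed operator as a sum of two traces. In finite dimensions this is elementary, but in Hilbert space one must ensure that $D_{\mb{x}}\mc{V}_t \otimes D_{\mb{x}}\mc{V}_t$ composed with the trace-class $Q$ remains trace class and that the trace is linear across the difference of operators; this is where the regularity assumptions on $\mc{V}$ (from \ref{assumptions value function}) and the trace-class property $\mathrm{Tr}(Q) < \infty$ established in the preliminaries do the real work. I would therefore state the needed smoothness and integrability of $h$ (equivalently $\mc{V}$) explicitly, treating the algebraic manipulation as formal and deferring the analytic justification to those standing assumptions.
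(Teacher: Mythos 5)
Your proposal is correct and follows essentially the same route as the paper's own proof: the substitution $h_t = e^{-\mc{V}_t}$, the chain-rule computation of $\partial_t h$, $D_{\mb{x}}h$, $D_{\mb{xx}}h$, substitution into the Kolmogorov backward equation, and the trace identity $\mathrm{Tr}\left[\sigma^2 Q\,(D_{\mb{x}}\mc{V}\otimes D_{\mb{x}}\mc{V})\right] = \norm{\sigma Q^{1/2}D_{\mb{x}}\mc{V}}^2_{\mc{H}}$. The only cosmetic difference is that the paper verifies this last identity by an explicit expansion in the eigenbasis of $Q$, whereas you assert it directly from self-adjointness of $Q$ and the rank-one structure; these are equivalent.
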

According to Theorem~\ref{Theorem:Hopf-Cole}, the solution of linear PDE in equation~(\ref{eq:PDE KBE}) is negative exponential to the solution of the HJB equation in~(\ref{eq:PDE HJB optimal}). Given that it already verified that the optimal control $\alpha^{\star}$ results the value function $\mc{V}_t$, which has explicit form as described in~(\ref{eq:PDE HJB optimal}), we find that the relationship of two PDEs through $\mc{V}_t = -\log h_t$ leads to a distinct form of optimal control $\alpha^{\star} = -\sigma Q^{1/2} D_{\mb{x}} \mathcal{V} = \sigma Q^{1/2} D_{\mb{x}} \log h$, where $D_{\mb{x}} \log h := D_{\mb{x}}h/h$. 
Consequently, it yields another class of SDE as follows:
\begin{equation}\label{eq:conditioned SDE}
    d\mb{X}^h_t = \left[\mc{A}\mb{X}^h_tdt + \sigma^2 Q D_{\mb{x}} \log h(t, \mb{X}_t^h) \right]dt + \sigma d\mb{\hat{W}}_t^{Q}, \quad \mb{X}^h_0 = \mb{x}_0,
\end{equation}
where $\mb{\hat{W}}^Q_t$ is a $Q$-Wiener process on $\mathbb{P}^h$. This representation is consistent with infinite-dimensional conditional SDE~\citep{fuhrman2003class, baker2024conditioning} which induces an expansion of Doob's $h$-transform~\citep{rogers2000diffusions} in Hilbert space $\mc{H}$. 

The Doob's $h$-transform in finite-dimensional spaces is well-established to construct the diffusion bridge process under the assumption that $h(t, \mb{x}) = \mathbb{P}(\mb{X}_T \in d\mb{x}_T |\mb{X}_t=\mb{x})$ has an explicit Radon-Nikodym density function~\citep{liu2023learning, särkkä2019applied}, enable to simulate the bridge SDEs. 
In contrast, although the choice of $\tilde{G}(\mb{x}) = \mb{1}_{d\mb{x}_T}(\mb{x}_T)$ in~\eqref{eq:h function} yields same representation of $h(t, \mb{x}) = \mathbb{P}(\mb{X}_T \in d\mb{x}_T |\mb{X}_t=\mb{x})$, we cannot easily define the Radon-Nikodym density in $\mc{H}$ due to the absence of an equivalent form of Lesbesgue measure which hinder the computation of $D_{\mb{x}} \log h(t, \mb{X}^h_t)$ in~\eqref{eq:conditioned SDE} explicitly. 
Hence, to define the diffusion bridge process in $\mc{H}$, we need to identify an explicit density form of $h(t, \mb{x}) = \mathbb{P}(\mb{X}_T \in d\mb{x}_T |\mb{X}_t=\mb{x})$. 
The following theorem reveals the explicit form of $h$ function and becomes a key ingredient in deriving infinite dimensional diffusion bridge processes.
\begin{theorem}[Explicit Representation of $h$]\label{Theorem:infinite dimensional density}For any $t >0$ and any $\mb{x} \in \mc{H}$, the measure $\mc{N}_{e^{t\mc{A}}\mb{x}, Q_t}$ and $\mc{N}_{0, Q_{\infty}}$ are equivalent, where $\mc{N}_{0, Q_{\infty}}$ is an invariant measure of $\mathbb{P}$ in~\eqref{eq:uncontrolled SDE} as $t \to \infty$ where $Q_{\infty} = -\frac{1}{2}Q \mc{A}^{-1}$. Moreover, for any $\mb{x}, \mb{y} \in \mc{H}$, the Radon-Nikodym density $\frac{d\mc{N}_{e^{t\mc{A}}\mb{x}, Q_t}}{d\mc{N}_{0, Q_{\infty}}}(\cdot) = q_t(\mb{x}, \cdot)$ is given by
\begin{align}
    q_t(\mb{x}, \mb{y}) &=  \text{det}(1-\Theta_t)^{-1/2} \exp \bigg[ - \frac{1}{2} \la (1-\Theta_t)^{-1} Q^{-1/2}_{\infty}e^{t\mc{A}}\mb{x}, Q^{-1/2}_{\infty}e^{t\mc{A}}\mb{x}  \ra_{\mc{H}} \\
    & + \la (1-\Theta_t)^{-1} e^{t\mc{A}}Q^{-1/2}_{\infty}\mb{x}, Q^{-1/2}_{\infty}\mb{y}  \ra_{\mc{H}} - \frac{1}{2} \la \Theta_t (1-\Theta_t)^{-1} Q^{-1/2}_{\infty}\mb{y}, Q^{-1/2}_{\infty}\mb{y}  \ra_{\mc{H}} \bigg],
\end{align}
where $\Theta_t = Q_{\infty}^{1/2} (Q^{-1/2}_t e^{t\mc{A}})^{*}(Q_{\infty}^{-1/2} Q^{1/2}_t)^{*} (Q_{\infty}^{1/2} (Q^{-1/2}_t e^{t\mc{A}})^{*}(Q_{\infty}^{-1/2} Q^{1/2}_t)^{*})^{*}$, $t \geq 0$.
\end{theorem}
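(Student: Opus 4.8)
The plan is to recognize the statement as an instance of the Feldman--Hájek dichotomy for Gaussian measures, preceded by extracting the algebraic structure that links $Q_t$ and $Q_\infty$, and then to compute the density by factoring it into a pure mean shift (Cameron--Martin) and a pure covariance change. First I would establish the identity
\[
    Q_t = Q_\infty - e^{t\mc{A}}Q_\infty e^{t\mc{A}^*},
\]
which follows from writing the invariant covariance as $Q_\infty = \int_0^\infty e^{s\mc{A}}Qe^{s\mc{A}^*}\,ds$ (equivalently, the Lyapunov identity $\mc{A}Q_\infty + Q_\infty\mc{A}^* = -Q$, consistent with $Q_\infty = -\tfrac12 Q\mc{A}^{-1}$) and splitting the integral defining $Q_t$ at time $t$. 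I would then introduce $R_t := Q_\infty^{-1/2}e^{t\mc{A}}Q_\infty^{1/2}$ and check that the operator in the statement simplifies to $\Theta_t = R_t^{*}R_t$: indeed $(Q_t^{-1/2}e^{t\mc{A}})^{*}(Q_\infty^{-1/2}Q_t^{1/2})^{*}=e^{t\mc{A}^*}Q_t^{-1/2}Q_t^{1/2}Q_\infty^{-1/2}$, so the inner factor is $Q_\infty^{1/2}e^{t\mc{A}^*}Q_\infty^{-1/2}=R_t^{*}$ and $\Theta_t=R_t^{*}R_t$. The whitened target covariance is then $Q_\infty^{-1/2}Q_tQ_\infty^{-1/2}=1-R_tR_t^{*}$, and since $\det(1-R_tR_t^{*})=\det(1-R_t^{*}R_t)=\det(1-\Theta_t)$ the two orderings agree in the determinant. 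For $t>0$ the semigroup $e^{t\mc{A}}$ is strongly smoothing, so $\norm{R_t}_{\mc{H}}<1$ and $R_t$ is Hilbert--Schmidt (in the diagonal picture $\Theta_t$ has eigenvalues $e^{-2a_k t}$, hence is trace class); consequently $1-\Theta_t$ is boundedly invertible and $\det(1-\Theta_t)$ is a well-defined Fredholm determinant.

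For the equivalence claim I would invoke the Feldman--Hájek theorem and verify its three hypotheses for $\mc{N}_{e^{t\mc{A}}\mb{x},Q_t}$ and $\mc{N}_{0,Q_\infty}$. (i) The Cameron--Martin spaces coincide, $Q_t^{1/2}(\mc{H})=Q_\infty^{1/2}(\mc{H})$, because $Q_t=Q_\infty^{1/2}(1-R_tR_t^{*})Q_\infty^{1/2}$ with $1-R_tR_t^{*}$ boundedly invertible. (ii) The mean lies in the Cameron--Martin space, $e^{t\mc{A}}\mb{x}\in Q_\infty^{1/2}(\mc{H})$ for every $\mb{x}\in\mc{H}$; this is precisely the range inclusion $\operatorname{Im}(e^{t\mc{A}})\subseteq\operatorname{Im}(Q_\infty^{1/2})$, a smoothing property of the semigroup for $t>0$. (iii) The intertwining operator $Q_\infty^{-1/2}Q_tQ_\infty^{-1/2}-1=-R_tR_t^{*}$ is Hilbert--Schmidt, as recorded above. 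Feldman--Hájek then gives equivalence.

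For the explicit density I would factor
\[
    \frac{d\mc{N}_{e^{t\mc{A}}\mb{x},Q_t}}{d\mc{N}_{0,Q_\infty}}
    = \frac{d\mc{N}_{e^{t\mc{A}}\mb{x},Q_t}}{d\mc{N}_{0,Q_t}}\cdot\frac{d\mc{N}_{0,Q_t}}{d\mc{N}_{0,Q_\infty}}.
\]
The first factor is the Cameron--Martin mean shift $\exp\!\big(\la Q_t^{-1}e^{t\mc{A}}\mb{x},\mb{y}\ra_{\mc{H}}-\tfrac12\la Q_t^{-1}e^{t\mc{A}}\mb{x},e^{t\mc{A}}\mb{x}\ra_{\mc{H}}\big)$; substituting $Q_t^{-1}=Q_\infty^{-1/2}(1-\Theta_t)^{-1}Q_\infty^{-1/2}$ produces the cross term $\la(1-\Theta_t)^{-1}Q_\infty^{-1/2}e^{t\mc{A}}\mb{x},Q_\infty^{-1/2}\mb{y}\ra_{\mc{H}}$ and the $\mb{x}$-only term $-\tfrac12\la(1-\Theta_t)^{-1}Q_\infty^{-1/2}e^{t\mc{A}}\mb{x},Q_\infty^{-1/2}e^{t\mc{A}}\mb{x}\ra_{\mc{H}}$. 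The second factor is the centered covariance-change formula, which yields the prefactor $\det(1-\Theta_t)^{-1/2}$ together with the $\mb{y}$-only quadratic form $-\tfrac12\la\Theta_t(1-\Theta_t)^{-1}Q_\infty^{-1/2}\mb{y},Q_\infty^{-1/2}\mb{y}\ra_{\mc{H}}$, using $1-(1-\Theta_t)^{-1}=-\Theta_t(1-\Theta_t)^{-1}$. Collecting the three exponent terms and the determinant prefactor gives $q_t(\mb{x},\mb{y})$.

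I expect the main obstacles to be twofold. First, justifying the infinite-dimensional covariance-change density formula with the correct determinant (Fredholm when $\Theta_t$ is trace class, Carleman--regularized $\det_2$ if only Hilbert--Schmidt) requires a careful finite-dimensional projection and limiting argument. Second, matching the exponent exactly forces one to reconcile the operator orderings $e^{t\mc{A}}Q_\infty^{-1/2}$ versus $Q_\infty^{-1/2}e^{t\mc{A}}$ and $R_tR_t^{*}$ versus $R_t^{*}R_t=\Theta_t$; these coincide cleanly under the standing assumption that $\mc{A}$ and $Q$ share the eigenbasis $\{\phi^{(k)}\}$, in which case $R_t$ is normal, every operator above is diagonal, equivalence follows directly from Kakutani's product criterion, and the density is an explicit convergent product of one-dimensional Gaussian density ratios whose factors assemble into the stated $q_t(\mb{x},\mb{y})$. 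I would therefore carry out the computation in this diagonal setting as the transparent route and treat the general formula as its operator-theoretic repackaging.
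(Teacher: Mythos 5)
Your proposal is correct and follows essentially the same route as the paper's own proof: it rests on the identity $Q_t = Q_\infty - e^{t\mc{A}}Q_\infty e^{t\mc{A}^*}$, the resulting whitening $Q_t = Q_\infty^{1/2}(1-\Theta_t)Q_\infty^{1/2}$, and the same chain-rule factorization of the density into a Cameron--Martin mean shift $d\mc{N}_{e^{t\mc{A}}\mb{x},Q_t}/d\mc{N}_{0,Q_t}$ times the centered covariance-change factor $d\mc{N}_{0,Q_t}/d\mc{N}_{0,Q_\infty}$, which the paper imports from da Prato (Proposition 1.3.11) rather than re-deriving. Your two additions---the explicit Feldman--H\'ajek verification of the equivalence claim, which the paper's proof leaves implicit, and the observation that the stated $\Theta_t$ equals $R_t^*R_t$ while the whitened covariance is $1-R_tR_t^*$, so the two only coincide when $\mc{A}$ and $Q$ share an eigenbasis---are sound and in fact tighten points the paper glosses over.
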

Theorem~\ref{Theorem:infinite dimensional density} states that the marginal distribution of certain classes of SDEs described in~\eqref{eq:uncontrolled SDE} has an explicit Radon-Nikodym density with respect to their invariant measure. Therefore, by the time-homogeneity of the process in~\eqref{eq:uncontrolled SDE}, it allows us to define the $h$ function explicitly:
\begin{equation}\label{eq:h function reformulated}
    h(t, \mathbf{x}) = \int_{\mathcal{H}} \tilde{G}(\mathbf{z}) \mathcal{N}_{e^{(T-t)\mathcal{A}}\mathbf{x}, Q_{T-t}}(d\mathbf{z}) = \int_{\mathcal{H}} \tilde{G}(\mathbf{z}) q_{T-t}(\mathbf{x}, \mathbf{z}) \mathcal{N}_{0, Q_{\infty}}(d\mathbf{z}).
\end{equation}
This framework enables the construction of an infinite-dimensional bridge process, by selecting $\tilde{G}$ in~\eqref{eq:h function reformulated} properly. Below, we demonstrate the infinite-dimensional diffusion bridge.
\begin{example}[Diffusion Bridge in $\mc{H}$]\label{example:diffusion bridge} Let $\{(\lambda^{(k)}, \phi^{(k)}) \in \mathbb{R} \times \mc{H} : k \in \mathbb{N}\}$ be an eigen-system of $\mc{H}$. Then for each $k \in \mathbb{N}$, the SDE system in equation~\eqref{eq:uncontrolled SDE} can be represented as:
\begin{align}
    & d\mb{X}^{(k)}_t = -a_k\mb{X}^{(k)}_t dt + \sigma \sqrt{\lambda^{(k)}}d\mb{W}^{(k)}_t, \quad \mb{X}^{(k)}(0) = \mb{x}^{(k)}_0,
\end{align}
where $\mc{A}\phi^{(k)} = -a_k \phi^{(k)}$, $Q \phi^{(k)} = \lambda^{(k)}\phi^{(k)}$, $\mb{X}^{(k)}_t = \la \mb{X}_t, \phi^{(k)}\ra_{\mc{H}}$ and $\mb{W}^{(k)}_t = \la \mb{W}_t, \phi^{(k)}\ra_{\mc{H}}$. Then, for any $\mb{x}_T \in \mc{H}$, the conditional law of $\mb{x}^{(k)}_T$ given $\mb{X}^{(k)}_t$ is a Gaussian  $\mc{N}(\mb{m}^{(k)}_{T|t}\mb{X}_t^{(k)}, \mb{\Sigma}^{(k)}_{T|t})$ with
\begin{equation}
    \mb{m}^{(k)}_{T|t} = e^{-a_k (T-t)}, \quad \mb{\Sigma}^{(k)}_{T|t} = \sigma^2 \frac{\lambda_k}{2a_k}\left(1 - e^{-2a_k(T-t)}\right).
\end{equation}
Now, by setting the terminal condition in~\eqref{eq:h function} as $\tilde{G}(\mb{x}) := \mb{1}_{\mb{x}_T}(\mb{x})$ ($\ie$ Dirac delta of $\mb{x}_T$) then $h(t, \mb{x}) = q_{T-t}(\mb{x}, \mb{x}_T)$. Thus, for each coordinate $k$, we get following representation: 
\begin{align}\label{eq:diffusion bridge process}
    & d\mb{X}^{(k)}_t = \left[-a_k\mb{X}^{(k)}_t + \frac{2a_k e^{-a_k(T-t)}}{1 - e^{-2a_k(T-t)}}(\mb{x}^{(k)}_T - e^{-a_k (T-t)}\mb{X}_t^{(k)})\right]dt + \sigma \sqrt{\lambda^{(k)}}d\mb{W}^{(k)}_t,
\end{align}
with two end points conditions $\mb{X}^{(k)}_0 = \mb{x}^{(k)}_0$ and $\mb{X}^{(k)}_T = \mb{x}^{(k)}_T$.
\end{example}

\subsection{Approximating path measures}\label{sec:Approximating path measures}
Since the function $h$ is intractable for a general terminal cost $\tilde{G}$ in~\eqref{eq:h function reformulated}, simulating the conditioned SDEs in~\eqref{eq:conditioned SDE} requires some approximation techniques. As observed in Theorem~\ref{Theorem:Hopf-Cole}, finding the function $h$ is equal to learning the control function $\alpha$ such that $\mathbb{P}^{\alpha}$ is equal to $\mathbb{P}^{\star} := \mathbb{P}^{\alpha^{\star}} = \mathbb{P}^{h}$. Therefore, with a parametrization $\alpha := \alpha(\cdot, \theta)$ the approximation can be done by neural network parameterization 
$\ie \alpha^{\star} \approx \alpha^{\theta^{\star}}$ with local minimum $\theta^{\star} = \argmin_{\theta} D(\mathbb{P}^{\alpha}||\mathbb{P}^{\star})$, where$D(\mathbb{P}^{\alpha}||\mathbb{P}^{\star})$ is a divergence between $\mathbb{P}^{\alpha}$ and $\mathbb{P}^{\star}$. For example, the cost functional described in equation~\eqref{eq:objective functional} can be represented as relative-entropy loss $D_{\text{rel}}(\mathbb{P}^{\alpha}||\mathbb{P}^{\star}) = \mathbb{E}_{\mathbb{P}^{\alpha}}\left[\log \frac{d\mathbb{P}^{\alpha}}{d\mathbb{P}^{\star}}\right]$\footnote{See Sec~\ref{sec:Deriving Divergence between Path Measures_appx} for more details}. Therefore, the training loss for $\theta$ can be estimated by first simulating the parameterized control path and then calculating \eqref{eq:objective functional} for a specified cost functional $R, G$. Moreover, if we can access to the $\mathbb{P}^{\star}$, one can define the variational optimization~\citep{8618948} where the loss is defined as cross-entropy loss $D_{\text{cross}}(\mathbb{P}^{\alpha}||\mathbb{P}^{\star}) = \mathbb{E}_{\mathbb{P}^{\star}}\left[\log \frac{d\mathbb{P}^{\star}}{d\mathbb{P}^{\alpha}}\right]$. 
See~\citep{nusken2021solving, domingo2023stochastic} for more details about the approximation technique and other loss functions.

\section{Simulating Diffusion Bridges in Infinite Dimensional Spaces}
Leveraging the SOC theory within $\mc{H}$, we show how our approach generalizes existing diffusion-based sampling methods. Specifically, incorporating the relation between \textit{controlled} SDEs~\eqref{eq:controlled SDE} and \textit{conditioned} SDEs~\eqref{eq:conditioned SDE}, we introduce two learning algorithms that allow us to simulate various diffusion bridge-based sampling algorithms.

\subsection{Infinite Dimensional Bridge Matching}\label{subsection:bridge matching}
In this section, our objective is to learn a control $\alpha$ that yields $\mathbb{P}^{\alpha}$ such that $\{\mb{X}^{\alpha}_t\}_{t \in [0, T]}$ satisfies $\mu^{\alpha}_t \approx \mu^{\star}_t$ for all pre-specified $\mu^{\star}_t$ over the interval $t \in [0, T]$. Specifically, we assume that the end-point marginals $\mu^{\star}_0$ and $\mu^{\star}_T$ follow the laws of two data distributions $\pi_0$ and $\pi_T$, respectively, and the intermediate marginals $\{\mu^{\star}_t\}_{t \in (0, T)}$ are defined as a mixture of diffusion bridge paths. This learning problem is referred to as the \textit{Bridge Matching} (BM) algorithm~\citep{peluchetti2022nondenoising, shi2024diffusion, liu2023i2sb} and can be expressed as a solution of the SOC problem structured as
\begin{equation}\label{eq:bridge matching problem}
\inf_{\alpha}D(\mathbb{P}^{\alpha}|\mathbb{P}^{\star}), \; \text{such that} \;  d\mb{X}^{\alpha}_t = \left[ \mc{A}\mb{X}^{\alpha}_t  + \sigma Q^{1/2}\alpha_t \right] dt + \sigma d\mb{\tilde{W}}^Q_t, \quad \mb{X}^{\alpha}_0 \sim \pi_0.
\end{equation}
In~\eqref{eq:bridge matching problem}, various divergences can be chosen for the same learning problem, as discussed in Section~\ref{sec:Approximating path measures}. Here, we will choose the cross-entropy because the relative entropy requires the appropriate selection of the terminal cost $G$ in~\eqref{eq:objective functional}, which is intractable since we do not have access to the distributional form of $\pi_0, \pi_T$~\citep{liu2024generalized}.
Furthermore, keeping the entire computational graph of $\mathbb{P}^{\alpha}$ with parameterized $\alpha$ can become resource-intensive, especially for higher-dimensional datasets like images~\citep{chen2022likelihood}.

Now, we specify the optimal path measure $\mathbb{P}^{\star}$ for a problem in~\eqref{eq:bridge matching problem}.
Let $\mathbb{P}_{|0, T}$ be a path measure induced by~\eqref{eq:diffusion bridge process} and $\mu_{t|0, T}$ be a marginal distribution of $\mathbb{P}_{|0, T}$. Moreover, let $\mathbb{P}^{\star} = \mathbb{P}_{|0, T} \Pi_{0, T}$ for an independent coupling $\Pi_{0, T} = \pi_0 \otimes \pi_T$. Then the optimal path measure $\mathbb{P}^{\star}$ is defined as \textit{Mixture of bridge}. 
Under regular assumptions, the optimal control $\alpha^{\star}$ that induces the optimal path measure $\mathbb{P}^{\star}$ can be constructed as a mixture of functions $h$ in~\eqref{eq:conditioned SDE} by choosing $G(\mb{x}) = \mb{1}_{\mb{x}_T}(\mb{x})$.
\begin{theorem}[Mixture of Bridges in $\mc{H}$]\label{theorem:mixture of bridge} Let us consider a marginal distribution of $\mathbb{P}^{\star}$ at $t \in [0, T]$, $\mu^{\star}_t(d\mb{x}_t) = \int \mu_{t|0, T}(d\mb{x}_t) \Pi_{0, T}(d\mb{x}_0, d\mb{x}_T)$ has density $p^{\star}_t$ with respect to some Gaussian reference measure $\mu_{\text{ref}}$ $\ie \mu^{\star}_t(d\mb{x}_t)/\mu_{\text{ref}}(d\mb{x}_t) = p^{\star}_t(\mb{x}_t)$. Then the optimal path measure $\mathbb{P}^{\star}$ associated with:
\begin{equation}\label{eq:target SDE}
    d\mb{X}^{\star}_t = \left[\mc{A}\mb{X}^{\star}_t + \mathbb{E}_{\mb{x}_T \sim \mathbb{P}^{\star}(d\mb{x}_T|\mb{X}^{\star}_t)}\left[ \sigma^2 Q D_{\mb{x}} \log \mc{N}(\mb{x}_T; \mb{m}_{T|t} \mb{X}^{\star}_t, \mb{\Sigma}_{T|t}) \right]\right]dt + \sigma d\mb{\hat{W}}^Q_t,
\end{equation}
where $\mb{m}_{T|t} = e^{(T-t)\mc{A}}$ and $\mb{\Sigma}_{T|t} = \sigma^2 \int_0^{T-t} e^{(T-t-s) \mc{A}} Q e^{(T-t-s) \mc{A}} ds$ and $\mb{X}^{\star}_t \sim \mu_t$ for $t \in [0, T]$.
\end{theorem}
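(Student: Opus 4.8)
The plan is to prove the claim in two stages: first identify the drift of a \emph{single} bridge pinned at a fixed endpoint $\mb{x}_T$, and then show that mixing over endpoints according to $\Pi_{0,T}$ produces a marginal flow whose Markovian representation carries the stated conditional-expectation drift. For the first stage I would start from Example~\ref{example:diffusion bridge}: with $\tilde{G} = \mb{1}_{\mb{x}_T}$ the Doob $h$-transform \eqref{eq:conditioned SDE} has $h(t,\mb{x}) = q_{T-t}(\mb{x},\mb{x}_T)$ and gives, coordinatewise, the bridge drift $-a_k\mb{X}^{(k)}_t + \frac{2a_k e^{-a_k(T-t)}}{1-e^{-2a_k(T-t)}}(\mb{x}^{(k)}_T - e^{-a_k(T-t)}\mb{X}^{(k)}_t)$. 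A direct computation shows this equals $\la \mc{A}\mb{x} + \sigma^2 Q D_{\mb{x}}\log\mc{N}(\mb{x}_T;\mb{m}_{T|t}\mb{x},\mb{\Sigma}_{T|t}),\, \phi^{(k)}\ra_{\mc{H}}$, where the score $D_{\mb{x}}\log\mc{N}$ is interpreted through the Radon--Nikodym density $q_{T-t}(\mb{x},\mb{x}_T)$ of Theorem~\ref{Theorem:infinite dimensional density}; since the reference measure $\mc{N}_{0,Q_\infty}$ does not depend on $\mb{x}$, we have $D_{\mb{x}}\log q_{T-t} = D_{\mb{x}}\log\mc{N}(\mb{x}_T;\mb{m}_{T|t}\mb{x},\mb{\Sigma}_{T|t})$, which is well defined even without a Lebesgue density. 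Thus each single bridge $\mathbb{P}^{\mb{x}_0,\mb{x}_T}_{|0,T}$ solves \eqref{eq:conditioned SDE} with drift $b^{\mb{x}_T}(t,\mb{x}) := \mc{A}\mb{x} + \sigma^2 Q D_{\mb{x}}\log\mc{N}(\mb{x}_T;\mb{m}_{T|t}\mb{x},\mb{\Sigma}_{T|t})$, which depends on the endpoint $\mb{x}_T$ but not on $\mb{x}_0$.

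Next I would pass to the mixture. Writing $\mathbb{P}^{\star} = \int \mathbb{P}^{\mb{x}_0,\mb{x}_T}_{|0,T}\,\Pi_{0,T}(d\mb{x}_0,d\mb{x}_T)$ and applying Itô's formula in $\mc{H}$ to a cylindrical test function $\varphi$ along each component, the generator of $\mathbb{P}^{\mb{x}_0,\mb{x}_T}_{|0,T}$ is $\la b^{\mb{x}_T}(t,\mb{x}),D_{\mb{x}}\varphi\ra_{\mc{H}} + \tfrac12\text{Tr}[\sigma^2 Q D_{\mb{xx}}\varphi]$. Integrating $\tfrac{d}{dt}\mathbb{E}[\varphi(\mb{X}_t)]$ against $\Pi_{0,T}$ and using that the diffusion term is common to all components, the second-order part collapses to $\int \tfrac12\text{Tr}[\sigma^2 Q D_{\mb{xx}}\varphi]\,d\mu^{\star}_t$, while the drift part becomes $\int\!\!\int \la b^{\mb{x}_T}(t,\mb{x}), D_{\mb{x}}\varphi(\mb{x})\ra_{\mc{H}}\,\mu^{\mb{x}_0,\mb{x}_T}_t(d\mb{x})\,\Pi_{0,T}(d\mb{x}_0,d\mb{x}_T)$, where $\mu^{\mb{x}_0,\mb{x}_T}_t$ is the time-$t$ marginal of the single bridge. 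Rewriting this integral in terms of the joint law of $(\mb{X}^{\star}_t,\mb{x}_T)$ under $\mathbb{P}^{\star}$ and applying the tower property collapses the inner average to $\mathbb{E}_{\mb{x}_T\sim\mathbb{P}^{\star}(d\mb{x}_T|\mb{x})}[b^{\mb{x}_T}(t,\mb{x})]$; since the $\mc{A}\mb{x}$ part is deterministic given $\mb{x}$, only the score term is averaged, producing precisely the drift of \eqref{eq:target SDE}. This shows $\{\mu^{\star}_t\}$ solves, in the weak (Fokker--Planck) sense, the forward equation associated with \eqref{eq:target SDE}.

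To finish I would invoke an infinite-dimensional mimicking/superposition argument: the SDE \eqref{eq:target SDE} defines a well-posed martingale problem on $\mc{H}$ whose marginal flow is uniquely determined by the weak forward equation, so its law reproduces $\{\mu^{\star}_t\}_{t\in[0,T]}$, and by the Markovian structure of the projected drift this identifies $\mathbb{P}^{\star}$ with \eqref{eq:target SDE}.

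The main obstacle is the rigorous treatment of these steps in the infinite-dimensional, Gaussian-reference setting. Unlike finite dimensions there is no Lebesgue density, so every density statement and the score $D_{\mb{x}}\log\mc{N}$ must be routed through Theorem~\ref{Theorem:infinite dimensional density}, and Itô's formula together with the trace-class integrability of $\text{Tr}[\sigma^2 Q D_{\mb{xx}}\varphi]$ must be justified for cylindrical $\varphi$. More delicate is the singularity of the bridge drift as $t\to T$ (the factor $(1-e^{-2a_k(T-t)})^{-1}$ diverges), so establishing well-posedness of \eqref{eq:target SDE} and validity of the superposition principle with such a singular drift is the crux; I would address this by working on $[0,T-\epsilon]$, obtaining uniform-in-$\epsilon$ estimates from the explicit Gaussian marginals of Example~\ref{example:diffusion bridge}, and passing to the limit while verifying integrability of $\mathbb{E}_{\mb{x}_T\sim\mathbb{P}^{\star}(d\mb{x}_T|\mb{x})}[b^{\mb{x}_T}]$ against $\mu^{\star}_t$.
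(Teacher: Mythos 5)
Your proposal follows essentially the same route as the paper's proof: identify the single-bridge drift via Doob's $h$-transform, mix the bridge generators over $\Pi_{0,T}$ in the weak (Fokker--Planck) formulation, and use the densities $p_{t|0,T}, p^{\star}_t$ with respect to the Gaussian reference measure (the paper writes this as $h^{\star}_t(\mb{x}_t) = \int_{\Pi} h_{t|0,T}(\mb{x}_t)\, p_{t|0,T}(\mb{x}_t)\, \Pi(d\mb{x}_0, d\mb{x}_T) / p^{\star}_t(\mb{x}_t)$, which is exactly your tower-property/Bayes step) to express the mixture drift as the conditional expectation in \eqref{eq:target SDE}. If anything, you are more careful than the paper about the final identification step (well-posedness, the superposition principle, and the drift singularity as $t \to T$, which the paper passes over silently), with the one caveat that the conclusion should be read as matching of time-marginals rather than path-law equality, since the mixture of bridges is generally non-Markov.
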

\paragraph{Objective Functional for Bridge Matching.}
With the structure of $\mathbb{P}^{\star}$ specified in~\eqref{eq:target SDE} we can estimate the divergence between the optimal target path measure $\mathbb{P}^{\star}$ and a path measure $\mathbb{P}^{\alpha}$.
To accomplish this, we first define
\begin{equation}\label{eq:cross entropy gamma}
    \gamma(t, \mb{x}; \theta) = Q^{1/2} \left[ \mathbb{E}_{\mb{x}_T \sim \mathbb{P}^{\star}(d\mb{x}_T|\mb{x})}\left[ \sigma Q^{1/2} D_{\mb{x}} \log \mc{N}(\mb{x}_T; \mb{m}_{T|t} \mb{X}^{\star}_t, \mb{\Sigma}_{T|t}) \right] - \alpha(t, \mb{x}; \theta)\right].
\end{equation}
Then, by applying the Girsanov theorem\footnote{See Sec~\ref{sec:Deriving Divergence between Path Measures_appx} for details.} which provides us the Radon-Nikodym derivative between $\mathbb {P}^{\star}$ and $\mathbb{P}^{\alpha}$, we can derive the cross-entropy loss in equation~\eqref{eq:bridge matching problem}:
\begin{equation}\label{eq:cross entropy loss}
    D_{\text{cross}}(\mathbb{P}^{\alpha^{\theta}}|\mathbb{P}^{\star}) = \mathbb{E}_{\mathbb{P}^{\star}}\left[ \log \frac{d\mathbb{P}^{\star}}{d\mathbb{P}^{\alpha^{\theta}}}\right] = \mathbb{E}_{\mathbb{P}^{\star}}\left[ \int_0^T \frac{1}{2}\norm{\gamma(t, \mb{X}^{\star}_t ; \theta)}^2_{\mc{H}_0} ds \right]
\end{equation}
Then, under the neural network parameterization of control function $\alpha^{\theta}$, we can reformulate the SOC problem in~\eqref{eq:bridge matching problem} as a learning problem with the training loss function represented by:
\begin{equation}\label{eq:training loss bridge matching}
    \mc{L}_{\text{BM}}(\theta) = \mathbb{E}_{t \sim \mc{U}_{0, T}} \mathbb{E}_{\mathbb{P}^{\star}(\mb{x}_T \sim d\mb{x}_T|\mb{X}^{\star}_t)}\left[ \frac{1}{2}\norm{\sigma Q^{1/2} D_{\mb{x}} \log \mc{N}(\mb{x}_T; \mb{m}_{T|t} \mb{X}^{\star}_t, \mb{\Sigma}_{T|t}) - \alpha(t, \mb{X}^{\star}_t; \theta)}^2_{\mc{H}}\right]
\end{equation}
The $\mc{L}_{\text{BM}}(\theta)$ in~\eqref{eq:training loss bridge matching} yields the infinite-dimensional BM summarized in Alg~\ref{algorithm:BM}.

\begin{figure}[!t]
\begin{minipage}[t]{0.47\textwidth}
  \begin{algorithm}[H]
    \caption{Bridge Matching of DBHS}
    \begin{algorithmic}\label{algorithm:BM}
    \STATE \textbf{Input:} Coupling $\Pi_{0, 1}$, Bridge $\mathbb{P}_{|0, T}$
      \FOR{$n=1, \cdots, N$}
        \STATE Sample $(\mb{x}_0, \mb{x}_T) \sim \Pi_{0, T}$
        \STATE Simulate $\mb{X}_{[0,T]}^{\star} \sim \mathbb{P}_{|0,T}$ with $(\mb{x}_0, \mb{x}_T)$
        \STATE Compute $\mc{L}_{\text{BM}}(\theta_k)$ wtih~\eqref{eq:training loss bridge matching}
        \STATE Update $\theta_{n+1}$ with $\nabla_{\theta_n} \mc{L}_{\text{BM}}(\theta_n)$
      \ENDFOR
    \STATE \textbf{Output:} Approximated optimal control $\alpha^{\theta^{\star}}$
    \end{algorithmic}
  \end{algorithm}
\end{minipage}
\begin{minipage}[t]{0.51\textwidth}
  \begin{algorithm}[H]
    \caption{Bayesian Learning in $\mc{H}$}
    \begin{algorithmic}\label{algorithm:exact sampling}
    \STATE \textbf{Input:} 
    Initial condition $\mb{x}_0$, energy functional $\mc{U}$
      \FOR{$n=1, \cdots, N$}
        \STATE Simulate $\mb{X}^{\alpha^{\theta}}_{[0, T]} \sim \mathbb{P}^{\alpha^{\theta}}$ with $\mb{X}^{\alpha^{\theta}}_0 = \mb{x}_0$
        \STATE Compute $\mc{L}_{\text{Bayes}}(\theta_k)$ with~\eqref{eq:relative entropy loss}
        \STATE Update $\theta_{n+1}$ with $\nabla_{\theta_n} \mc{L}_{\text{Bayes}}(\theta_n)$
      \ENDFOR
    \STATE \textbf{Output:} Approximated optimal control $\alpha^{\theta^{\star}}$
    \end{algorithmic}
  \end{algorithm}
\end{minipage}
\vspace{-5mm}
\end{figure}
\subsection{Bayesian Learning in Function Space}

In the previous section, we observed that by appropriately defining a terminal cost functional $G$ in~\eqref{eq:objective functional}, the SOC problem aligns with the sampling problem, where optimal control effectively steers the distribution from $\pi_{0}$ to the target distribution $\pi_{T}$, where we can access samples from $\pi_0$ and $\pi_T$. However, accessing samples from unknown target distribution $\pi_T$ is generally not feasible. For instance, for a $\pi_T$, a posterior distribution over function. In this case, although direct samples from $\pi_T$ are unattainable, its distributional representation is given as~\citep{suzuki2020generalization, baldassari2024conditional}:
\begin{equation}\label{eq:posterior measure}
    \frac{d\pi_T}{d\mu_{\text{prior}}}(\mb{X}_T) \propto \exp\left(-\mc{U}(\mb{X}_T)\right), \quad \mu_{\text{prior}} = \mc{N}(\mb{m}_{\text{prior}}, Q_{\text{prior}}),
\end{equation}
where $\mc{U}$ is a energy function. Here, our primary objective is to sample from a distribution over function $\pi_T := \mu^{\star}_T$ by simulating the controlled diffusion process $\{\mb{X}_t^{\alpha}\}_{t \in [0, T]}$ over finite horizon $[0, T]$ with $T < \infty$. It can be represented as a solution of the following SOC problem:
\begin{equation}\label{eq:bayesian learning problem}
\inf_{\alpha}D(\mathbb{P}^{\alpha}|\mathbb{P}^{\star}), \; \text{such that} \;  d\mb{X}^{\alpha}_t = \left[ \mc{A}\mb{X}^{\alpha}_t  + \sigma Q^{1/2}\alpha_t \right] dt + \sigma d\mb{\tilde{W}}^Q_t, \quad \mb{X}^{\alpha}_0 = \mb{x}_0.
\end{equation}
The following theorem implies that with a suitable terminal cost functional $G$ in~\eqref{eq:objective functional}, it is possible to achieve $\mb{X}^{\alpha^{\star}}_T \sim \pi_T$ as an expansion of~\citep{Pra1991ASC,tzen2019theoretical} for infinite dimensional space $\mc{H}$.
\begin{theorem}[Exact sampling in $\mc{H}$]\label{theorem:exact sampling} Consider that the initial distribution $\mu_0$ is given as the Dirac measure $\delta_{\mb{x}_0}$ for some $\mb{x}_0 \in \mc{H}$ and the following objective functional
\begin{equation}\label{eq:objective function exact sampling}
    \mc{J}(\alpha) = \mathbb{E}_{\mathbb{P}^{\alpha}}\left[ \int_0^T \frac{1}{2}\norm{\alpha_s}_{\mc{H}}^2 ds - \log \frac{d\pi_T}{d\mu_T}(\mb{X}^{\alpha}_T) \right]
\end{equation}
where $\mu_T = \mc{N}(e^{T\mc{A}}\mb{x}_0, Q_T)$ as a marginal distribution of $\mb{X}_T$ in~\eqref{eq:uncontrolled SDE} with a well-defined terminal cost $\frac{d\pi_T}{d\mu_T}$ by Theorem~\ref{Theorem:infinite dimensional density}. Then, $\mb{X}^{\alpha^{\star}}_T \sim \pi_T$.
\end{theorem}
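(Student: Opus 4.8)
The plan is to show that the optimal control furnished by the Verification Theorem (Lemma~\ref{lemma:verification}) drives the controlled SDE~\eqref{eq:controlled SDE} into exactly the Doob $h$-transformed process, and then to read off its terminal law through an explicit change of measure. First I would match the objective~\eqref{eq:objective function exact sampling} to the general functional~\eqref{eq:objective functional}: the running cost is $R(\alpha)=\tfrac{1}{2}\norm{\alpha}^2_{\mc{H}}$ and the terminal cost is $G(\mb{x})=-\log\frac{d\pi_T}{d\mu_T}(\mb{x})$, so that $\tilde{G}:=e^{-G}=\frac{d\pi_T}{d\mu_T}$. With this identification, Lemma~\ref{lemma:verification} together with Theorem~\ref{Theorem:Hopf-Cole} guarantees that the minimizer is $\alpha^{\star}_t=\sigma Q^{1/2}D_{\mb{x}}\log h(t,\mb{X}^{\alpha^{\star}}_t)$, where $h(t,\mb{x})=\mathbb{E}_{\mathbb{P}}[\tilde{G}(\mb{X}_T)\mid \mb{X}_t=\mb{x}]$ solves the backward equation~\eqref{eq:PDE KBE} with terminal condition $h(T,\cdot)=\tilde{G}$. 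Consequently $\mathbb{P}^{\alpha^{\star}}$ coincides with the conditioned path measure $\mathbb{P}^h$ of~\eqref{eq:conditioned SDE}, realizing the infinite-dimensional analogue of the F\"ollmer-drift sampler of~\citep{Pra1991ASC,tzen2019theoretical}.

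The second step is to compute the terminal marginal of $\mathbb{P}^h$. Since $h$ is space-time harmonic for the generator of~\eqref{eq:uncontrolled SDE}, the process $M_t:=h(t,\mb{X}_t)$ is a positive $\mathbb{P}$-martingale, and the $h$-transform is the change of measure $\frac{d\mathbb{P}^h}{d\mathbb{P}}\big|_{\mc{F}_t}=\frac{h(t,\mb{X}_t)}{h(0,\mb{x}_0)}$, whose Girsanov drift is precisely $\sigma^2 Q D_{\mb{x}}\log h$, matching~\eqref{eq:conditioned SDE}. Testing against any bounded measurable $f:\mc{H}\to\mathbb{R}$ and using $h(T,\cdot)=\tilde{G}$ gives
\begin{equation}
\mathbb{E}_{\mathbb{P}^h}[f(\mb{X}_T)]=\frac{1}{h(0,\mb{x}_0)}\mathbb{E}_{\mathbb{P}}\big[\tilde{G}(\mb{X}_T)f(\mb{X}_T)\big]=\frac{1}{h(0,\mb{x}_0)}\int_{\mc{H}}f(\mb{z})\frac{d\pi_T}{d\mu_T}(\mb{z})\,\mu_T(d\mb{z})=\frac{1}{h(0,\mb{x}_0)}\int_{\mc{H}}f\,d\pi_T,
\end{equation}
where the middle equality uses that $\mb{X}_T$ has law $\mu_T$ under $\mathbb{P}$ (since $\mu_0=\delta_{\mb{x}_0}$). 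Taking $f\equiv 1$ then shows $h(0,\mb{x}_0)=\mathbb{E}_{\mu_T}[\tilde{G}]=\pi_T(\mc{H})=1$ because $\pi_T$ is a probability measure; hence $\mathbb{E}_{\mathbb{P}^h}[f(\mb{X}_T)]=\int f\,d\pi_T$ for all such $f$, i.e.\ $\mb{X}^{\alpha^{\star}}_T\sim\pi_T$.

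The main obstacle is establishing that every object in this chain is well-defined in infinite dimensions. The density $\frac{d\pi_T}{d\mu_T}$ requires $\pi_T\ll\mu_T$; since $\pi_T$ is defined via a bounded-below energy density against $\mu_{\text{prior}}=\mc{N}(\mb{m}_{\text{prior}},Q_{\text{prior}})$ in~\eqref{eq:posterior measure}, while $\mu_T=\mc{N}(e^{T\mc{A}}\mb{x}_0,Q_T)$ is Gaussian, I would invoke the equivalence of Gaussian measures from Theorem~\ref{Theorem:infinite dimensional density} (a Feldman--H\'ajek-type dichotomy) to ensure $\mu_{\text{prior}}\sim\mu_T$ and therefore $\pi_T\ll\mu_T$ with $\tilde{G}$ a genuine, $\mu_T$-integrable Radon--Nikodym density. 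The more delicate point is verifying that $M_t=h(t,\mb{X}_t)$ is a true martingale rather than merely a local or super-martingale, so that the Doob change of measure defines an honest probability measure and its Girsanov correspondence with~\eqref{eq:conditioned SDE} is exact; this relies on the integrability and regularity of $h$ posited in the assumptions of~\ref{assumptions value function}, together with the Cameron--Martin structure that makes $\sigma Q^{1/2}D_{\mb{x}}\log h$ an admissible $\mc{H}$-valued control in $\mc{U}$. Once these infinite-dimensional well-posedness facts are secured, the computation above closes the proof.
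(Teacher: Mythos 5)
Your proposal is correct and follows essentially the same route as the paper's proof: identify the terminal cost $G=-\log\frac{d\pi_T}{d\mu_T}$ so that $\tilde{G}=\frac{d\pi_T}{d\mu_T}$, invoke the Verification Theorem and Hopf--Cole transform to equate the optimally controlled measure with the Doob $h$-transformed measure $\mathbb{P}^h$ (the paper writes this as $\frac{d\mathbb{P}^{\star}}{d\mathbb{P}}=\frac{1}{\mc{Z}}\tilde{G}$, which is your martingale change of measure at time $T$), and then read off the terminal marginal $d\mu^{\star}_T=\frac{h(T,\cdot)}{h(0,\mb{x}_0)}d\mu_T=d\pi_T$ using the normalization $\int\frac{d\pi_T}{d\mu_T}\,d\mu_T=1$. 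Your treatment is somewhat more careful than the paper's on the infinite-dimensional well-posedness points (the true-martingale property of $h(t,\mb{X}_t)$ and the absolute continuity $\pi_T\ll\mu_T$ via the Feldman--H\'ajek-type equivalence in Theorem~\ref{Theorem:infinite dimensional density}), which the paper takes for granted, but the argument is the same.
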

\paragraph{Objective Functional for Bayesian Learning.} 
Unlike problem in~\eqref{eq:bridge matching problem} where we can access to the target path measure $\mathbb{P}^{\star}$ directly, it is not feasible here because $h(t, \mb{x}) = \mathbb{E}_{\mathbb{P}}\left[-\frac{d\pi_T}{d\mu_T}(\mb{X}_T) | \mb{X}_t=\mb{x}\right]$ does not have an explicit solution. Therefore, we will use the relative-entropy loss as our training loss function:
\begin{equation}\label{eq:relative entropy loss}
    \mc{L}_{\text{Bayes}}(\theta) = D_{\text{rel}}(\mathbb{P}^{\alpha^{\theta}}|\mathbb{P}^{\star}) = \mathbb{E}_{\mathbb{P}^{\alpha^{\theta}}}\left[ \int_0^T \frac{1}{2}\norm{\alpha(s, \mb{X}^{\alpha^{\theta}}_s; \theta)}_{\mc{H}}^2 ds - \log \frac{d\pi_T}{d\mu_T}(\mb{X}^{\alpha^{\theta}}_T) \right].
\end{equation}
The key difference from previous algorithms~\citep{zhang2022path, vargas2023bayesian} is that the Radon-Nikodym derivative $\frac{d\pi_T}{d\mu_T}(\mb{X}^{\alpha}_T)$ may not be well-defined on $\mc{H}$ due to the absence of the Lesbesgue measure. However, the Theorem~\ref{Theorem:infinite dimensional density} suggests that by choosing certain classes of Gaussian measure $\ie \mu_{\text{prior}} := \mc{N}(e^{t\mc{A}}\mb{x}, Q_t)$, the Radon-Nikodym density of $\frac{d\pi_T}{d\mu_{\text{prior}}}$ and $\frac{d\mu_T}{d\mu_{\text{prior}}}$ has explicit form since $\mu_{\text{prior}}$ and $\mc{N}(0, Q_{\infty})$ are equivalent.  Thus, using the chain rule, the terminal cost for any $\mb{x} \in \mc{H}$ can be computed as follows:
\begin{equation}
    \log \frac{d\pi_T}{d\mu_T}(\mb{x}) = -\mc{U}(\mb{x}) - \log \frac{d\mu_{\text{prior}}}{d\mc{N}(0, Q_{\infty})}(\mb{x}) + \log \frac{d\mu_T}{d\mc{N}(0, Q_{\infty})}(\mb{x}).
\end{equation}
With $\mc{L}_{\text{Bayes}}(\theta)$ in~\eqref{eq:relative entropy loss}, the infinite-dimensional bayesian learning algorithm is summarized in Alg~\ref{algorithm:exact sampling}.



\section{Related Work}

Most diffusion models operate within the framework of time-reversal~\citep{ANDERSON1982313}, where the generation process is learned from its corresponding time-reversed SDEs~\citep{song2021scorebased}. In contrast, diffusion models based on conditioned SDEs, such as diffusion bridges, built upon the theory of Doob's $h$-transform, offer a conceptually simpler approach as they solely rely on a forward process. \citep{peluchetti2022nondenoising} proposes generative models with this concept, showing that the mixture of forward diffusion bridge processes effectively transports between couplings of two distributions. \citep{ye2022first} introduces a family of first hitting diffusion models that generate data with a forward diffusion process at a random first hitting time based on Doob's $h$-transform. Combining time-reversal with the $h$-transform, \citep{liu2023learning} proposes a diffusion bridge process on constrained domains. Moreover, \citep{shi2024diffusion, peluchetti2023diffusion} presented that the Schrödinger bridge problem can be solved by an iterative algorithm, which is improved by~\citep{debortoli2024schr} to enhance efficiency. Furthermore, \citep{liu2024generalized} generalizes the Schrödinger bridge matching algorithm by introducing an approximation scheme with a non-trivial running cost. Compared to prior works, which primarily focus on finite-dimensional spaces, our work extends the formulation of Doob's $h$-transform into Hilbert space, enabling the development of various sampling algorithms in function spaces.
\section{Experiments}

This section details the experimental setup and the application of the proposed Diffusion Bridges in Function Spaces \textbf{(DBFS)} for generating functional data. We interpret the data from a functional perspective, known as \textit{field} representation~\citep{xie2022neural,zhuang2023diffusion}, where data are seen as a finite collection of function evaluations $\{\mb{Y}[\mb{p}_i], \mb{p}_i\}_i^{N}$. Here, a function $\mb{Y}$ maps points $\mb{p}_i$ from a coordinate space $\mc{X}$ to a signal space $\mc{Y}$, i.e., $\mb{Y} : \mc{X} \to \mc{Y}$. Additional experimental details are provided in Appendix~\ref{sec:experimental details}.

\subsection{Bridge Matching}
First, we present empirical results for the infinite-dimensional BM algorithm discussed in Sec~\ref{subsection:bridge matching}, applied to 1D and 2D data. For 1D data, we consider $\mc{X} = \mathbb{R}$ and $\mc{Y} = \mathbb{R}$. For 2D data, we assume $\mc{X} = \mathbb{R}^2$ and $\mc{Y} = \mathbb{R}$ for probability density or grayscale images, and $\mc{Y} = \mathbb{R}^3$ for RGB images.
\paragraph{Bridging Field.}
\begin{wrapfigure}{r}{0.5\textwidth}
\vspace{-2mm}
\begin{minipage}[b]{0.24\linewidth}
    \centering
    \scriptsize
    {$\mb{X}_0 = p_0$}
\end{minipage}
\begin{minipage}[b]{0.5\linewidth}
    \centering
    \scriptsize
    {$\mb{X}_t \sim \mu_{t|0,T}$}
\end{minipage}
\begin{minipage}[b]{0.24\linewidth}
    \centering
    \scriptsize
    {$\mb{X}_T = p_T$}
\end{minipage}

\centering
\begin{minipage}[b]{1.\linewidth}
\includegraphics[width=0.24\textwidth,]{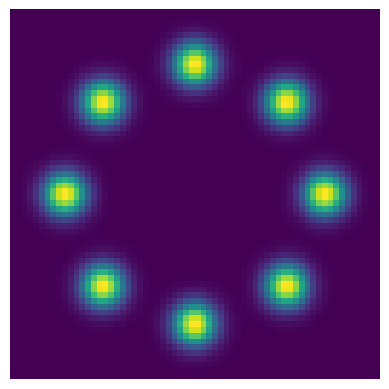}
\includegraphics[width=0.24\textwidth,]{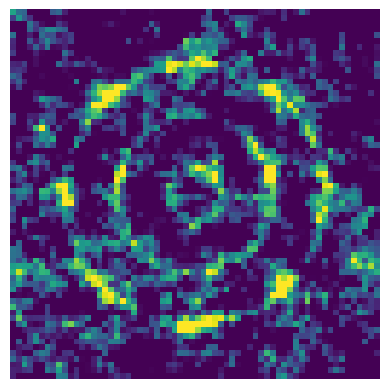}
\includegraphics[width=0.24\textwidth,]{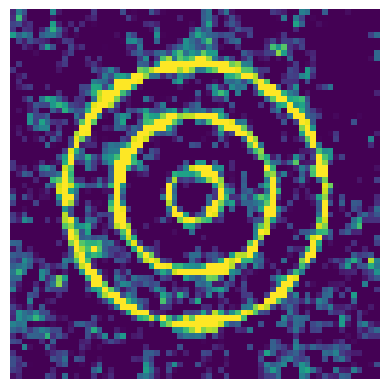}
\includegraphics[width=0.24\textwidth,]{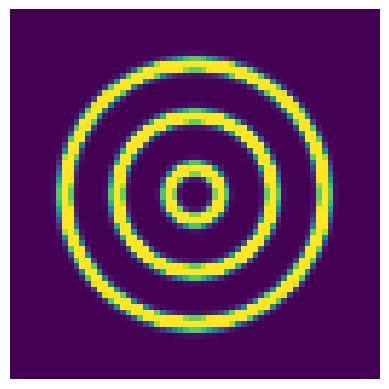}
\end{minipage}

\begin{minipage}[b]{1.\linewidth}
\includegraphics[width=0.24\textwidth,]{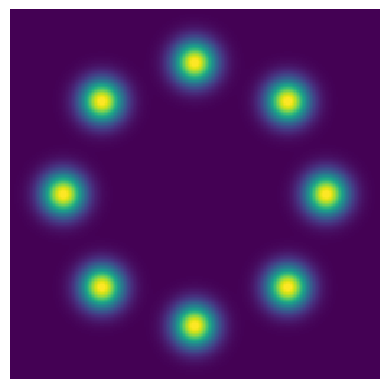}
\includegraphics[width=0.24\textwidth,]{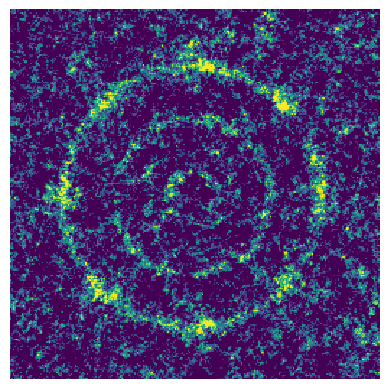}
\includegraphics[width=0.24\textwidth,]{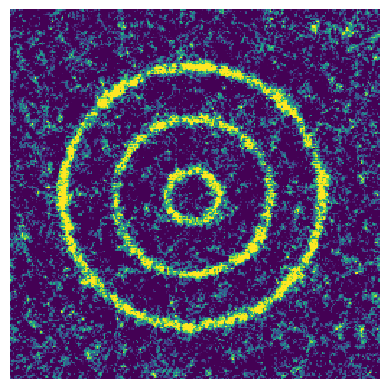}
\includegraphics[width=0.24\textwidth,]{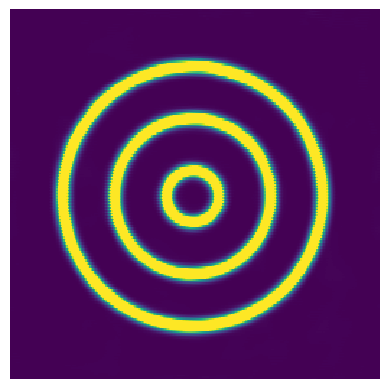}
\end{minipage}
\caption{(Top) Diffusion Bridge $\mathbb{P}^{\star}$ evaluated on $32^2$. (Bottom) Learned  process $\mathbb{P}^{\alpha^{\star}}$  evaluated on $256^2$}\label{figure:bridge matching dirac}
\vspace{-4mm}
\end{wrapfigure}
We begin by validating our bridge matching Algorithm in Alg~\ref{algorithm:BM} on bridging probability density function within $\mc{H}$. Specifically, we set $\pi_0 := \delta_{p_0}$ with a ring-shaped density function $p_0$ and $\pi_T := \delta_{p_T}$ characterized by a Gaussian mixture density function $p_T$. The functions map each grid points $\mb{p}_i$ to the probability in $\mc{Y}=\mathbb{R}$. Therefore, both density functions can be represented as their field representations $\{p_0[\mb{p}_i], \mb{p}_i\}_{i}^N, \{p_T[\mb{p}_i], \mb{p}_i\}_{i}^N$, respectively. Figure~\ref{figure:bridge matching dirac} illustrates the progressive propagation of the target optimal bridge process $\mathbb{P}^{\star}$ from $p_0$ to $p_T$. Despite the $\alpha^{\star}$ is trained on the functions generated from $\mathbb{P}^{\star}$ which are evaluated on a coarse grid $\{\mb{p}_i\}_{i}^{32^2}$, $\mathbb{P}^{\alpha^\star}$ is capable of producing accurate functional evaluations on a finer grid $\{\mb{p}_i\}_{i}^{256^2}$. This resolution-invariance property indicates that our method is adept at learning continuous functional representations, rather than merely memorizing the discrete evaluations.

\begin{figure}[!h]
  \centering
    \centering
    \subfigure[Quadratic]{
        \includegraphics[width=0.31\textwidth]{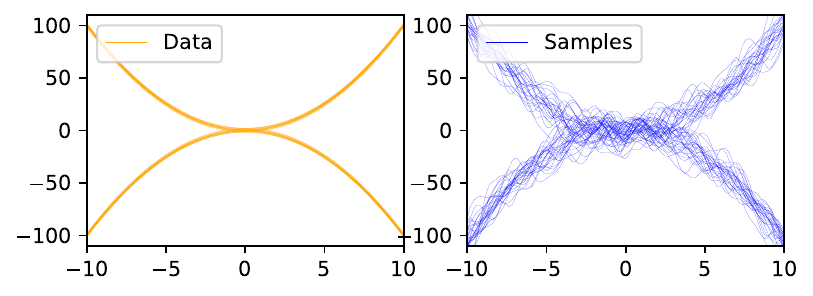}
    }
    \subfigure[Melbourne]{
        \includegraphics[width=0.31\textwidth]{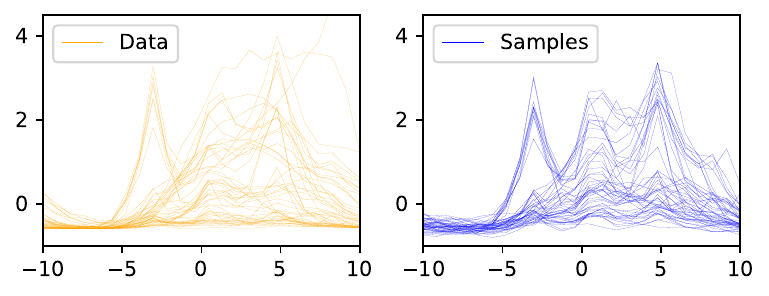}
    }
    \subfigure[Gridwatch]{
        \includegraphics[width=0.31\textwidth]{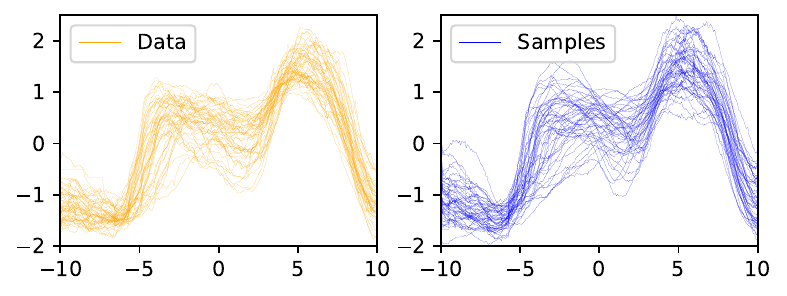}
    }
    \caption{Results on 1D function generation. (Left) Real data and (Right) generated samples from our model.}\label{figure:1d_generation}
\vspace{-4mm}
\end{figure}

\begin{wraptable}[5]{r}{0.5\textwidth}
    \vspace{-5mm}
    \centering
    \scriptsize
    \captionof{table}{A Power$(\%)$ of a kernel two-sample test.}
    \begin{tabular}{l|c|c|c}
    \hline
     & \textbf{NDP}~\citep{dutordoir2023neural} & \textbf{SP-SGM}~\citep{phillips2022spectral} & \textbf{DBFS}  (Ours) \\
    \hline
    Quadratic & $\geq$ 99.0 & 5.4 $\pm$ 0.7 & \textbf{5.1 $\pm$ 0.4} \\
    Melbourne & 12.8 $\pm$ 0.4 & \textbf{5.3 $\pm$ 0.7} & 9.67 $\pm$ 0.45 \\
    Gridwatch & 16.3 $\pm$ 1.8 & 4.7 $\pm$ 0.5 & \textbf{3.9 $\pm$ 0.4} \\
    \hline
    \end{tabular}\label{tab:1d_generation}
\end{wraptable}

\paragraph{1D function generation. }
We conducted an experiment on a 1D function generation task, comparing our baseline methods~\citep{dutordoir2023neural, phillips2022spectral} on three datasets: Quadratic, Melbourne, and Gridwatch, following the setup from~\citep{phillips2022spectral}. For generative modeling, we set the initial distribution $\pi_0$ as $\mc{N}(0, Q)$ with RBF kernel for the covariance operator $Q$ and 
the terminal distribution as data distribution $\pi_T$, respectively. We employing the bridge matching algorithm in Alg~\ref{algorithm:BM}. For quantitative evaluation, we used the power of a kernel two-sample hypothesis test to distinguish between generated and ground-truth samples. Table~\ref{tab:1d_generation} shows that our method performs comparably to baseline infinite-dimensional methods. 
Additionaly, The generated samples compared to the ground-truth for each dataset are provided in Figure~\ref{figure:1d_generation}.

\begin{figure}[!t]
  \centering
    \centering
    \subfigure[$32^2$ (observed resolution)]{
        \includegraphics[width=0.315\textwidth]{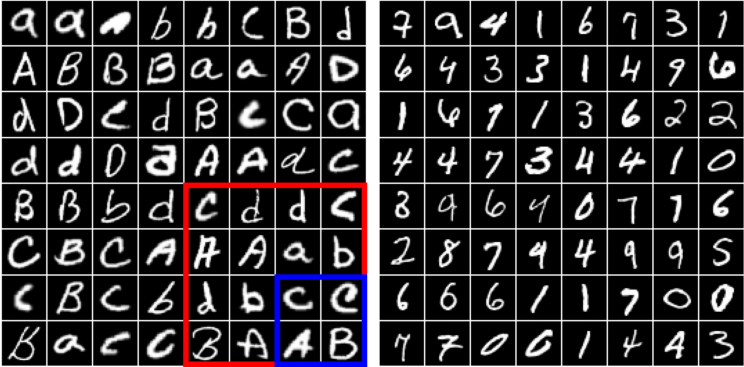}
    }
    \subfigure[$64^2$ (unseen resolution)]{
        \includegraphics[width=0.315\textwidth]{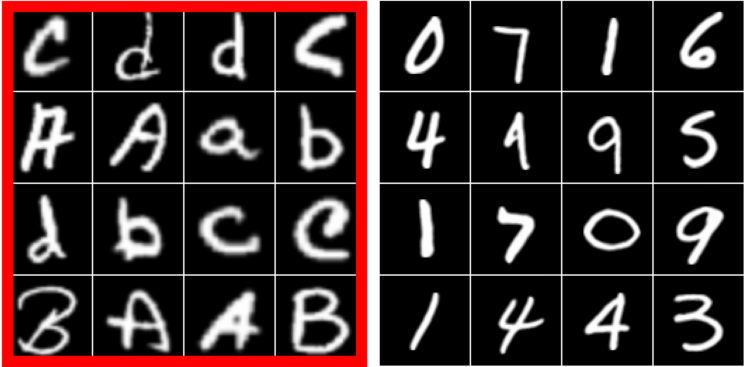}
    }
    \subfigure[$128^2$ (unseen resolution)]{
        \includegraphics[width=0.315\textwidth]{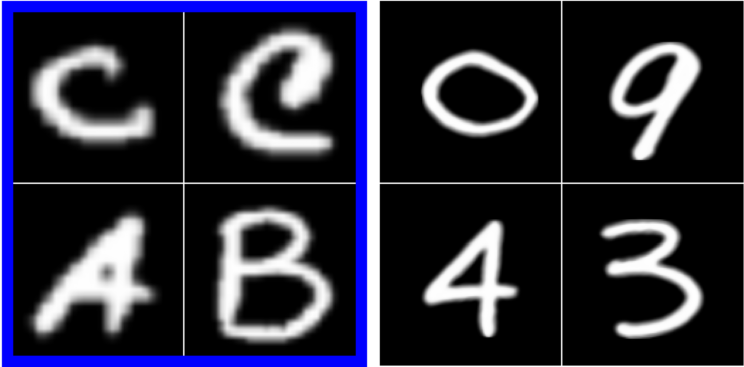}
    }
    \subfigure[$64^2$ (observed resolution)]{
        \includegraphics[width=0.475\textwidth]{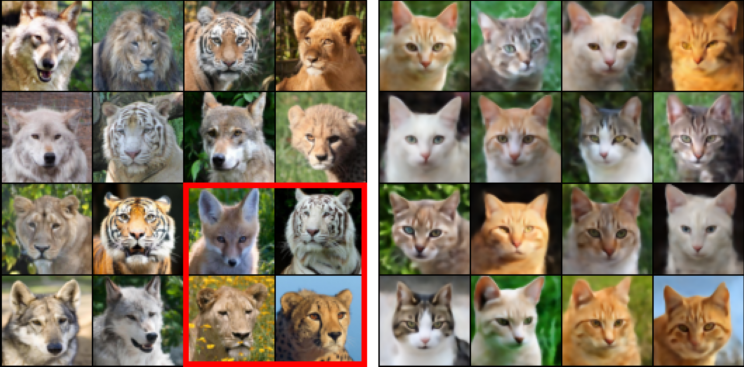}
    }
    \subfigure[$128^2$ (unseen resolution)]{
        \includegraphics[width=0.475\textwidth]{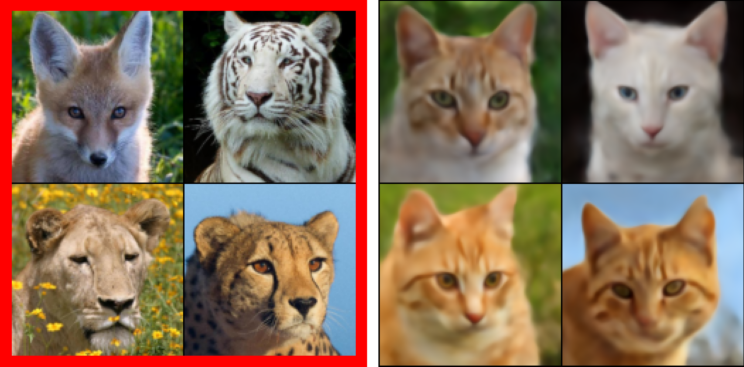}
    }
    \caption{Results on Unpaired image transfer task. \textbf{(Up)} EMNIST $\to$ MNIST \textbf{(Down)} AFHQ-64 Wild $\to$ Cat. (Left) Real data and (Right) generated samples from our model. For generation at unseen resolutions, the images within the \textcolor{red}{red} and \textcolor{blue}{blue} boxed initial conditions were upsampled (using bi-linear transformation) from the observed resolution ($32^2$) for EMNIST and ($64^2$) for AFHQ-64 Wild, respectively.}\label{figure:2d_generatior_afhq}
\vspace{-5mm}
\end{figure}

\paragraph{Unpaired Image Transfer. }\label{sec:bridge matching experiment}

We compare our proposed model with a finite (fixed)-dimensional baseline through an experiment on unpaired image transfer between the MNIST and EMNIST datasets at $32^2$ resolution, as well as wild and cat images from the AFHQ dataset~\citep{choi2020starganv2}, down-sampled to $64^2$ resolution (AFHQ-64). Specifically, we evaluate the performance of~\citep{peluchetti2023diffusion, shi2024diffusion} and our DBFS model. For a fair comparison, we follow the iterative training scheme of~\citep{peluchetti2023diffusion} based on the public repository\footnote{\url{https://github.com/stepelu/idbm-pytorch}, under MIT License.}, where two forward and backward control networks are trained alternately. For quantitative evaluation, we estimate the FID score between the generated samples and real datasets. We set $\sigma = 1$ for both~\citep{peluchetti2023diffusion} and our method, while FID scores for~\citep{shi2024diffusion} are taken from~\citep{debortoli2024schr}. 
\begin{wraptable}[11]{r}{0.25\textwidth}
\vspace{-4mm}
\centering
\scriptsize
    \caption{ Test FID on unpaired image transfer task. \textbf{(A)} EMNIST $\to$ MNIST, \textbf{(B)} AFHQ-64 Wild $\to$ Cat. }
    \setlength{\tabcolsep}{3.0pt}
\begin{tabular}{c|c c}
        \toprule
        Method & \textbf{(A)} & \textbf{(B)}\\
        \midrule
        IDBM~\citep{peluchetti2023diffusion} & 8.2 & -\\
        DSBM$^{\dag}$~\citep{shi2024diffusion} & \textbf{6.0} & \textbf{25.4} \\
        \midrule
        \textbf{DBFS} (Ours) & 9.1 & 44.4  \\
        \bottomrule
    \end{tabular}
\begin{tablenotes}
\item $\dag$  result from~\citep{debortoli2024schr}.
\end{tablenotes}\label{tab:2d_generation}
\end{wraptable}
Table~\ref{tab:2d_generation} shows that our method performs comparably to the finite-dimensional method. Additionally, we provide generated samples at various unseen resolutions in Figure~\ref{figure:2d_generatior_afhq} to demonstrate the resolution-invariant property of our infinite-dimensional models. We note that our method may have slightly lower FID scores compared to finite-dimensional baselines, which may align with the observation in~\citep{zhuang2023diffusion} that resolution-agnostic methods tend to have lower FID scores compared to resolution-specific ones. This could be because resolution-specific methods can incorporate domain-specific design features in their score networks. Samples generated from the reverse direction can be found in Figure~\ref{figure:2d_generatior_afhq_}.

\subsection{Bayesian Learning}
We validate our Bayesian learning algorithm for modeling functional data. Specifically, we will consider the temporal data as a function. We denote $\mb{Y}[\mb{O}] = \{\mb{Y}[\mb{p}_i]\}_{i=1}^{|\mb{O}|}$ as a collection of a function evaluation on a set of 1-dimensional observation grid $\mb{O} = \{\mb{p}_i\}_i^{|\mb{O}|}$ where $0 \leq \mb{p}_0 < \cdots < \mb{p}_{|\mb{O}|} \leq I$. We assume that each observed time series approximates a corresponding underlying continuous function $\mb{X} : \mathbb{R} \to \mathbb{R}^d$ as the number of observations increases $\ie \{\mb{Y}[\mb{p}_i]\}_{i=1}^{|\mb{O}| \to \infty}
 \approx \mathbf{X}$. For given observations $\mb{Y}[\mb{O}]$, our goal is to infer the posterior distribution on some set of unobserved grid $\mb{T} = [0, I] - \mb{O}$ $\ie \mathbb{P}(\mb{Y}[\mb{T}]|\mb{Y}[\mb{O}])$ and therefore modeling distribution over $\mb{X}$ on $[0, I].$ Please refer to Section~\ref{sec:modeling termpoal function_appx} for further details.

\paragraph{Functional Regression} 
To verify the effectiveness of the proposed DBFS in generating functions in the 1D domain, we conducted regression experiments using synthetic data generated from the Gaussian Process (GP) by following the experimental settings in \citep{lee2020bootstrapping}. Figure~\ref{regression_result} shows the sampled trajectories of a controlled dynamics $\mb{X}^{\alpha}_t$ for $t \in [0, \frac{T}{2}, T]$ trained on data generated from GP with RBF covariance kernel. The stochastic process begins from the deterministic function $\mb{X}^{\alpha}_0 = \mb{x}_0$ at $t=0$ and propagates towards the conditional posterior distribution $\mb{X}^{\alpha}_T \sim \mathbb{P}(\mb{Y}[\mb{T}]|\mb{Y}[\mb{O}])$ at $t=T$.

\begin{wraptable}[8]{r}{0.5\textwidth}
\centering
\scriptsize
\captionof{table}{Test imputation RMSE on Physionet.}
    \begin{tabular}{c|c|c|c}
        \toprule
        Model & $10\%$ & $50\%$ & $90\%$  \\
        \midrule
        CSDI$^{\dag}$~\citep{CSDI} & 0.60 $\pm$ 0.27 & 0.66 $\pm$ 0.06& 0.84 $\pm$ 0.04 \\
        DSDP-GP$^{\dag}$~\citep{bilovs2023modeling} & 0.52 $\pm$ 0.04 & 0.64 $\pm$ 0.05 & 0.81 $\pm$ 0.03 \\
        \midrule
        \textbf{DBFS} (Ours) & \textbf{0.50 $\pm$ 0.04 }& \textbf{0.61 $\pm$ 0.04 }& \textbf{0.77 $\pm$ 0.03 }\\
        \bottomrule
    \end{tabular}
\begin{tablenotes}
\item $\dag$ results from~\citep{bilovs2023modeling}.
\end{tablenotes}\label{table:physionet}
\vspace{-6mm}
\end{wraptable}
\paragraph{Imputation.} We evaluate our method against recent diffusion-based imputation method where the goal is to infer the conditional distribution $p(\mb{Y}[\mb{T}] | \mb{Y}[\mb{O}])$ of unobserved grid $\mb{Y}[\mb{T}]$ give observations $\mb{Y}[\mb{O}]$. CSDI~\citep{CSDI} utilizes DDPM~\citep{ho2020denoising} to learn the reverse process by treating the temporal data $\mb{Y}[\mb{O}]$ as a $\mathbb{R}^{|\mb{O}| \times d}$ dimensional feature. Extending this, DSDP-GP~\citep{bilovs2023modeling} enhances CSDI by incorporating noise derived from a stochastic process, instead of simple Gaussian noise. We maintained the same training setup as these models, including random seeds and the model architecture for control $\alpha^{\theta}$ in~\eqref{eq:controlled SDE}. Consistent with their methodology, we employed the Physionet dataset~\citep{Physionet}, which comprises medical time-series data collected on an hourly rate. Since the dataset inherently contains missing values, we selected certain degrees of observed values to create an imputation test set for evaluation. We then reported the results on this test set, varying the degrees of missingness. Table~\ref{table:physionet} shows that we outperform the previous methods even though it solely relies on forward propagation of controlled SDEs in~\eqref{eq:conditioned SDE} without denoising procedure.

\begin{table}[t]
\centering
\footnotesize
    \caption{Regression results. $``$context$"$ and $``$target$"$ refer to the log-likelhoods at $\mb{O}$ and $\mb{T}$, respectively.}
    \vspace{2mm}
\begin{tabular}{c|c c|c c|c c}
        \toprule
        \multirow{2}{*}{Method} & \multicolumn{2}{c}{RBF} & \multicolumn{2}{c}{Matérn 5/2} & \multicolumn{2}{c}{Periodic} \\
        \cmidrule{2-3} \cmidrule{4-5} \cmidrule{6-7}
        & context & target & context & target & context & target  \\
        \midrule
        CNP & 0.97 $\pm$ 0.01 & 0.45 $\pm$ 0.01& 0.85$\pm$ 0.01 & 0.21 $\pm$ 0.02 & -0.16$\pm$ 0.01 & -1.75 $\pm$ 0.02 \\
        NP & 0.90 $\pm$ 0.01 & 0.42 $\pm$ 0.01 & 0.77 $\pm$ 0.01 & 0.20 $\pm$ 0.03 & -0.18 $\pm$ 0.01 & -1.34 $\pm$ 0.03  \\
        \midrule
        \textbf{DBFS} & \textbf{1.02 $\pm$ 0.01 }& \textbf{0.47 $\pm$ 0.01 }& \textbf{0.93 $\pm$ 0.01 } & \textbf{0.25 $\pm$ 0.01 }  & \textbf{-0.15 $\pm$ 0.01 }  & -1.88 $\pm$ 0.02 \\
        \bottomrule
    \end{tabular}
\end{table}

\begin{figure}[t]
\begin{minipage}[b]{0.99\linewidth}
\begin{minipage}[b]{0.49\linewidth}
\includegraphics[width=1.\linewidth]{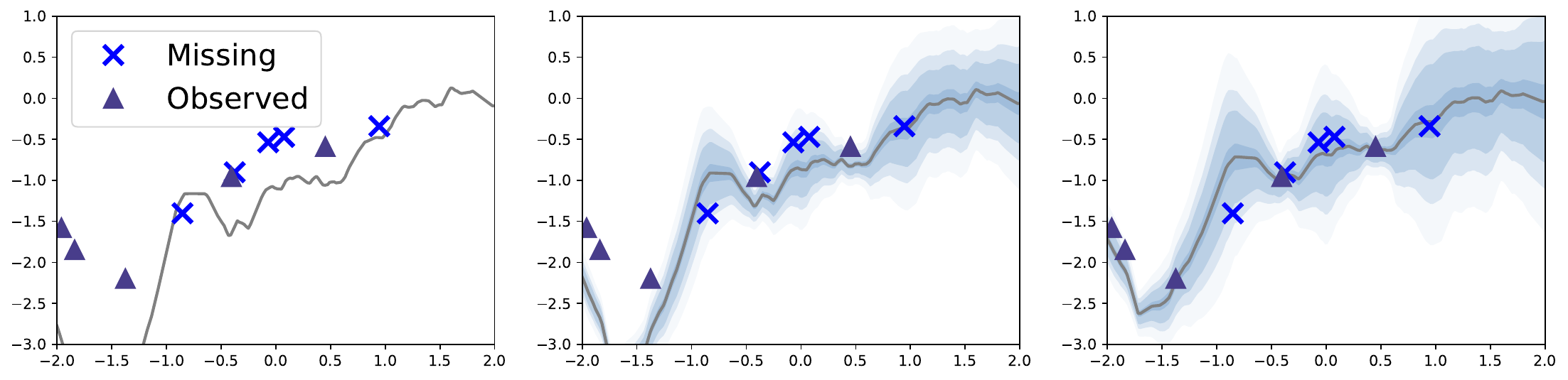}
\end{minipage}
\begin{minipage}[b]{0.49\linewidth}
\includegraphics[width=1.\linewidth]{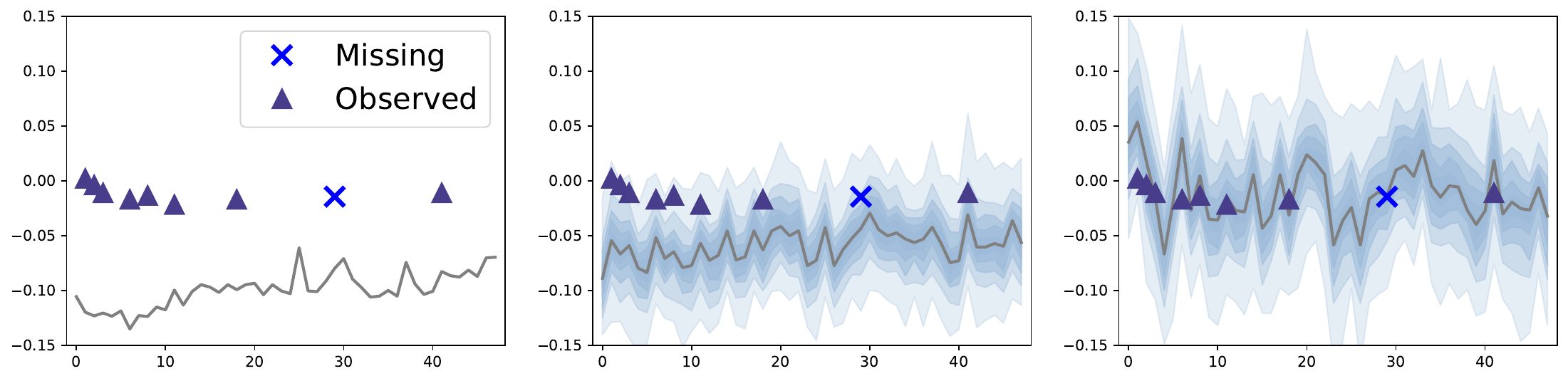}
\end{minipage}
\captionof{figure}{Sampled functions from a learned stochastic process $\mb{X}_t^{\alpha}$ evaluated on $[0, I]$ for $t \in [0, \frac{T}{2}, T]$. The grey line represents the mean function $\mathbb{E}[\mb{X}^{\alpha}_t]$ and the blue-shaded region represents the confidence interval. (Left) GP with RBF kernel. (Right) Physionet.}
\label{regression_result}
\vspace{-2mm}
\end{minipage}
\end{figure}

\section{Conclusion and Limitation}
In this work, we shed light on the application of the infinite-dimensional Doob's $h$-transform, exploiting SOC theory in infinite-dimensional spaces. By developing an explicit Radon-Nikodym density, we address the challenge posed by the absence of an equivalent to the Lebesgue measure. With specified cost functions for control objectives, it enables us to extend previous algorithm based on the finite-dimensional Doob's $h$-transform into infinite-dimensional function spaces, such as resolution-free unpaired image transfer and functional Bayesian posterior sampling.

Compared to the recent infinite-dimensional score-based diffusion model~\citep{lim2023scorebased}, our work restricts the coefficients for the stochastic dynamics to be time-independent. This limitation prevents us from defining a noise schedule for the diffusion model~\citep{zhou2024denoising}, which may hinder performance improvements. Additionally, in Bayesian learning, computing the gradient of the proposed training loss function \eqref{eq:relative entropy loss} can be computationally demanding. Thus, developing a more scalable algorithm would be an interesting direction for future work. Furthermore, as our model can be applied to any functional domain, we have limited our experiments to regular 1D and 2D domains, leaving the extension to more general domains for future work.

\section*{Broader Societal Impact}
Similar to other works in the literature, our proposed method holds the potential for both beneficial outcomes, such as automated data synthesis, and adverse implications, such as the deep fakes, depending on how it is used. We adhere to ethical standards for using our model in generative AI.

\section*{Acknowledgements}
This work was partly supported by Institute of Information \& communications Technology Planning \& Evaluation(IITP) grant funded by the Korea government(MSIT) (No.RS-2019-II190075, Artificial Intelligence Graduate School Program(KAIST), No.2022-0-00184, Development and Study of AI Technologies to Inexpensively Conform to Evolving Policy on Ethics, No. 2022-0-00612, Geometric and Physical Commonsense Reasoning based Behavior Intelligence for Embodied AI) and the National Research Foundation of Korea(NRF) grant funded by the Korea government(MSIT) (NRF-2021M3E5D9025030, NRF-2022R1A5A708390812, RS-2024-00410082).

\bibliographystyle{plain}
\bibliography{main}

\newpage
\appendix
\newpage
\onecolumn

\setcounter{figure}{0}
\setcounter{table}{0}
\setcounter{equation}{0}
\setcounter{theorem}{0}

\counterwithin{figure}{section}
\counterwithin{table}{section}
\counterwithin{equation}{section}
\counterwithin{theorem}{section}

\section{Appendix}

\subsection{Verification Theorem and Markov control}
For the derivation, we will use the following assumption
\begin{assumption}\label{assumptions value function}
The function $\mc{V}: [0, T]\times \mc{H} \to \mathbb{R}$ and its derivatives $D_{\mb{x}} \mc{V}, D_{\mb{xx}} \mc{V}, \partial_t \mc{V}$ are uniformly continuous on bounded subsets of $[0, T] \times \mc{H}$ and $(0, T) \times \mc{H}$, respectively.
    Moreover, for all $(t, \mb{x}) \in (0, T) \times \mc{H}
    $, there exists $C_1, C_2 >0$ such that
    \begin{equation}
        |\mc{V}(t, \mb{x})| + |D_{\mb{x}}\mc{V}(t, \mb{x})| + |\partial_t \mc{V}(t, \mb{x})| + \norm{D_{\mb{xx}} \mc{V}(t, \mb{x})} + |\mc{A}^{\star} D_{\mb{x}} \mc{V}(t, \mb{x})| \leq C_1(1+ |\mb{x}|)^{C_2},
        \end{equation}
        where $\mc{A}^{\star}$ is adjoint operator of $\mc{A}$.
\end{assumption}
The HJB equation~\eqref{eq:PDE HJB} can be derived by the following theorem.
\begin{theorem}\label{DPP APPX} Let assumptions~\ref{assumptions value function} hold and let the function $\mc{V}$ with $\mc{V}(T, \mb{x}) = G(\mb{x})$ satisfying the dynamic programming principle for every $0 < t < t' < T$, $x \in \mc{H}$
\begin{equation}
    \mc{V}(t, \mb{x}) = \mathbb{E}_{\mathbb{P}^{\alpha}}\left[\int_t^{t'} \left[l(s, \mb{X}^{\alpha}_{t}) + \psi(\alpha_s)\right]ds + \mc{V}(t', \mb{X}_{t'})  | \mb{X}^{\alpha}_t = \mb{x}\right].
\end{equation}
Then $\mc{V}$ is a solution of the following equation:
\begin{equation}\label{eq:HJB APPX}
    \partial_t \mc{V} + \mc{L}\mc{V} + \inf_{\alpha \in \mc{U}} \left[ \la \sigma Q^{1/2} _{\mb{x}}\mc{V},  \alpha \ra + \frac{1}{2}\norm{\alpha}^2 \right] = 0, \quad \mc{V}(T, \mb{x}) = G(\mb{x}),
\end{equation}
\end{theorem}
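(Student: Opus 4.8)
The plan is to recover \eqref{eq:HJB APPX} from the dynamic programming principle by the standard ``apply Itô, substitute into the DPP, and let $t' \downarrow t$'' argument, but carried out with the infinite-dimensional Itô formula for the mild solution of the controlled equation \eqref{eq:controlled SDE}. I would fix $(t, \mb{x})$, choose an arbitrary admissible control $\alpha \in \mc{U}$, and apply Itô's formula to $s \mapsto \mc{V}(s, \mb{X}^{\alpha}_s)$ on $[t, t']$. Writing $\mc{L}^{\alpha}\mc{V} := \mc{L}\mc{V} + \la \sigma Q^{1/2}\alpha, D_{\mb{x}}\mc{V}\ra_{\mc{H}}$ for the generator of \eqref{eq:controlled SDE}, with $\mc{L}\mc{V} = \la \mb{x}, \mc{A}^{\star} D_{\mb{x}}\mc{V}\ra_{\mc{H}} + \tfrac{1}{2}\text{Tr}[\sigma^2 Q D_{\mb{xx}}\mc{V}]$, this gives
\begin{equation}
\mc{V}(t', \mb{X}^{\alpha}_{t'}) = \mc{V}(t, \mb{x}) + \int_t^{t'}\left(\partial_s \mc{V} + \mc{L}^{\alpha}\mc{V}\right)(s, \mb{X}^{\alpha}_s)\,ds + \int_t^{t'}\la D_{\mb{x}}\mc{V}(s, \mb{X}^{\alpha}_s), \sigma\, d\mb{\tilde{W}}^Q_s\ra_{\mc{H}}.
\end{equation}

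Next I would take the conditional expectation given $\mb{X}^{\alpha}_t = \mb{x}$. The polynomial growth bound of Assumption~\ref{assumptions value function}, together with the square-integrability of $\alpha \in \mc{U}$, ensures the stochastic integral is a genuine martingale, so its expectation vanishes. Substituting the resulting identity into the dynamic programming principle and cancelling the common term $\mc{V}(t, \mb{x})$ leaves
\begin{equation}
0 = \mathbb{E}_{\mathbb{P}^{\alpha}}\left[\int_t^{t'}\left(l(s, \mb{X}^{\alpha}_s) + \psi(\alpha_s) + \partial_s \mc{V}(s, \mb{X}^{\alpha}_s) + \mc{L}^{\alpha}\mc{V}(s, \mb{X}^{\alpha}_s)\right)ds\right].
\end{equation}

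I would then divide by $(t' - t)$ and let $t' \downarrow t$. The uniform continuity of $\mc{V}$ and its derivatives (Assumption~\ref{assumptions value function}) and dominated convergence let me pass the limit inside the expectation; for a frozen constant control $\alpha_s \equiv a$ this yields the pointwise inequality $\partial_t \mc{V} + \mc{L}\mc{V} + l(t,\mb{x}) + \psi(a) + \la \sigma Q^{1/2}a, D_{\mb{x}}\mc{V}\ra_{\mc{H}} \geq 0$, the direction coming from the suboptimality side $\mc{V}(t, \mb{x}) \le \mathbb{E}_{\mathbb{P}^{\alpha}}[\,\cdot\,]$ of the DPP. Taking the infimum over $a \in \mc{H}$ gives ``$\geq 0$''. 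For the reverse inequality I would insert an $\varepsilon$-minimizing control into the DPP, repeat the $t' \downarrow t$ limit to obtain ``$\leq 0$'' up to $O(\varepsilon)$, and then send $\varepsilon \downarrow 0$; the two combine to the claimed equality. Specializing to $\psi = R = \tfrac{1}{2}\norm{\cdot}_{\mc{H}}^2$ and to the paper's setting where the state running cost is absent reproduces \eqref{eq:HJB APPX}, and the terminal condition $\mc{V}(T, \mb{x}) = G(\mb{x})$ is imposed by hypothesis.

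The main obstacle is the rigorous justification of the Itô formula in the first step. Because $\mc{A}$ is the (generally unbounded) generator of the semigroup $e^{t\mc{A}}$, the process $\mb{X}^{\alpha}$ is only a mild solution of \eqref{eq:controlled SDE}, and the naive drift term $\la \mc{A}\mb{x}, D_{\mb{x}}\mc{V}\ra_{\mc{H}}$ must be interpreted as $\la \mb{x}, \mc{A}^{\star} D_{\mb{x}}\mc{V}\ra_{\mc{H}}$, which requires $D_{\mb{x}}\mc{V}(t, \mb{x}) \in D(\mc{A}^{\star})$; this is precisely why Assumption~\ref{assumptions value function} explicitly controls $|\mc{A}^{\star} D_{\mb{x}}\mc{V}|$. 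I would therefore invoke the version of Itô's formula for mild solutions in Hilbert space (the Da Prato--Zabczyk / Fabbri--Gozzi--\'Swi\k{e}ch framework), whose hypotheses are met exactly by the regularity and polynomial-growth conditions of Assumption~\ref{assumptions value function}, and verify finiteness of the trace term $\text{Tr}[\sigma^2 Q D_{\mb{xx}}\mc{V}]$ using the trace-class property of $Q$ established in the preliminaries.
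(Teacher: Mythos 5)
Your proposal cannot be compared against a proof in the paper, because the paper does not actually supply one: its entire ``proof'' of this theorem is the single line citing Theorem~2.34 of \citep{fabbri2017stochastic}. What you have written is, in essence, the classical argument that underlies that citation --- apply the infinite-dimensional It\^o formula to $\mc{V}(s,\mb{X}^{\alpha}_s)$, substitute into the dynamic programming principle, divide by $t'-t$ and send $t'\downarrow t$, then obtain one inequality from frozen constant controls and the other from $\varepsilon$-optimal controls --- and your handling of the mild-solution subtlety (reading the drift term as $\la \mb{x}, \mc{A}^{\star}D_{\mb{x}}\mc{V}\ra_{\mc{H}}$, which is exactly why Assumption~\ref{assumptions value function} bounds $|\mc{A}^{\star}D_{\mb{x}}\mc{V}|$) is the right one. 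So your route is sound and is, if anything, more informative than what the paper provides; the paper buys brevity by deferring entirely to the reference.

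Two points deserve flagging. First, the DPP as literally printed in the theorem (equality for \emph{every} admissible $\alpha$, with no infimum) cannot be what is meant, since it would force all controls to incur the same cost; you silently and correctly read it as the inf-form of the DPP, but under that reading your intermediate display ``$0 = \mathbb{E}[\cdots]$'' should be the inequality ``$0 \le \mathbb{E}[\cdots]$'' for arbitrary $\alpha$, with near-equality only along $\varepsilon$-minimizing controls. Your subsequent prose treats the directions correctly, so this is a bookkeeping slip rather than a logical error, but a written-out proof must keep the two inequalities separate from the start. Second, the step you compress into ``repeat the $t'\downarrow t$ limit'' for $\varepsilon$-minimizing controls is where the genuine technical work hides: the $\varepsilon$-optimal control depends on $t'$, so passing to the limit inside the expectation requires moment and continuity estimates for $\mb{X}^{\alpha}$ that are uniform over the relevant family of controls --- obtained, for instance, by noting that near-optimality forces $\mathbb{E}\int_t^{t'}\frac{1}{2}\norm{\alpha_s}_{\mc{H}}^2\,ds$ to remain bounded, which in turn controls the excursion of the state. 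Without such uniformity, dominated convergence does not apply, because a square-integrable control can be arbitrarily large on a short interval. This is precisely the kind of detail the cited theorem of \citep{fabbri2017stochastic} takes care of; as a sketch your proposal is correct, but a self-contained proof would need to spell this out.
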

\begin{proof}
The proof can be found in~\citep[Theorem~2.34]{fabbri2017stochastic}. 
\end{proof}

\subsubsection{Proof of Lemma~\ref{lemma:verification}}
\begin{proof}
To begin the proof, we can formally compute the minimum of $F(D_{\mb{x}}\mc{V})$. Since~\citep{DaPrato1997}
\begin{equation}
F(\mb{x}) = \inf_{\alpha \in \mc{U}} \left[ \la \mb{x}, \alpha \ra + \frac{1}{2}\norm{\alpha}^2\right] = -\frac{1}{2}\norm{\mb{x}}^2.
\end{equation}
Therefore, $F(\sigma Q^{1/2} D_{\mb{x}}\mc{V})$ takes the infimum at $\alpha^{*} = -\sigma Q^{1/2}D_{\mb{x}}\mc{V}$. Next, applying Itô's formula~\citep[Proposition~1.165]{fabbri2017stochastic} to $\mc{V}$ and taking expectation on both sides, we get
\begin{align}\label{eq:V function Ito} \mathbb{E}_{\mathbb{P}^{\alpha}}^{t, \mb{x}}\left[\mc{V}(T, \mb{X}^{\alpha}_T)\right] = \mc{V}(t, \mb{x}) + \mathbb{E}_{\mathbb{P}^{\alpha}}^{t, \mb{x}}\left[\int_t^T \left(\partial_t \mc{V}(s, \mb{X}^{\alpha}_s) + \mc{L}\mc{V}(s, \mb{X}_s^{\alpha}) + \la \sigma Q^{1/2} D_{\mb{x}}\mc{V}(s, \mb{X}^{\alpha}_s),  \alpha_s \ra \right)ds\right],
\end{align}
where we denote $\mathbb{E}_{\mathbb{P}^{\alpha}}^{t, \mb{x}}\left[\cdot\right] = \mathbb{E}_{\mathbb{P}^{\alpha}}\left[\cdot | \mb{X}_t=\mb{x}\right]$
By incorporating the fact that $\mathcal{V}$ satisfies the equation in~(\ref{eq:HJB APPX}), we can derive the following by adding $\mathbb{E}_{\mathbb{P}^{\alpha}}^{t, \mb{x}}\left[\int_t^T \frac{1}{2}\norm{\alpha_s}^2 ds\right]$ to both terms. The LHS of the equation~\eqref{eq:V function Ito} becomes:
\begin{equation}    \mathbb{E}_{\mathbb{P}^{\alpha}}^{t, \mb{x}}\left[\underbrace{\mc{V}(T, \mb{X}^{\alpha}_T)}_{= G(\mb{X}^{\alpha}_T)}\right]  + \mathbb{E}_{\mathbb{P}^{\alpha}}^{t, \mb{x}}\left[\int_t^T \frac{1}{2}\norm{\alpha_s}^2 ds\right] = \mathcal{J}(t, \mb{x}, \alpha).
\end{equation}
And for the RHS of the equation~\eqref{eq:V function Ito}:
\begin{align}
    &\mc{V}(t, \mb{x}) + \mathbb{E}_{\mathbb{P}^{\alpha}}^{t, \mb{x}}\left[\int_t^T \frac{1}{2}\norm{\alpha_s}^2 ds\right] + \mathbb{E}_{\mathbb{P}^{\alpha}}^{t, \mb{x}}\bigg[\int_t^T \left(\partial_t \mc{V}(s, \mb{X}^{\alpha}_s) + \mathcal{L}\mc{V}(s, \mb{X}_s^{\alpha}) + \la \sigma Q^{1/2} D_{\mb{x}}\mc{V}(s, \mb{X}^{\alpha}_s),  \alpha_s \ra\right) ds \bigg]  \nonumber \\
    & = \mc{V}(t, \mb{x}) + \mathbb{E}_{\mathbb{P}^{\alpha}}^{t, \mb{x}}\left[\int_t^T \left(\partial_t \mc{V}(s, \mb{X}^{\alpha}_s) + \mc{L}\mc{V}(s, \mb{X}_s^{\alpha}) + \left[ \la \sigma Q^{1/2} D_{\mb{x}}\mc{V}(s, \mb{X}^{\alpha}_s), \alpha_s \ra + \frac{1}{2}\norm{\alpha_s}^2 \right] \right)ds\right]
    \nonumber \\
    & =  \mc{V}(t, \mb{x}) + \mathbb{E}_{\mathbb{P}^{\alpha}}^{t, \mb{x}}\left[\int_t^T \left(\left[ \la \sigma Q^{1/2} D_{\mb{x}}\mc{V}(s, \mb{X}^{\alpha}_s), \alpha_s \ra + \frac{1}{2}\norm{\alpha_s}^2\right] - F(\sigma Q^{1/2}D_{\mb{x}}\mc{V}(s, \mb{X}_s^{\alpha})) \right)ds\right],
\end{align}
where the last equation can be derived by adding and subtracting $\mathbb{E}_{\mathbb{P}^{\alpha}}^{t,\mb{x}} \left[ \int_t^T F(D_{\mb{x}}\mc{V}(s, \mb{X}_s^{\alpha}) ds \right]$ and incorporating the fact that $\mathcal{V}$ satisfies the equation in~(\ref{eq:HJB APPX}) again. Hence, we get the following equation:
\begin{equation}\label{eq: HJB derivation 1}
    \mathcal{J}(t, \mb{x}, \alpha) = \mc{V}(t, \mb{x}) + \mathbb{E}_{\mathbb{P}^{\alpha}}^{t, \mb{x}}\left[\int_t^T \left(\left[ \la  \sigma Q^{1/2}  D_{\mb{x}}\mc{V}(s, \mb{X}^{\alpha}_s), \alpha_s \ra + \frac{1}{2}\norm{\alpha_s}^2 \right] - F(\sigma Q^{1/2}D_{\mb{x}}\mc{V}(s, \mb{X}_s^{\alpha}))\right)ds\right]
\end{equation}
Since, by definition
\begin{equation}
    \left[ \la \sigma Q^{1/2} D_{\mb{x}}\mc{V}(s, \mb{X}^{\alpha}_s), \alpha_s \ra + \frac{1}{2}\norm{\alpha_s}^2 \right] 
 - F(D_{\mb{x}}\mc{V}(s, \mb{X}_s^{\alpha})) \geq  0
\end{equation}
Therefore, by taking the infimum over $\alpha \in \mc{U}$ in the RHS of~\eqref{eq: HJB derivation 1},
\begin{equation}
    \mathcal{J}(t, \mb{x}, \alpha) \geq \mc{V}(t, \mb{x})
\end{equation}
Moreover, since we already verified that the $F(\sigma Q^{1/2} D_{\mb{x}}\mc{V})$ has infimum at $\alpha^* = - \sigma Q^{1/2} D_{\mb{x}}\mc{V}$. Therefore, by choosing $u = \alpha^*$, we have
\begin{equation}
    \left[ \la  \sigma Q^{1/2} D_{\mb{x}}\mc{V}(s, \mb{X}^{u}_s), u_s \ra + \frac{1}{2}\norm{u_s}^2 \right] 
 - F( \sigma Q^{1/2} D_{\mb{x}}\mc{V}(s, \mb{X}_s^{u})) = 0
\end{equation}
Thus, we get
\begin{equation}
    \mathcal{J}(t, \mb{x}, u) = \mc{V}(t, \mb{x}).
\end{equation}
Therefore, together with~\eqref{eq: HJB derivation 1}, this implies that $(\alpha^{*}, \mb{X}^{\alpha^{*}})$ is optimal at $(t, \mb{x}) \in [0, T] \times \mc{H}$ This concludes the proof.
 \end{proof}

\subsubsection{Markov Control Formulation}\label{subsection:Markov Control}

Now, we introduce the following corollary that states the Markov control formulation.
\begin{corollary}[Markov Control~\citep{fabbri2017stochastic}]\label{corollary:markov control} Let us consider the measurable function $\phi_t : (t, T) \times \mc{H} \to \mc{U}$ which admit a mild solution $\mb{X}^{\phi_t}$ of the following closed-loop equation:
\begin{equation}
    d\mathbf{X}^{\phi_t}_{s} = \left[\mc{A}\mb{X}^{\phi_s}_s + \sigma Q^{1/2}\phi_s(s, \mb{X}^{\phi_s}_s)\right]ds + d\mb{W}_t^Q, \; \mb{X}_t = \mb{x}
\end{equation}
Then the pair $(\alpha^{\phi_t}, \mb{X}^{\phi_t})$, where the control $\alpha^{\phi_t}$ is defined by the Markov feedback law $\alpha^{\phi_t}_s = \phi(s, \mb{X}^{\phi_t}_s)$ is admissible and it is optimal at $(t, \mb{x})$ for all $s \in [t, T]$.
\end{corollary}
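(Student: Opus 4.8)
The plan is to realize the open-loop optimal control produced by the Verification Theorem (Lemma~\ref{lemma:verification}) as a genuine closed-loop Markov feedback, and then to invoke that same theorem to certify optimality; here I read the corollary's $\phi$ as the optimal feedback already singled out by the arginf computation, namely $\phi(s, \mb{x}) := -\sigma Q^{1/2} D_{\mb{x}}\mc{V}(s, \mb{x})$. With this choice the closed-loop equation in the statement is exactly the controlled SDE~\eqref{eq:controlled SDE} with $\alpha_s$ replaced by $\phi(s, \mb{X}^{\phi_t}_s)$, and by hypothesis this equation admits a mild solution $\mb{X}^{\phi_t}$, which supplies the state component of the pair.

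First I would verify admissibility, i.e. that $\alpha^{\phi_t}_s = \phi(s, \mb{X}^{\phi_t}_s)$ lies in $\mc{U}$. Adaptedness is immediate: a mild solution is $(\mc{F}_s)$-adapted and $\phi$ is measurable, so $s \mapsto \phi(s, \mb{X}^{\phi_t}_s)$ is progressively measurable with respect to the filtration generated by $\mb{W}^Q$. Square-integrability is where the standing regularity enters: Assumption~\ref{assumptions value function} bounds $|D_{\mb{x}}\mc{V}(s,\mb{x})|$ by $C_1(1+|\mb{x}|)^{C_2}$, so $\norm{\phi(s, \mb{x})}_{\mc{H}} \le \sigma \norm{Q^{1/2}}\, C_1 (1+|\mb{x}|)^{C_2}$, and combining this with the standard moment estimate $\mathbb{E}\sup_{s\in[t,T]}|\mb{X}^{\phi_t}_s|^{2C_2} < \infty$ for mild solutions of the Ornstein--Uhlenbeck type dynamics~\eqref{eq:controlled SDE} yields $\mathbb{E}\int_t^T \norm{\alpha^{\phi_t}_s}^2_{\mc{H}}\, ds < \infty$, hence $\alpha^{\phi_t} \in \mc{U}$.

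With admissibility in hand, optimality is essentially a one-line consequence of Lemma~\ref{lemma:verification}. By construction the pair $(\alpha^{\phi_t}, \mb{X}^{\phi_t})$ satisfies the pointwise identity $\alpha^{\phi_t}_s = -\sigma Q^{1/2} D_{\mb{x}}\mc{V}(s, \mb{X}^{\phi_t}_s) = \arginf_{\alpha\in\mc{U}}\big[\la \alpha, \sigma Q^{1/2} D_{\mb{x}}\mc{V}(s,\mb{X}^{\phi_t}_s)\ra + \tfrac12\norm{\alpha}^2_{\mc{H}}\big]$ for almost every $s\in[t,T]$ and $\mathbb{P}$-almost surely, which is precisely the hypothesis of the second half of Lemma~\ref{lemma:verification}. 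Therefore $\mc{V}(t,\mb{x}) = \mc{J}(t,\mb{x},\alpha^{\phi_t})$, and since the first half of that lemma already gives $\mc{V}(t,\mb{x}) \le \mc{J}(t,\mb{x},\alpha)$ for every $\alpha\in\mc{U}$, the pair $(\alpha^{\phi_t}, \mb{X}^{\phi_t})$ is optimal at $(t,\mb{x})$.

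I expect the main obstacle to be the admissibility step rather than the optimality step. The difficulty is that the feedback drift $\mb{x}\mapsto \sigma Q^{1/2}\phi(s,\mb{x})$ inherits only the polynomial growth and uniform-continuity-on-bounded-sets regularity of $D_{\mb{x}}\mc{V}$ from Assumption~\ref{assumptions value function}; it is not globally Lipschitz, so the needed moment bounds cannot simply be quoted from a Lipschitz existence theorem. Instead one must exploit the linear structure of~\eqref{eq:uncontrolled SDE}, representing the mild solution through the semigroup $e^{(s-t)\mc{A}}$ and the stochastic convolution against $\mb{W}^Q$, and control the nonlinear feedback term by a Gronwall/localization argument together with the trace-class property $\text{Tr}(Q)<\infty$, which guarantees finite moments of the Gaussian part. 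Verifying that these moments are finite up to order $2C_2$, so that $\alpha^{\phi_t}$ is genuinely square-integrable, is the technical heart; everything else then follows by directly quoting Lemma~\ref{lemma:verification} and the well-posedness hypothesis built into the corollary's statement.
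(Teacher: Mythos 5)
Your proposal is correct and takes essentially the approach the paper intends: the paper gives no proof of this corollary at all, deferring entirely to \citep[Chap~2.5.1]{fabbri2017stochastic}, and the argument there is precisely yours --- read $\phi$ as the Hamiltonian-minimizing feedback $-\sigma Q^{1/2}D_{\mb{x}}\mc{V}$, check that the closed-loop pair is admissible, and invoke the Verification Theorem (Lemma~\ref{lemma:verification}) to get $\mc{V}(t,\mb{x}) = \mc{J}(t,\mb{x},\alpha^{\phi_t}) \leq \mc{J}(t,\mb{x},\alpha)$. The one step you leave as a sketch (the moment estimates for the closed-loop mild solution that yield square-integrability of the feedback control under the polynomial-growth bound of Assumption~\ref{assumptions value function}) is exactly the technical content the paper outsources to that reference, so your reconstruction matches the intended proof.
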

Therefore, in the context of the initial value problem, such as in our case, we consider the form of the Markov control $\alpha_s := \alpha^{\phi_0}_s = \phi(s, \mb{X}^{\phi_0}_s)$ for $s \in [0, T]$. The proof and details can be found in~\citep[Chap~2.5.1]{fabbri2017stochastic}.

\subsection{Proof of Theorem~\ref{Theorem:Hopf-Cole}}
\begin{proof}
Let us consider the function $\mc{V}(t, v) = -\log h(t, v)$. 

\begin{equation}
    \partial_t h = -h \partial_t \mc{V}, \quad D h = -h D_{\mb{x}}\mc{V}, \quad D^2 h = h D_{\mb{x}}\mc{V} \otimes D_{\mb{x}}\mc{V} - h D_{\mb{xx}} \mc{V},
\end{equation}
Recall that $h$ satisfy the KBE in equation~\eqref{eq:PDE KBE}:
\begin{equation}
    \partial_t h + \mathcal{L} h = 0, \quad h_T = \tilde{G}.
\end{equation}
Since $h = e^{-\mc{V}}$, hence $\partial_t h = -\mathcal{L}h = \partial_t e^{-\mc{V}} = -\partial_t \mc{V} h$. Then,
\begin{align}
    \partial_t \mc{V} h &= \mathcal{L}h  \\
    & = \la D_{\mb{x}}h, \mc{A} \mb{X}_t \ra + \frac{1}{2}\text{Tr}\left[\sigma^2 D_{\mb{xx}} h Q \right] \\
    & = -\la h D_{\mb{x}}\mc{V}, \mc{A} \mb{X}_t  \ra  +\frac{1}{2}\text{Tr}\left[\sigma^2 \left( h D_{\mb{x}}\mc{V} \otimes D_{\mb{x}} \mc{V} - h D_{\mb{xx}} \mc{V}\right)Q \right] \\
    & = -\la h D_{\mb{x}}\mc{V}, \mc{A} \mb{X}_t  \ra +\frac{1}{2}\text{Tr}\left[\sigma^2 \left( h D_{\mb{x}} \mc{V} \otimes D_{\mb{x}} \mc{V}\right)Q\right] - \frac{1}{2}\text{Tr}\left[\sigma^2 h D_{\mb{xx}} \mc{V} Q\right].
\end{align}
We can simplify the last equation as  
\begin{align}
    \partial_t \mc{V} &= -\la D_{\mb{x}}\mc{V}, \mc{A} \mb{X}_t  \ra + \frac{1}{2}\text{Tr}\left[\sigma^2\left(D_{\mb{x}} \mc{V} \otimes D_{\mb{x}} \mc{V}\right)Q\right] - \frac{1}{2}\text{Tr}\left[\sigma^2 D_{\mb{xx}} \mc{V} Q\right] \\
    & = -\mathcal{L} \mc{V} + \frac{1}{2}\text{Tr}\left[\sigma^2\left(D_{\mb{x}} \mc{V}\otimes D_{\mb{x}} \mc{V}\right)Q\right].
\end{align}
Following~\citep{8618948}, the second term of RHS can be derived as follows
\begin{align}
    \frac{1}{2}\text{Tr}\left[\sigma^2\left(D_{\mb{x}} \mc{V}\otimes D_{\mb{x}} \mc{V}\right)Q\right] &= \frac{1}{2}\sum_{k \in \mathbb{N}} \la \sigma^2 (D_{\mb{x}} \mc{V} \otimes D_{\mb{x}}\mc{V}) Q \phi^{(k)}, \phi^{(k)}\ra\\
    & = \frac{1}{2}\sum_{k \in \mathbb{N}} \la \sigma^2  D_{\mb{x}} \mc{V} \la D_{\mb{x}}\mc{V}, Q \phi^{(k)}\ra, \phi^{(k)}  \ra  \\
    & = \frac{1}{2}\sum_{k \in \mathbb{N}} \la \sigma^2  D_{\mb{x}}\mc{V}, Q \phi^{(k)}\ra \la D_{\mb{x}}\mc{V}, \phi^{(k)}  \ra  \\
    & = \frac{1}{2}\sum_{k \in \mathbb{N}} \la \sigma^2  Q D_{\mb{x}}\mc{V}, \phi^{(k)}\ra \la D_{\mb{x}}\mc{V}, \phi^{(k)}  \ra  \\
    & = \frac{1}{2} \la \sigma^2  D_{\mb{x}}\mc{V}, Q D_{\mb{x}}\mc{V} \ra \\
    & = \frac{1}{2} \norm{\sigma Q^{1/2} D_{\mb{x}}\mc{V}}_{\mc{H}}^2
\end{align}
Therefore, combining the above results, we have
\begin{equation}\label{eq:PDE HJB APPX}
\partial_t \mc{V}  + \mathcal{L} \mc{V} - \frac{1}{2} \norm{\sigma Q^{1/2 }D_{\mb{x}}\mc{V}}_{\mc{H}}^2, \quad \mc{V}_T = G.
\end{equation}
Since~\eqref{eq:PDE HJB APPX} coincides with~\eqref{eq:PDE HJB} with $\psi(\cdot) := \frac{1}{2}\norm{\cdot}^2_{\mc{H}}$, this concludes the proof.
\end{proof}

\subsection{Proof of Theorem~\ref{Theorem:infinite dimensional density}}
\begin{proof}
A proof of Theorem~\ref{Theorem:infinite dimensional density} is based on~\citep[Chap. 10.3]{da2002second}. Since $Q_{\infty} = -\frac{1}{2}Q \mc{A}^{-1}$ is a trace class, we define a trace class operator $\Theta_t$ as follows:
\begin{equation}
    \Theta_t = Q_{\infty}^{1/2} (Q^{-1/2}_t e^{t\mc{A}})^{*}(Q_{\infty}^{-1/2} Q^{1/2}_t)^{*} (Q_{\infty}^{1/2} (Q^{-1/2}_t e^{t\mc{A}})^{*}(Q_{\infty}^{-1/2} Q^{1/2}_t)^{*})^{*},
\end{equation}
for all $t \geq 0$. Since $Q_t = Q_{\infty } - e^{t\mc{A}}Q_{\infty}e^{t\mc{A}^{*}}$, we can rewrite $Q_t$ in terms of $\Theta_t$,
\begin{align}
    Q_t &= Q_{\infty} - e^{t\mc{A}}Q_{\infty}e^{t\mc{A}^{*}}\\
    & = Q^{1/2}_{\infty}\left[1 - (Q^{-1/2}_{\infty} e^{t\mc{A}}) Q_{\infty}(Q^{-1/2}_{\infty} e^{t\mc{A}})^{*}\right]Q^{1/2}_{\infty} \\
    & = Q^{1/2}_{\infty}(1-\Theta_t)Q^{1/2}_{\infty}.
\end{align}
Thus, we have $(1-\Theta_t)\mb{x} = Q^{-1/2}_{\infty}Q_t Q^{-1/2}_{\infty} \mb{x}$ for all $\mb{x} \in \mc{H}_0$. It implies that $\la (1-\Theta_t)\mb{x}, \mb{x}\ra_{\mc{H}_0} \geq 0$, the non-negativity of $(1-\Theta_t)$. Moreover it also implies that $(1-\Theta_t)^{-1}$ is invertible:
\begin{equation}\label{eq:one minus theta_t}
    (1-\Theta_t)^{-1} = (Q^{-1/2}_t Q^{1/2}_{\infty})^{*} Q^{-1/2}_t Q^{1/2}_{\infty}.
\end{equation}
Consequently, it yields the following formula~\citep[Proposition. 1.3.11]{da2002second}
\begin{align}\label{eq:proposition1.3.11}
    q_t(0, \mb{y}) = \textit{det}(1-\Theta)^{-1/2} \exp\left[-\frac{1}{2}\la \Theta_t (1-\Theta_t)^{-1} Q^{-1/2}_{\infty}\mb{y}, Q^{-1/2}_{\infty}\mb{y}\ra_{\mc{H}}\right].
\end{align}
Now, for the general case, by using the chain rule, we have:
\begin{equation}\label{eq:explicit density}
    q_t(\mb{x}, \mb{y}) = \frac{d\mc{N}_{e^{t\mc{A}}\mb{x}, Q_t}}{d\mc{N}_{0, Q_t}} \frac{d\mc{N}_{0, Q_t}} {d\mc{N}_{0, Q_{\infty}}}(\mb{y}) = \frac{d\mc{N}_{e^{t\mc{A}}\mb{x}, Q_t}}{d\mc{N}_{0, Q_t}}(\mb{y}) q_t(0, \mb{y}),
\end{equation}
and utilizing Cameron-Martin theorem~\citep[Theorem. 1.3.6]{da2002second}, we get:
\begin{align}
    & \frac{d\mc{N}_{e^{t\mc{A}}\mb{x}, Q_t}}{d\mc{N}_{0, Q_t}}(\mb{y}) = \exp \left[ \la  Q^{-1/2}_t e^{t\mc{A}}\mb{x}, Q^{-1/2}_t \mb{y} \ra_{\mc{H}} - \frac{1}{2}\norm{Q^{-1/2}_t e^{t\mc{A}}\mb{x}}^2_{\mc{H}}\right] \\
    & = \exp \left[ \la  Q^{-1/2}_t e^{t\mc{A}}\mb{x}, Q^{-1/2}_t \mb{y} \ra_{\mc{H}} - \frac{1}{2} \la Q^{-1/2}_t e^{t\mc{A}}\mb{x}, Q^{-1/2}_t e^{t\mc{A}}\mb{x} \ra_{\mc{H}}\right] \\
    & = \exp \left[ \la  Q_{\infty}^{1/2} Q^{-1}_t e^{t\mc{A}}\mb{x}, Q_{\infty}^{-1/2} \mb{y} \ra_{\mc{H}} - \frac{1}{2} \la Q^{1/2}_{\infty}Q^{-1}_t e^{t\mc{A}}\mb{x}, Q^{-1/2}_{\infty} e^{t\mc{A}}\mb{x} \ra_{\mc{H}}\right] \\
    & \stackrel{(i)}{=} \exp \left[ \la (1-\Theta_t)^{-1} Q^{-1/2}_{\infty} e^{t\mc{A}}\mb{x}, Q^{-1/2}_{\infty} \mb{y} \ra_{\mc{H}} - \frac{1}{2} \la (1-\Theta_t)^{-1} Q^{-1/2}_{\infty} e^{t\mc{A}}\mb{x},Q^{-1/2}_{\infty}  e^{t\mc{A}}\mb{x} \ra_{\mc{H}}\right] \label{eq:cameron-martin theorem}
\end{align}
where $(i)$ follows from~\eqref{eq:one minus theta_t}, $(1-\Theta_t)^{-1} Q^{-1/2}_{\infty} = (Q^{-1/2}_t Q^{1/2}_{\infty})^{*} Q^{-1/2}_t = Q_{\infty}^{1/2} Q^{-1}_t$. Thus, by substituting \eqref{eq:proposition1.3.11} and \eqref{eq:cameron-martin theorem} into \eqref{eq:explicit density}, we obtain the following result:
\begin{align}
     q_t(\mb{x}, \mb{y}) &= \textit{det}(1-\Theta)^{-1/2} \exp\bigg[-\frac{1}{2}\la \Theta_t (1-\Theta_t)^{-1} Q^{-1/2}_{\infty}\mb{y}, Q^{-1/2}_{\infty}\mb{y}\ra_{\mc{H}} \\
     & + \la (1-\Theta_t)^{-1} Q^{-1/2}_{\infty} e^{t\mc{A}}\mb{x}, Q^{-1/2}_{\infty} \mb{y} \ra_{\mc{H}} - \frac{1}{2} \la (1-\Theta_t)^{-1} Q^{-1/2}_{\infty} e^{t\mc{A}}\mb{x},Q^{-1/2}_{\infty}  e^{t\mc{A}}\mb{x} \ra_{\mc{H}} \bigg].
\end{align}
It concludes the proof.
\end{proof}

\subsection{Derivation of Example~\ref{example:diffusion bridge}}
For a diffusion bridge process, let us define the $h$ function as:
\begin{equation}
    h(t, T, \mb{x}_t, \mb{x}_T) =  \mathbb{E}_{\mathbb{P}}\left[\tilde{G}(\mb{X}_T, \mb{x}_T)|\mb{X}_t=\mb{x}_t\right] = \int \tilde{G}(\mb{z}, \mb{x}_T) d\mc{N}_{e^{(T-t)\mc{A}}\mb{x}_t, Q_{T-t}}(\mb{z})
\end{equation}

If we choose $\tilde{G}(\mb{x}, \mb{y}) = \mb{1}_{d\mb{y}}(\mb{x})$. Then, for any $\mb{y} \in \mc{H}$ and $t \in [0, T]$,  Theorem~\ref{Theorem:infinite dimensional density} implies that
\begin{align}
    h(0, t, \mb{x}_0, \mb{y}) &= \int \tilde{G}(\mb{z}, \mb{y}) d\mc{N}_{e^{t\mc{A}}\mb{x}_0, Q_t}(\mb{z}) \\
    & = \int \tilde{G}(\mb{z}, \mb{y}) \frac{d\mc{N}_{e^{t\mc{A}}\mb{x}_0, Q_t}}{d\mc{N}_{0, Q_{\infty}}}(\mb{z})  d\mc{N}_{0, Q_{\infty}}(\mb{z}) \\
    & = \int \tilde{G}(\mb{z}, \mb{y}) q_t(\mb{x}, \mb{z})  d\mc{N}_{0, Q_{\infty}}(\mb{z})  \\
    & = q_t(\mb{x}, \mb{y}).
\end{align}
Moreover, with an eigen-system of $\mc{H}$, $\{(\lambda^{(k)}, \phi^{(k)}) \in \mathbb{R} \times \mc{H} : k \in \mathbb{N}\}$, $q_t$ can be represented as~\citep{doi:10.1080/07362999308809319}:
\begin{align}\label{eq:orthonormal_1}
    q_t(\mb{x}, \mb{y}) = \prod_{k \in \mathbb{N}} q_t^{(k)} (\mb{x}^{(k)}, \mb{y}^{(k)}),
\end{align}
where for each coordinated $k$, $q_t^{(k)} (\mb{x}^{(k)}, \mb{y}^{(k)})$ has following representation:
\begin{align}\label{eq:orthonormal_2}
    & q_t^{(k)} (\mb{x}^{(k)}, \mb{y}^{(k)}) = \left(\frac{\lambda^{(k)}}{2a_k}(1 - e^{-2a_k t})\right)^{-1/2} \exp \left[- \frac{(\mb{y}^{(k)} - e^{-a_k t}\mb{x}^{k)})^2}{2\lambda^{(k)}(1 - e^{-2a_k t})} + \frac{(\mb{y}^{(k)})^2}{2\lambda^{(k)}}\right] \\
    & \mc{A}\phi^{(k)} = -a_k \phi^{(k)}, \quad Q \phi^{(k)} = \lambda^{(k)}\phi^{(k)}, \quad \mb{x}^{(k)} = \la \mb{x}, \phi^{(k)}\ra_{\mc{H}}, \quad \mb{y}_k = \la \mb{y}, \phi^{(k)}\ra_{\mc{H}}.\label{eq:orthonormal_3}
\end{align}
Therefore, since $D_{\mb{x}} \log h(t, T, \mb{x}, \mb{x}_T) = D_\mb{x}  \log q_{T-t}(\mb{x}_t, \mb{x}_T)$, by projecting $D_\mb{x}  \log q_{T-t}(\mb{x}_t, \mb{x}_T)$ to each coordinate $\phi^{(k)}$, we obtain the following results:
\begin{equation}\label{eq:orthonormal_4}
    \frac{d}{d\mb{x}^{(k)}} \log q_{T-t}^{(k)}(\mb{x}_t^{(k)}, \mb{x}_T^{(k)}) = \frac{2a_k e^{-a_k (T-t)}}{\lambda^{(k)}(1 - e^{-2a_k (T-t)})}(\mb{x}^{(k)}_T - e^{-a_k (T-t)}\mb{x}^{(k)})
\end{equation}

\subsection{Deriving Divergence Between Path Measures}\label{sec:Deriving Divergence between Path Measures_appx}
Here we present an infinite-dimensional generalization of Girsanov's theorem~\citep[Theorem~10.14]{da2014stochastic}, which plays a crucial role in estimating the divergence between two path measures discussed in Sec~\ref{sec:Approximating path measures}. The theorem is formulated as follows:
\begin{theorem}[Girsanov's Theorem in $\mc{H}$] Let $\gamma$ be a $\mc{H}_0$-valued $\mc{F}_t$-predictable process such that
\begin{equation}
    \mathbb{P}\left(\int_0^T \norm{\gamma_s}_{\mc{H}_0}^2 ds < \infty \right) = 1, \quad \mathbb{E}\left[\exp\left( \int_0^t \la \gamma_s, d\mb{W}^Q_t\ra_{\mc{H}_0} - \frac{1}{2} \int_0^T  \norm{\gamma_s}_{\mc{H}_0}^2 dt\right)\right]=  1.
\end{equation}
Then the process $\mb{\tilde{W}}^Q_t = \mb{W}^Q_t - \int_0^T \gamma_s ds$ is a $Q$-Wiener process with respect to $\{\mc{F}_t\}_{t \geq 0}$ on the probability space $(\Omega, \mc{F}, \mathbb{Q})$ where
\begin{equation}\label{eq:Girsanov_1_appx}
    d\mathbb{Q} = \exp\left( \int_0^t \la \gamma_s, d\mb{W}^Q_t\ra_{\mc{H}_0} - \frac{1}{2} \int_0^T  \norm{\gamma_s}_{\mc{H}_0}^2 dt\right) d\mathbb{P}.
\end{equation}
Or we can derive alternative formulation, by substituting $\mb{W}^Q_t = \mb{\tilde{W}}^Q_t +  \int_0^T \gamma_s ds$ to~\eqref{eq:Girsanov_1_appx},
\begin{equation}\label{eq:Girsanov_2_appx}
    d\mathbb{Q} = \exp\left( \int_0^t \la \gamma_s, d\mb{\tilde{W}}^Q_t\ra_{\mc{H}_0} + \frac{1}{2} \int_0^T  \norm{\gamma_s}_{\mc{H}_0}^2 dt\right) d\mathbb{P}.
\end{equation}
\end{theorem}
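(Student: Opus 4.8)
The plan is to follow the classical martingale route to Girsanov's theorem, adapted to the $Q$-Wiener setting. First I would introduce the stochastic (Doléans--Dade) exponential
\begin{equation}
\rho_t = \exp\left( \int_0^t \la \gamma_s, d\mb{W}^Q_s\ra_{\mc{H}_0} - \frac{1}{2}\int_0^t \norm{\gamma_s}_{\mc{H}_0}^2 ds \right),
\end{equation}
and observe that Itô's formula gives $d\rho_t = \rho_t \la \gamma_t, d\mb{W}^Q_t\ra_{\mc{H}_0}$, so $\rho$ is a nonnegative local martingale, hence a supermartingale with $\mathbb{E}[\rho_t] \le 1$. The hypothesis $\mathbb{E}[\rho_T] = 1$ upgrades this to a true martingale on $[0,T]$, since a supermartingale with constant expectation is a martingale. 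Consequently $d\mathbb{Q} = \rho_T\, d\mathbb{P}$ defines a probability measure equivalent to $\mathbb{P}$ on $\mc{F}_T$, and the filtered Bayes criterion holds: an adapted process $M$ is a $\mathbb{Q}$-martingale if and only if $M\rho$ is a $\mathbb{P}$-martingale.

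Next I would invoke the Lévy characterization of the $Q$-Wiener process: $\mb{\tilde{W}}^Q$ is a $Q$-Wiener process under $\mathbb{Q}$ provided it is a continuous $\mc{H}$-valued $\mathbb{Q}$-martingale whose tensor quadratic variation on $[0,t]$ equals $tQ$. Continuity and the quadratic variation are inherited immediately, because subtracting the finite-variation drift $\int_0^t \gamma_s\, ds$ alters neither the continuity of the paths nor the bracket of the martingale part, and quadratic variation is invariant under an equivalent change of measure. The only substantive point is therefore the $\mathbb{Q}$-martingale property of $\mb{\tilde{W}}^Q$.

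To establish it, I would fix $u \in \mc{H}$ and show that $\la u, \mb{\tilde{W}}^Q_t\ra_{\mc{H}}\, \rho_t$ is a $\mathbb{P}$-local martingale by Itô's product rule. Writing $d\la u, \mb{\tilde{W}}^Q_t\ra_{\mc{H}} = \la u, d\mb{W}^Q_t\ra_{\mc{H}} - \la u, \gamma_t\ra_{\mc{H}}\, dt$ together with $d\rho_t = \rho_t \la \gamma_t, d\mb{W}^Q_t\ra_{\mc{H}_0}$, the drift $-\rho_t \la u, \gamma_t\ra_{\mc{H}}\, dt$ must cancel against the contribution of the cross-variation between the scalar process $\la u, \mb{W}^Q_t\ra_{\mc{H}}$ and $\rho_t$. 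The key computation is that the covariation between $\la u, \mb{W}^Q_t\ra_{\mc{H}}$ and the stochastic integral $\int_0^t \la \gamma_s, d\mb{W}^Q_s\ra_{\mc{H}_0}$ evaluates to $\int_0^t \la u, \gamma_s\ra_{\mc{H}}\, ds$; this follows from the eigen-expansion $\mb{W}^Q_t = \sum_k Q^{1/2}\phi^{(k)} w^{(k)}_t$ and the identity $\la \cdot, \cdot\ra_{\mc{H}_0} = \la Q^{-1/2}\cdot, Q^{-1/2}\cdot\ra_{\mc{H}}$, which reconciles the two inner products: expanding $\la \gamma_s, d\mb{W}^Q_s\ra_{\mc{H}_0} = \sum_k \la Q^{-1/2}\gamma_s, \phi^{(k)}\ra_{\mc{H}}\, dw^{(k)}_s$ and pairing with $\la u, \mb{W}^Q_t\ra_{\mc{H}} = \sum_k \la Q^{1/2}u, \phi^{(k)}\ra_{\mc{H}}\, w^{(k)}_t$ gives, by Parseval, $\la Q^{1/2}u, Q^{-1/2}\gamma_s\ra_{\mc{H}} = \la u, \gamma_s\ra_{\mc{H}}$. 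The two drift terms then cancel exactly, so $\la u, \mb{\tilde{W}}^Q_t\ra_{\mc{H}}\, \rho_t$ is a $\mathbb{P}$-local martingale and, after a localization and uniform-integrability argument, a genuine martingale; by the Bayes criterion $\la u, \mb{\tilde{W}}^Q_t\ra_{\mc{H}}$ is a $\mathbb{Q}$-martingale for every $u$, which is exactly what the characterization requires.

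I expect the reconciliation of the $\mc{H}_0$ and $\mc{H}$ pairings in the cross-variation computation to be the main obstacle — keeping careful track of $Q$, $Q^{1/2}$, and the two inner products so that the drift cancellation is exact — together with the technical upgrade of the local martingale $\rho$ to a true martingale, which is precisely where the exponential-moment (Novikov-type) hypothesis enters. Finally, the alternative representation~\eqref{eq:Girsanov_2_appx} follows by substituting $\mb{W}^Q_t = \mb{\tilde{W}}^Q_t + \int_0^t \gamma_s\, ds$ into the density~\eqref{eq:Girsanov_1_appx} and simplifying the stochastic integral.
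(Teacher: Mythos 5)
Your proposal is correct in outline, and it reconstructs essentially the standard argument: the paper itself offers no proof of this theorem --- its ``proof'' is the single line citing \citep[Theorem~10.14]{da2014stochastic} --- and the route you describe (Dol\'eans--Dade exponential, the expectation-one hypothesis upgrading the supermartingale to a true martingale, the Bayes criterion, and the L\'evy characterization of the $Q$-Wiener process with the $\mc{H}_0$/$\mc{H}$ drift-cancellation computation) is precisely the proof found in that reference. Two small points worth tightening if you write this out in full: for the L\'evy characterization you should verify the tensor quadratic variation, e.g.\ by checking that $\la u, \mb{\tilde{W}}^Q_t\ra_{\mc{H}}\la v, \mb{\tilde{W}}^Q_t\ra_{\mc{H}} - t\la Qu, v\ra_{\mc{H}}$ is a $\mathbb{Q}$-martingale for all $u, v \in \mc{H}$, not only the scalar martingale property; and the localization/uniform-integrability step you defer is where the integrability hypotheses genuinely get used, so it deserves an explicit argument (e.g.\ via stopping times and the fact that the Bayes criterion already holds for local martingales). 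Note also that you silently corrected the paper's typo: the drift subtracted must be $\int_0^t \gamma_s\, ds$, not $\int_0^T \gamma_s\, ds$, which is the right reading of the statement.
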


Now, we will apply Girsanov's theorem to path measures related to~\eqref{eq:uncontrolled SDE} and~\eqref{eq:conditioned SDE}. Let $\mathbb{Q} := \mathbb{P}^{\alpha}$ in~\eqref{eq:Girsanov_2_appx}. Then $\gamma_s := Q^{1/2}\alpha_s$ and we get
\begin{equation}
    \frac{d\mathbb{P}^{\alpha}}{d\mathbb{P}} = \exp\left( \int_0^t \la \alpha, d\mb{\tilde{W}}^Q_t\ra_{\mc{H}_0} + \frac{1}{2} \int_0^T  \norm{Q^{1/2}\alpha_s}_{\mc{H}_0}^2 dt\right).
\end{equation}
\begin{proof}
    Proof can be founded in~\citep[Theorem~10.14]{da2014stochastic}
\end{proof}
Since our goal is find an optimal control $\alpha^{\star}$ such that $\mb{X}_T^{\alpha^{\star}}$ satisfying terminal constraints which is represented by function $\tilde{G}$ in~\eqref{eq:h function}, we may define our target path measure $\mathbb{P}^{\star}$ as $\frac{d\mathbb{P}^{\star}}{d\mathbb{P}}  = \frac{1}{\mc{Z}}\tilde{G}(\cdot) $~\citep{richter2024improved}, where $\mc{Z} = \mathbb{E}_{\mathbb{P}}\left[\tilde{G}(\mb{X}_T)\right]$ Then, we can compute the logarithm of Radon Nikodym derivative as
\begin{align}
    \log \frac{d\mathbb{P}^{\alpha}}{d\mathbb{P}^{\star}} &= \log \frac{d\mathbb{P}^{\alpha}}{d\mathbb{P}} + \log \frac{d\mathbb{P}}{d\mathbb{P}^{\star}} - \mc{Z} \\
    & \approx \int_0^t \la \alpha, d\mb{\tilde{W}}^Q_t\ra_{\mc{H}_0} + \frac{1}{2} \int_0^T  \norm{Q^{1/2}\alpha_s}_{\mc{H}_0}^2 dt + G(\cdot).
\end{align}
Since $\mb{\tilde{W}}^Q_t$ is $Q$-Wiener process on $\mathbb{P}^{\alpha}$, we can compute the relative entropy loss in Sec~\ref{sec:Approximating path measures}:
\begin{equation}\label{eq:relative entropy loss_appx}
    D_{\text{ref}}(\mathbb{P}^{\alpha}|\mathbb{P}^{\star}) = \mathbb{E}_{\mathbb{P}^{\alpha}}\left[\log \frac{d\mathbb{P}^{\alpha}}{d\mathbb{P}^{\star}}\right] = \mathbb{E}_{\mathbb{P}^{\alpha}}\left[\frac{1}{2} \int_0^T  \norm{\alpha_s}_{\mc{H}}^2 dt + G(\mb{X}_T^{\alpha}) \right],
\end{equation}
where we denote $\norm{(\cdot)}_{\mc{H}_0}^2 = \norm{Q^{-1/2}(\cdot)}_{\mc{H}}^2 $
This representation matches with~\eqref{eq:objective functional} when $R(\cdot):=\frac{1}{2}\norm{\cdot}_{\mc{H}}^2$. Similarly, the cross entropy loss in~\eqref{eq:cross entropy loss} can be derived by set $\mathbb{Q}:=\mathbb{P}^{\star}$ and $\mathbb{P} := \mathbb{P}^{\alpha}$ in~\eqref{eq:Girsanov_2_appx}. Then $\gamma_s$ can be defined as~\eqref{eq:cross entropy gamma} and $\mb{\hat{W}}^Q_t = \mb{\tilde{W}}^Q_t + \int_0^T \gamma_s(\theta) ds$ is a $Q$-Wiener process on $\mathbb{P}^{\star}$ where $\mathbb{P}^{\alpha}$ satisfies the Radon-Nikodym derivative:
\begin{equation}
    \frac{d\mathbb{P}^{\star}}{d\mathbb{P}^{\alpha}} = \exp\left( \int_0^t \la \gamma_s(\theta), d\mb{\hat{W}}^Q_t\ra_{\mc{H}_0} + \frac{1}{2} \int_0^T  \norm{\gamma_s(\theta)}_{\mc{H}_0}^2 dt\right).
\end{equation}
Hence, the cross-entropy loss can be computed as in~\eqref{eq:cross entropy loss}.

\subsection{Proof of Theorem~\ref{theorem:mixture of bridge}}
\begin{proof}
Let us consider that the marginal distributions $\{\mu^{\star}_t\}_{t \in 0, T}$ satisfying the following relation in a weak sense~\citep{belopolskaya2012stochastic}:
\begin{equation}
    \partial_t \int f(\mb{x}_t) \mu^{\star}_t(d\mb{x}_t) =  \int f(\mb{x}_t)  \mc{L}_t^{\star}  \mu^{\star}_t(d\mb{x}_t) = \int \mc{L}_t f(\mb{x}_t) \mu^{\star}_t(d \mb{x}_t)
\end{equation}
where $\mc{L}^{\star}_t$ is adjoint operator of $\mc{L}_t$. Now, let us denote $\mu_{t|0, T}(\mb{x}_t) = \mu^{\star}_t(\mb{x}_t|\mb{x}_0, \mb{x}_T)$ and consider factorizable marginal distribution $\mu^{\star}_t(d\mb{x}_t) = \int_{\Pi} \mu^{\star}_t(\mb{x}_t|\mb{x}_0, \mb{x}_T) \Pi(d\mb{x}_0, d\mb{x}_T)$, where $\mu^{\star}_t(\mb{x}_t|\mb{x}_0, \mb{x}_T)$ satisfying the following relation:
\begin{align*}
    & \partial_t \int_{\mc{H}} f(\mb{x}_t) \mu^{\star}_t(d\mb{x}_t|\mb{x}_0, \mb{x}_T) = \int_{\mc{H}} \mc{L}_{t|0,T}f(\mb{x}_t) \mu^{\star}_t(d\mb{x}_t|\mb{x}_0, \mb{x}_T)   \\
    & = \int_{\mc{H}} \left[ \la \mc{A}\mb{x}_t, D_{\mb{x}} f(\mb{x}_t)\ra_{\mc{H}} +  \la h_{t|0, T}(\mb{x}_t) ,D_{\mb{x}} f(\mb{x}_t)\ra_{\mc{H}} + \frac{1}{2}\text{Tr}\left[ \sigma^2 Q D_{\mb{xx}}f(\mb{x}_t)\right] \right]\mu^{\star}_t(d\mb{x}_t|\mb{x}_0, \mb{x}_T),
\end{align*}
where we denote $h_{t|0, T}(\mb{x}_t) := \sigma^2 Q D_{\mb{x}} \log h(t, \mb{x_t})$.
Now, we can obtain the Kolmogorov operator associated with the diffusion process associated with the marginal distributions $\mu^{\star}_t(d\mb{x}_t)$ as follows
\begin{align}
    &\partial_t \int_{\mc{H}} f(\mb{x}_t) \mu^{\star}_{t}(d\mb{x}_t) = \partial_t 
  \int_{\Pi}  \int_{\mc{H}} f(\mb{x}_t) \mu^{\star}_t(d\mb{x}_t|\mb{x}_0, \mb{x}_T) \Pi(d\mb{x}_0, d\mb{x}_T) \\
 & = \int_{\mc{H}} \left[ \la \mc{A}\mb{x}_t, D_{\mb{x}} f(\mb{x}_t)\ra_{\mc{H}} +  \la h^{\star}_t(\mb{x}_t) ,D_{\mb{x}} f(\mb{x}_t)\ra_{\mc{H}} + \frac{1}{2}\text{Tr}\left[ \sigma^2 Q D_{\mb{xx}}f(\mb{x}_t)\right] \right] \mu^{\star}_t(d\mb{x}_t) \\
 & = \int_{\mc{H}} \mc{L}_t f(\mb{x}_t) \mu^{\star}_{t}(d\mb{x}_t) = \int f(\mb{x}_t)  \mc{L}_t^{\star}  \mu^{\star}_t(d\mb{x}_t),
\end{align}
where we have defined $\int_{\mc{H}} h^{\star}_t(\mb{x}_t) \mu^{\star}_t(d\mb{x}_t) :=  \int_{\Pi} \int_{\mc{H}} h_{t|0, T} \mu^{\star}_t(d\mb{x}_t|\mb{x}_0, \mb{x}_T)\Pi(d\mb{x}_0, d\mb{x}_T)$. It implies that the diffusion dynamics $d\mb{X}_t^{\star}$ associated with the Kolmogorov operator $\mc{L} :=\la \mc{A}\mb{x}_t, D_{\mb{x}} f(\mb{x}_t)\ra_{\mc{H}} +  \la h^{\star
 }_t(\mb{x}_t) ,D_{\mb{x}} f(\mb{x}_t)\ra_{\mc{H}} + \frac{1}{2}\text{Tr}\left[ \sigma^2 Q D_{\mb{xx}}f(\mb{x}_t)\right]$ also associated with Fokker-Planck equation~\citep{bogachev2011uniqueness} $\mc{L}_t^{*}  \mu^{\star}_t(d\mb{x}_t)=0$, meaning $\mb{X}_t^{\star} \sim \mu^{\star}_t$ for all $t \in [0, T]$. Now, for some reference measure $\mu_{\text{ref}}$ 
 where the Radon-Nikodym derivatives $\frac{d\mu^{\star}_t}{d\mu_{\text{ref}}}(\mb{x}_t) = p_t(\mb{x}_t), \frac{d\mu_{t|0, T}}{d\mu_{\text{ref}}}(\mb{x}_t) = p_{t|0, T}(\mb{x}_t)$ exist. Then, under $\mu_{\text{ref}}$, $h^{\star}_t$ can be defined as follows:
\begin{align}
        h^{\star}_t(\mb{x}_t) 
        &=  \frac{\int_{\Pi} h_{t|0, T}(\mb{x}_t) p_{t|0, T}(\mb{x}_t)   \Pi(d\mb{x}_0, d\mb{x}_T)}{p_t(\mb{x}_t)}.
\end{align}
Therefore, the diffusion process associated with marginal distributions $\mu^{\star}_t(\mb{x}_t)$ has following representation:
\begin{align}
    d\mb{X}^{\star}_t = \left[\mc{A}\mb{X}^{\star}_t + \sigma^2 Q \mathbb{E}_{\mb{x}_T \sim \mathbb{P}^h(d\mb{x}_T|\mb{X}^{h}_t)}\left[q_{T-t}(\mb{X}^{\star}_t, \mb{x}_T)\right] \right]dt + \sigma d\mb{W}^Q_t, \quad \mb{X}^{\star}_0 \sim \mu^{\star}_0.
\end{align}
This concludes the proof.
\end{proof}
\subsection{Proof of Theorem~\ref{theorem:exact sampling}}
\begin{proof} Let us assume the initial condition is fixed to deterministic point $\mb{x}_0 \in \mc{H}$ and define a reference measure $\mu_{\text{ref}} = \mc{N}(0, Q_{\infty})$. Since our goal is $\mb{X}^{\alpha}_T$ satisfying the terminal condition $G$, define $\mathbb{P}^{\star}$ as $\frac{d\mathbb{P}^{\star}}{d\mathbb{P}}  = \frac{1}{\mc{Z}}\tilde{G}(\cdot) $, where $\mc{Z} = \mathbb{E}_{\mathbb{P}}[\tilde{G}(\mb{X}_T)]$. Therefore, we have the following relation for marginal distributions $d\mu_T^{\star} = \frac{1}{Z}\tilde{G}(\mb{X}_T) d\mu_T$.
Moreover, since we have defined $G(\mb{X}_T) = -\log \frac{d\pi_T}{d\mu_T}(\mb{X}_T)$ in~\eqref{eq:objective function exact sampling}, then it result $h(t, \mb{x}) = \mathbb{E}_{\mathbb{P}}\left[\frac{d\pi_T}{d\mu_T}(\mb{X}_T) | \mb{X}_t=\mb{x}\right]$ by following Theorem~\ref{Theorem:Hopf-Cole}, we get $\tilde{G}(\mb{X}_T) = h(T, \mb{X}_T)$. Hence, we have for any Borel set $B \in \mc{B}(\mc{H})$,
\begin{align}
    \int_B d\mu^{\star}_T(v) &= \int_B \frac{h(T, \mb{X}_T)}{h(0, \mb{x}_0)} d\mu_T \\
    & = \int_B \frac{\frac{d\pi_T}{d\mu_T}(\mb{X}_T)}{\int_{\mc{H}} \frac{d\pi_T}{d\mu_T}(\mb{X}_T) d\mu_T} d\mu_T = \int_B d\pi_T
\end{align}
Now, given that we have confirmed that the conditioned SDE in~\eqref{eq:conditioned SDE} correspond to the controlled process~\eqref{eq:conditioned SDE} with optimal control, we can establish the result $\mathbb{P}(\mb{X}^{\alpha^{\star}}_T \in B) = \pi(B)$. This concludes the proof.
\end{proof}

\subsection{Experimental Details}\label{sec:experimental details}
\subsubsection{Experiment on 2D-Domain}
\paragraph{Synthetic Experiment. } In a synthetic experiment, we computed the log-probability of $p_0$ and $p_T$ across a uniformly sampled grid of $64^2$ points, with each point $\mb{p}_i$ ranging within $[-7, 7]^2$. For $p_0$, the log-probability was generated using an 8-Gaussian mixture model as specified in~\citep{tong2024improving}. For $p_T$, we employed the following log-density function:
\begin{equation}
\log p_T(\mb{p}) = -\frac{\min(\norm{\mb{p} - 1}^2, \norm{\mb{p} - 3}^2, \norm{\mb{p} - 5}^2)}{0.05}.
\end{equation}

Training was conducted using the Adam optimizer with a learning rate of $1e-3$. The network was trained with a batch size of 24 for a total of 1000 iterations. We set $\sigma=0.2$ in~\eqref{eq:uncontrolled SDE} for this experiment and set 100 discretization steps. We use a single A$6000$ GPU for this experiment. 

The control function was parameterized using a $4$-layer FNO-2D~\citep{li2020fourier}, with the cutoff number of Fourier modes set at $8$ and each convolution layer having a width of $32$.

\begin{figure}[!t]
\includegraphics[width=1.\textwidth,]{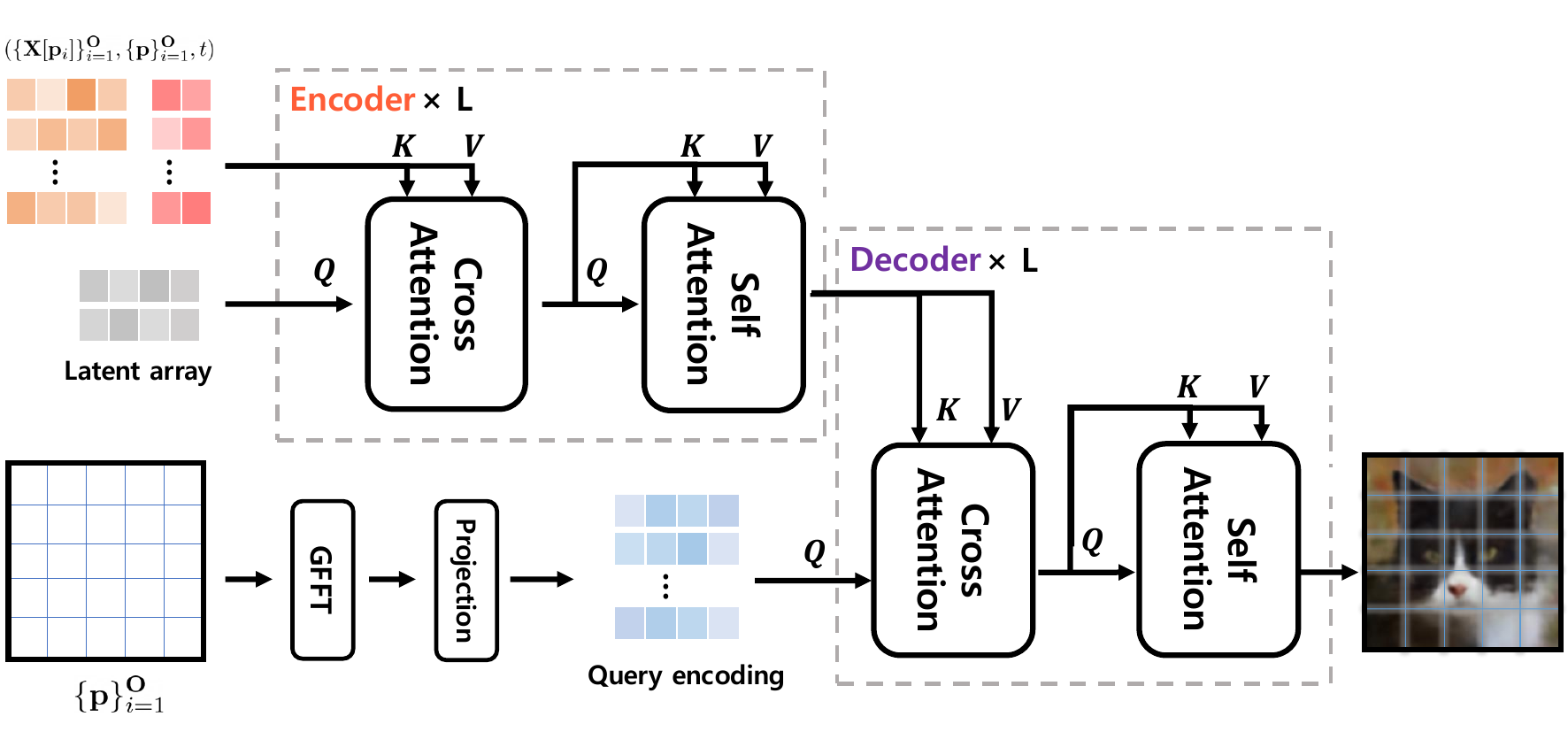}
\caption{Transformer-based network architecture.}\label{figure:architecture}
\end{figure}

\paragraph{Simulation of DBFS.} We follow the simulation scheme introduced in~\citep{rissanen2023generative,lim2023scorebased}. For $\mb{x} = (x_1, x_2) \in \mathbb{R}^2$, we use the discrete cosine transformation (\texttt{DCT}) for projection. Specifically, the eigenvector $\phi^{(k)}(\mb{x})$  and eigenvalue $\lambda^{(k)}$ of the negative Laplacian operator $-\Delta$, which is positive definite and Hermitian, and satisfies the zero Neumann boundary condition, are given by:
\begin{align}
    & -\Delta \phi^{(k)}(\mb{x}) = \lambda^{(k)} \phi^{(k)}(\mb{x}) \\
    & \frac{\partial \phi^{(k)}(\mb{x})}{\partial x_1} = \frac{\partial \phi^{(k)}(\mb{x})}{\partial x_2} = 0,.
\end{align}
It implies that the orthonormality of $\phi^{(k)}(\cdot)$ with respect to the associated inner product, thereby enables computations in (\ref{eq:orthonormal_1}-\ref{eq:orthonormal_4}). Now, considering a rectangular domain with Cartesian coordinates, where pixels in the image are sampled from an underlying regular domain, the eigenbasis is given as a separable cosine basis:
\begin{align}
    & \phi^{(n, m)}(x_1, x_2) \sim \cos{\left(\frac{\pi n x_1}{W}\right)}\cos{\left(\frac{\pi m x_2}{H}\right)} \\
    & \lambda^{(n, m)} = \pi^2 \left(\frac{n^2}{W^2} + \frac{m^2}{H^2} \right).
\end{align}
Then, the negative Laplacian $-\Delta$ can then by represented by an eigen decomposition $-\Delta = \mb{E}\mb{D}\mb{E}^T$, where $\mb{E}^T$ is the projection matrix for the \texttt{DCT} $\ie \tilde{\mb{X}}_t = \mb{E}^T \mb{X}_t =\texttt{DCT}(\mb{X}_t)$, and $\mb{D}$ is a digonal matrix containing the eigenvalues $\lambda^{(n, m)}$. Hence the controlled SDEs~\eqref{eq:controlled SDE} with $Q=\mb{I}$  can be rewritten as
\begin{equation}
    d\tilde{\mb{X}}^{\alpha}_t = \left[ -\mb{D}\tilde{\mb{X}}^{\alpha}_t  + \sigma \tilde{\alpha_t} \right] dt + \sigma d\tilde{\mb{W}}_t, \quad t \in [0, T],
\end{equation}
where $\tilde{\mb{X}}_t = \mb{E}^T \mb{X}_t$,$\tilde{\alpha}_t = \mb{E}^T \alpha_t$, and $\tilde{\mb{W}}_t \stackrel{d}= \mb{W}_t$ for all $t \in [0, T]$.

\paragraph{Sampling Algorithm. }
The sampling algorithm for DBFS in bridge matching, discussed in Section~\ref{sec:bridge matching experiment}, is provided in detail in Algorithm~\ref{algorithm:DBFS sampling}.

 \begin{algorithm}[t!]
    \caption{DBFS sampling for Bridge Matching}
    \begin{algorithmic}\label{algorithm:DBFS sampling}
    \STATE \textbf{Input:} Linear operator $\mc{A}$, trace-class operator $Q$, learned control $\alpha$, initial distribution $\pi_0$, discrete and inverse discrete cosine transforms $\texttt{DCT}, \texttt{iDCT}$, target resolution grid $\mb{T}^2$, trained resolution grid $\mb{O}^2$.
    \\\hrulefill
    \STATE Sample the initial condition $\mb{X}^{\alpha}_0 = \mb{x}_0 \sim \pi_0$
        \IF{$\mb{T}^2 \neq \mb{O}^2$}
        \STATE Upsample $\mb{X}^{\alpha}_0 = \{\mb{X}^{\alpha}_0[\mb{p}_i]\}_{i=1}^{\mb{O}^2}$ to $\{\mb{X}^{\alpha}_0[\mb{p}_i]\}_{i=1}^{\mb{T}^2}$
        \ENDIF
      \FOR{$t=0, \cdots, T$}
        \STATE Estimate the control $\alpha^{\star}_t = \alpha(t, \mb{X}^{\alpha}_t;\theta^{\star})$
        \STATE Sample Gaussian noise $\xi \sim \mc{N}(0, \mb{I})$
        \STATE Discrete cosine transform the initial condition, estimated control, Gaussian noise
        \STATE $\{\tilde{\mb{X}}_t^{(k)}\}_{k=1}^{\mb{T}^2} = \texttt{DCT}(\mb{X}^{\alpha}_t), \quad \{\tilde{\alpha}^{(k)}_t\}_{k=1}^{\mb{T}^2} = \texttt{DCT}(\alpha^{\star}_t), \quad \{\tilde{\xi}^{(k)}\}_{k=1}^{\mb{T}^2} = \texttt{DCT}(\xi)$
        \STATE \textbf{for} $k=1, \cdots, ^{\mb{T}^2}$ \textbf{do in parallel}
        \STATE\hspace{+4mm} $\tilde{\mb{X}}_{t+\Delta_t}^{(k)}= \left[- a_k  \tilde{\mb{X}}_{t}^{(k)} + \sigma \sqrt{\lambda_k} \tilde{\alpha}^{(k)}_t\right] \Delta_t + \sigma \sqrt{\lambda_k \Delta_t}  \tilde{\xi}^{(k)} $ 
        \STATE \textbf{end for}
        \STATE Inverse discrete cosine transform, $\mb{X}^{\alpha}_{t+\Delta_t} = \texttt{iDCT}(\{\tilde{\mb{X}}_{t+\Delta_t}^{(k)}\}_{k=1}^{\mb{T}^2})$
        \STATE $\mb{X}^{\alpha}_t = \mb{X}^{\alpha}_{t+\Delta_t}$
      \ENDFOR
    \STATE \textbf{Output:} $\mb{X}^{\alpha}_T \sim \pi_T$
    \end{algorithmic}
  \end{algorithm}

\paragraph{Unpaired dataset Transfer Experiment. } 

For the experiment involving transfer between dataset, we followed the setup described in~\citep{peluchetti2023diffusion}. 

\textbf{(A)} For the EMNIST and MNIST datasets, the initial distribution, $\pi_0$, was set as the MNIST dataset, while for the terminal distribution, $\pi_T$, we used the EMNIST dataset with the first five lowercase and uppercase characters, as outlined by~\citep{de2021diffusion}. The iterative training scheme proposed by~\citep{peluchetti2023diffusion}. was adopted, which involved two neural networks, $\alpha_t(t, \mathbf{x}, \theta)$ and $\alpha_t(t, \mathbf{x}, \psi)$, each with around 20.7 million parameters. These networks approximate mixtures of bridges for the forward ($\pi_0 \to \pi_T$) and reverse ($\pi_T \to \pi_0$) directions, respectively. The SDE was discretized into 30 steps without a noise schedule. The DBFS model was trained for 60 iterations, with each iteration comprising 5,000 gradient updates. Additionally, 2,560 cached images were used for training each network, updated every 250 steps. We used the Adam optimizer with a learning rate of 1e-4 and a batch size of 128, with the EMA rate set to 0.999. The complete DBFS training for the MNIST experiment took approximately 15 hours on a single A6000 GPU.

\textbf{(B)} For the AFHQ dataset~\cite{choi2020starganv2}\footnote{\url{https://github.com/clovaai/stargan-v2}, under CC BY-NC 4.0 License. }, we evaluated DBFS between the \texttt{wild} and \texttt{cat} classes on a $64^2$ grid, with each class containing approximately 5,000 samples. The control networks each contained about 120.3 million parameters. We discretized the SDE into 100 steps without using a noise schedule. The Adam optimizer was used with a learning rate of 1e-4, and the EMA rate was set to 0.999. We used a batch size of 64 and trained for 20 iterations, with a total of 400,000 gradient steps. Additionally, 2,560 cached images were used for training each network, updated every 1,000 steps. The training took approximately 8 days, using 8 A6000 GPUs for the experiment.

\begin{table}[h!]
\caption{Network Hyper-parameters}
\label{table:training_hyperparameters}
\centering
 \resizebox{\textwidth}{!}{
\begin{tabular}{c|ccccccc}
\toprule
\textbf{Dataset} & \textbf{Latent dim} & \textbf{Position dim} &\textbf{$\#$heads} & \textbf{$\#$enc blocks} & \textbf{$\#$dec blocks} & \textbf{$\#$self attn. per block} & \textbf{$\#$ of parameters} \\ 
\midrule
MNIST    & 256 & 256 &4 & 6 & 2 & 1 & 20.7M \\
AFHQ  & 512 & 512 &4 & 6 & 2 & 2 & 120.3M\\
\bottomrule
\end{tabular}}
\end{table}

For each control network $\alpha$, we used a transformer network architecture inspired by PerceiverIO ~\citep{jaegle2022perceiver} from public repository\footnote{\url{https://github.com/lucidrains/perceiver-pytorch}, under MIT License.} to model functional representation. This transformer architecture was chosen for its efficiency in evaluating field representations over a large number of grid points and its ability to map arbitrary input arrays to arbitrary output arrays in a domain-agnostic way. Additionally, we adopted the attention mechanism proposed in~\citep{peebles2023scalable}. The decoding cross-attention mechanism was also modified, inspired by~\citep{li2022transformer}, with the output query set to the target grid points, which were transformed as Gaussian Fourier features~\citep{NEURIPS2020_55053683}.

In practice, we start by passing the evaluations \(\mathbf{X}[\mathbf{p}]\) through a single MLP layer and combine it with the Fourier feature embeddings of the grid points \(\mathbf{p}\). This combined representation is then input into the encoder blocks as keys and values, where QKV attention is first applied with the latent array as the query, followed by self-attention for each block. We implement the time-modulated attention block proposed by~\citep{peebles2023scalable}, embedding the time \(t\) into latent space. Next, the target grid points, represented as Fourier feature embeddings of the grid points \(\mathbf{p}\), are fed into the decoder blocks as queries. Here, QKV attention is applied with the encoded latent array serving as keys and values, followed by self-attention in each decoder block. Finally, the decoded array is mapped to grey scale or RBF channels using a single linear layer. Conceptual illustrations of the proposed network are presented in Figure~\ref{figure:architecture}, and detailed network hyperparameters are listed in Table~\ref{table:training_hyperparameters}.

\begin{figure}[!h]
  \centering
    \centering
    \subfigure[$32^2$ (observed resolution)]{
        \includegraphics[width=0.315\textwidth]{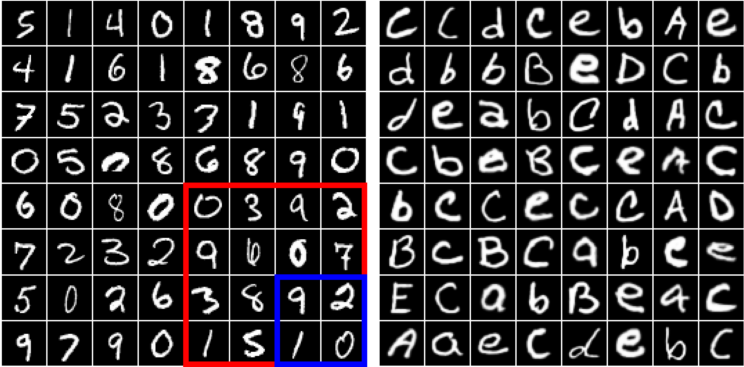}
    }
    \subfigure[$64^2$ (unseen resolution)]{
        \includegraphics[width=0.315\textwidth]{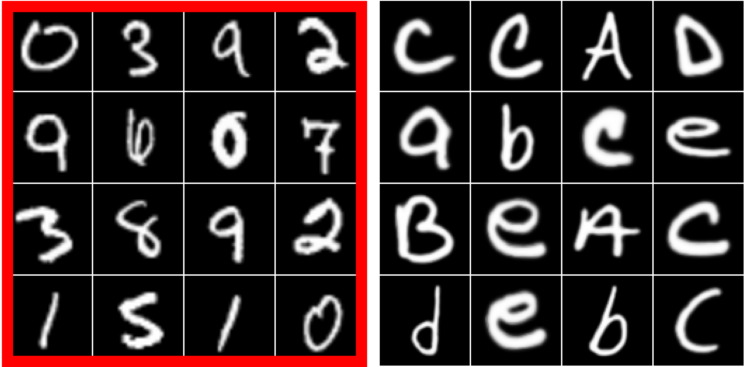}
    }
    \subfigure[$128^2$ (unseen resolution)]{
        \includegraphics[width=0.315\textwidth]{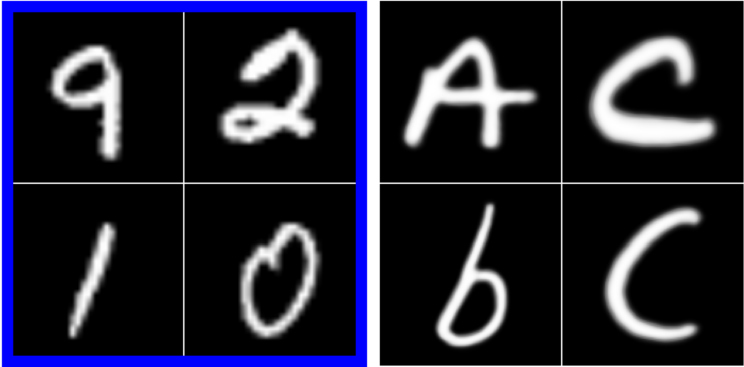}
    }
    \subfigure[$64^2$ (observed resolution)]{
        \includegraphics[width=0.475\textwidth]{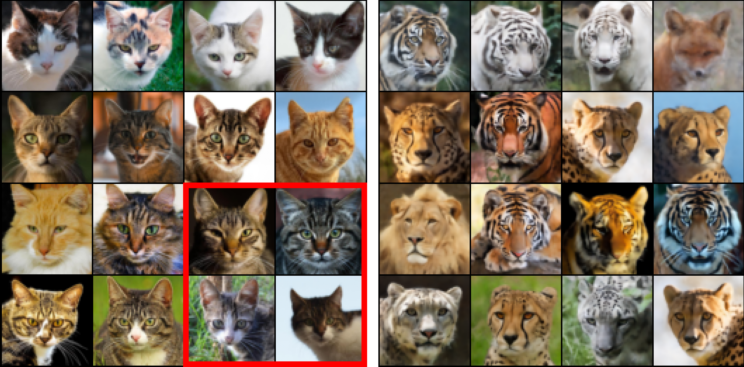}
    }
    \subfigure[$128^2$ (unseen resolution)]{
        \includegraphics[width=0.475\textwidth]{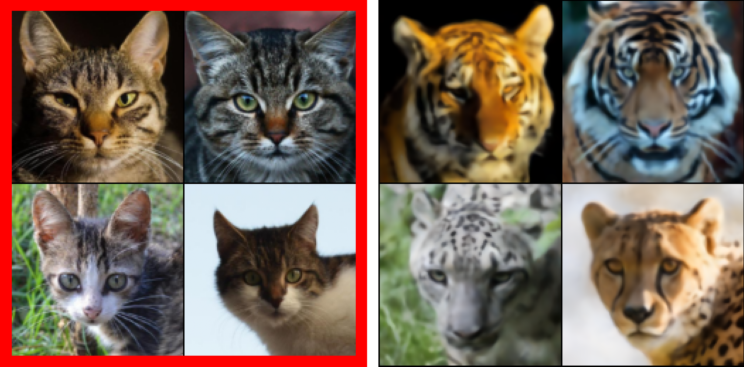}
    }
    \caption{Results on Unpaired image transfer task. \textbf{(Up)} MNIST $\to$ EMNIST \textbf{(Down)} AFHQ-64 Cat $\to$ Wild. (Left) Real data and (Right) generated samples from our model. For generation at unseen resolutions, the images within the \textcolor{red}{red} and \textcolor{blue}{blue} boxed initial conditions were upsampled (using bi-linear transformation) from the observed resolution ($32^2$) for EMNIST and ($64^2$) for AFHQ-64 Cat, respectively.}\label{figure:2d_generatior_afhq_}
\vspace{-5mm}
\end{figure}

\begin{figure}[!h]
  \centering
    \centering
    \subfigure[$64^2$ (observed resolution)]{
        \includegraphics[width=0.475\textwidth]{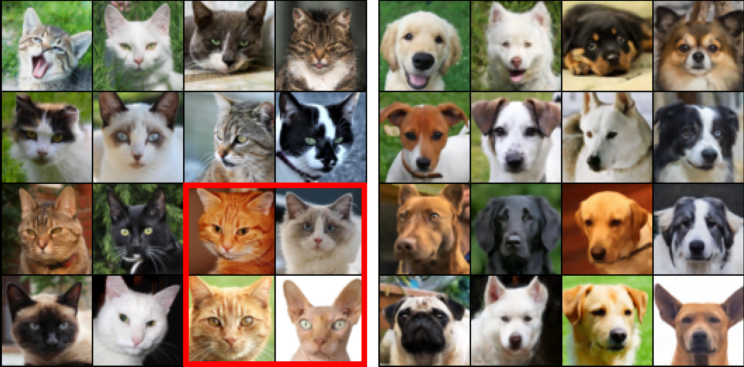}
    }
    \subfigure[$128^2$ (unseen resolution)]{
        \includegraphics[width=0.475\textwidth]{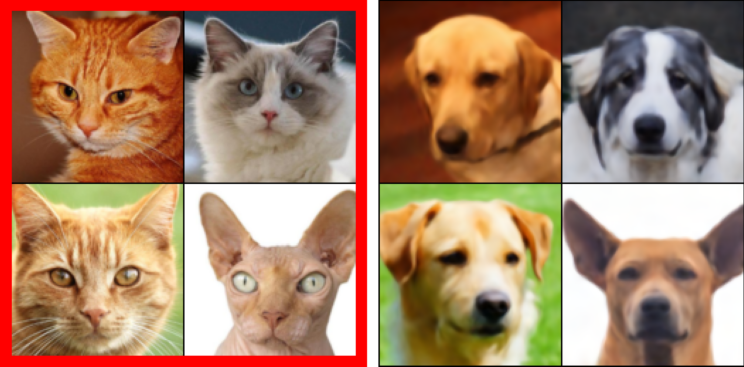}
    }
    \caption{Results on Unpaired image transfer task. AFHQ-64 Cat $\to$ Dog.}
\vspace{-5mm}
\end{figure}\label{figure:2d_generatior_afhq__}

\subsubsection{Experiment on 1D-Domain}\label{sec:modeling termpoal function_appx}

For the experiments on 1D-domain, we consistently set $\mc{A} := -\frac{1}{2}$ and $Q :=  \exp(-\norm{\mb{p} - \mb{p}'}^2/\gamma)$ and set $\gamma=0.2$ and $\gamma=0.02$ for GP regression and imputaion, respectively. The choice of $\gamma$ is hyper-parameter, we search over the set $[0,01, 0.1, 0.2, 0.5, 1.0]$ and find optimal value for GP regression. For imputation, we set $\gamma=0.02$ by following~\citep{bilovs2023modeling}.
\paragraph{Terminal Cost Computation} For all experiments conducted in the 1D domain, we implemented a parameterized initial condition which takes as input the observed sequences $\mb{X}^{\theta}_0 = \mb{x}^{\theta}(\mb{Y}[\mb{O}])$. We employed the energy functional $\mc{U}$ as the Gaussian negative log-likelihood (NLL). For each evaluation point on $\mb{T}$, $\mc{U}$ can be computed as follows:
\begin{equation}
\mc{U}(\mb{X}_T[\mb{T}]) = -\log \mc{N}(\mb{X}_T[T] | \mb{Y}, \sigma_{\theta}) = \sum_{i=1}^{|\mb{T}|} \frac{\norm{\mb{X}_T[\mb{p}_i] - \mb{Y}[\mb{p}_i]}^2}{2 \sigma^2_{\theta}},
\end{equation}
where $\sigma^2_{\theta}$ is set as an output from the neural network in accordance with~\citep{lee2020bootstrapping} for GP regression, and fixed as $\sigma^2_{\theta}=0.5$ for imputation, to establish a loss function analogous to~\citep{bilovs2023modeling}. Additionally, we specified a learnable prior distribution $\mu_{\text{prior}} = \mc{N}(e^{-\frac{T}{2}\mb{x}^{\theta}}, Q_T)$. Consequently, the terminal cost retains only the NLL term, simplifying the computation.
\paragraph{GP regression} For the GP regression, we borrow the experiment setting from~\citep{lee2020bootstrapping}. The model trained with curves generated from GP with RBF kernel and tested in various settings such as data generated from GP with other type of kernel (Matérn 5/2, Periodic). We generated $\mb{p}$ uniformly on the interval $[-2, 2]$ and generated $\mb{Y}[\mb{p}]$ from using RBF kernel $\kappa(\mb{p}_i, \mb{p}_j) = l^2_1 \exp(-\norm{\mb{p}_i - \mb{p}_j}^2/l^2_2)$ with $l_1 \sim \text{Unif}(0.1, 1.0)$ and $l_2 \sim \text{Unif}(0.1, 0.6)$ and the white noise $\xi \sim \mc{N}(0, 1e-2)$ is added. We set $|\mb{O}|$ randomly from $\text{Unif}(3, 37)$ and $|\mb{T}|$ from $\text{Unif}(3, 50 - |\mb{O}|)$. For the other test data, we define $\kappa(\mb{p}_i, \mb{p}_j) = l^2_1 (1 + \sqrt{5}d/l_2 + 5d^2/(3l^2_2))exp(-\sqrt{5}d/l_2)$ with $d = (\norm{\mb{p}_i - \mb{p}_j})$, $l_1 \sim \text{Unif}(0.1, 1.0)$ and $l_2 \sim \text{Unif}(0.1, 0.6)$ for Matérn kernel and $\kappa(\mb{p}_i, \mb{p}_j) = l^2_1 \exp(-2 \sin^2(\pi\norm{\mb{p}_i - \mb{p}_j}^2/p)/l_2)$ with $l_1 \sim \text{Unif}(0.1, 1.0)$, $l_2 \sim \text{Unif}(0.1, 0.6)$ and $p \sim \text{Unif}(0.1, 0.5)$ for periodic kernel.

We set batch size of $100$ and trained for $100,000$ iterations. The Adam optimizer is used, the initial learning rate 5e-4 decayed with cosine annealing scheme. For testing, we evaluated the trained models using 3,000 batches, each consisting of 16 samples. We report the mean and standard deviation for five runs. We a single A$6000$ GPU for this experiment.

The architectures for NP~\citep{garnelo2018neural} and CNP~\citep{pmlr-v80-garnelo18a}, we use the same setting as described in~\citep{lee2020bootstrapping}\footnote{\url{https://github.com/juho-lee/bnp}, under MIT License.}. In our approach, we adapted the CNP architecture to incorporate a parameterized initial condition $\mb{x}^{\theta}$ (add one linear layer to output $\mb{x}^{\theta}$). The total number of parameters is similar across all three models. 

\paragraph{Physionet Imputation} The Physionet~\citep{Physionet} contains $4000$ clinical time series with $35$ variables for $48$ hours from intensive care unit. Following~\citep{CSDI}, we preprocess this datasets to hourly time-series which have $48$ time steps. Since the dataset already contains around $80\%$ of missingness, we randomly choose $10/50/90\%$ of observed values as test data.

In the imputation experiments, we employed the same experimental setup as~\citep{bilovs2023modeling}\footnote{\url{https://github.com/morganstanley/MSML/tree/main/papers/Stochastic_Process_Diffusion}, under Apache $2.0$ License.} which is slight modification of original CSDI model. In this setup, the Gaussian noise of the DDPM model was replaced with GP noise employing an RBF kernel. Additionally, we adjusted the measurement approach to record the actual elapsed time rather than rounding to the nearest hour, better capturing the inherent timing characteristics of the Physionet dataset. We use a single A$6000$ GPU for this experiment.

\newpage

\section*{NeurIPS Paper Checklist}
\begin{enumerate}
\item {\bf Claims}
    \item[] Question: Do the main claims made in the abstract and introduction accurately reflect the paper's contributions and scope?
    \item[] Answer: \answerYes{} 
    \item[] Justification: Yes, the main claims made in the abstract and introduction are consistent with the scope of the paper.

\item {\bf Limitations}
    \item[] Question: Does the paper discuss the limitations of the work performed by the authors?
    \item[] Answer: \answerYes{} 
    \item[] Justification: The limitations are discussed in the conclusion section.

\item {\bf Theory Assumptions and Proofs}
    \item[] Question: For each theoretical result, does the paper provide the full set of assumptions and a complete (and correct) proof?
    \item[] Answer: \answerYes{} 
    \item[] Justification: Yes, we believe our proof and assumptions are both sufficient and correct.

    \item {\bf Experimental Result Reproducibility}
    \item[] Question: Does the paper fully disclose all the information needed to reproduce the main experimental results of the paper to the extent that it affects the main claims and/or conclusions of the paper (regardless of whether the code and data are provided or not)?
    \item[] Answer: \answerYes{} 
    \item[] Justification: Yes, we believe we have provided all the necessary information to reproduce our results in Appendix~\ref{sec:experimental details}.

\item {\bf Open access to data and code}
    \item[] Question: Does the paper provide open access to the data and code, with sufficient instructions to faithfully reproduce the main experimental results, as described in supplemental material?
    \item[] Answer: \answerYes{} 
    \item[] Justification: We have provided the codebase used for our experiments, particularly for unpaired image transfer.

\item {\bf Experimental Setting/Details}
    \item[] Question: Does the paper specify all the training and test details (e.g., data splits, hyperparameters, how they were chosen, type of optimizer, etc.) necessary to understand the results?
    \item[] Answer: \answerYes{} 
    \item[] Justification: Yes, we have provided the experimental details.
    
\item {\bf Experiment Statistical Significance}
    \item[] Question: Does the paper report error bars suitably and correctly defined or other appropriate information about the statistical significance of the experiments?
    \item[] Answer: \answerYes{} 
    \item[] Justification: Yes, when available, we conducted the experiments five times and reported the mean and standard deviations.

\item {\bf Experiments Compute Resources}
    \item[] Question: For each experiment, does the paper provide sufficient information on the computer resources (type of compute workers, memory, time of execution) needed to reproduce the experiments?
    \item[] Answer: \answerYes{} 
    \item[] Justification: Yes, we provided the required resources in the experimental details section.
    
\item {\bf Code Of Ethics}
    \item[] Question: Does the research conducted in the paper conform, in every respect, with the NeurIPS Code of Ethics \url{https://neurips.cc/public/EthicsGuidelines}?
    \item[] Answer: \answerYes{} 
    \item[] Justification: We support the NeurIPS Code of Ethics.

\item {\bf Broader Impacts}
    \item[] Question: Does the paper discuss both potential positive societal impacts and negative societal impacts of the work performed?
    \item[] Answer: \answerYes{} 
    \item[] Justification: The broader impacts are discussed in the conclusion section.

\item {\bf Safeguards}
    \item[] Question: Does the paper describe safeguards that have been put in place for responsible release of data or models that have a high risk for misuse (e.g., pretrained language models, image generators, or scraped datasets)?
    \item[] Answer: \answerYes{} 
    \item[] Justification: We adhere to ethical standards for using our model in generative AI.
    
\item {\bf Licenses for existing assets}
    \item[] Question: Are the creators or original owners of assets (e.g., code, data, models), used in the paper, properly credited and are the license and terms of use explicitly mentioned and properly respected?
    \item[] Answer: \answerYes{} 
    \item[] Justification: Yes, the license and terms of use are noted.

\item {\bf New Assets}
    \item[] Question: Are new assets introduced in the paper well documented and is the documentation provided alongside the assets?
    \item[] Answer: \answerNA{} 
    \item[] Justification: The paper does not release new assets.

\item {\bf Crowdsourcing and Research with Human Subjects}
    \item[] Question: For crowdsourcing experiments and research with human subjects, does the paper include the full text of instructions given to participants and screenshots, if applicable, as well as details about compensation (if any)? 
    \item[] Answer: \answerNA{} 
    \item[] Justification: We do not involve crowdsourcing or research with human subjects.

\item {\bf Institutional Review Board (IRB) Approvals or Equivalent for Research with Human Subjects}
    \item[] Question: Does the paper describe potential risks incurred by study participants, whether such risks were disclosed to the subjects, and whether Institutional Review Board (IRB) approvals (or an equivalent approval/review based on the requirements of your country or institution) were obtained?
    \item[] Answer: \answerNA{} 
    \item[] Justification: We do not involve crowdsourcing or research with human subjects.
\end{enumerate}

\end{document}